\newcommand{\R}{\mathbb{R}}
\newcommand{\E}{\mathbb{E}}
 \newtheorem{lem}{{Lemma}}
 \newtheorem{assumption}{Assumption}
 \newtheorem{cor}{{Corollary}}
\newtheorem{exmp}{Example}
 \newtheorem{rem}{Remark}
 \newtheorem{defi}{{Definition}}
\newcommand{\hw}{\hat{\omega}}
\newcommand{\hu}{\hat{u}}
\newcommand{\hlambda}{\hat{\lambda}}
\newcommand{\ustar}{u^{\ast}}
\newcommand{\lambdastar}{\lambda^{\ast}}
\title{Effective Bilevel Optimization via Minimax Reformulation}
\author{
 Xiaoyu Wang\thanks{Equal contribution.}\protect\phantom{\footnotesize 1} \\
	HKUST \\
	\texttt{maxywang@ust.hk} \\
 	\And
 Rui Pan\footnotemark[1]\protect\phantom{\footnotesize 1}  \\
	HKUST\\
	\texttt{rpan@connect.ust.hk} \\
	\AND
	Renjie Pi \\
	HKUST \\
	\texttt{rpi@connect.ust.hk} \\
     \And
	Jipeng Zhang \\
	HKUST \\
	\texttt{jzhanggr@connect.ust.hk} \\
}
\date{}
\begin{document}
\maketitle

\begin{abstract}
Bilevel optimization has found successful applications in various machine learning problems, including hyper-parameter optimization, data cleaning, and meta-learning. However, its huge computational cost presents a significant challenge for its utilization in large-scale problems. This challenge arises due to the nested structure of the bilevel formulation, where each hyper-gradient computation necessitates a costly inner optimization procedure.
To address this issue, we propose a reformulation of bilevel optimization as a minimax problem, effectively decoupling the outer-inner dependency. Under mild conditions, we show these two problems are equivalent. Furthermore, we introduce a multi-stage gradient descent and ascent (GDA) algorithm to solve the resulting minimax problem with convergence guarantees.
Extensive experimental results demonstrate that our method outperforms state-of-the-art bilevel methods while significantly reducing the computational cost.

\end{abstract}


\section{Introduction}


Bilevel optimization (BLO) has recently garnered considerable interest in research for its effectiveness in a variety of machine learning applications. Problems with hierarchical structures, such as hyperparameter optimization \citep{pmlr-v22-domke12,pmlr-v37-maclaurin15,iterative_der,lorraine2020optimizing}, meta-learning \citep{andrychowicz2016learning,franceschi2018bilevel,rajeswaran2019meta}, and reinforcement learning \citep{konda1999actor,hong2020two}, can all be represented as bilevel problems, making them well-suited to bilevel optimization methods. Formally, a bilevel optimization problem is defined as follows:
\begin{align}\label{P:bilevel}
    \min_{\lambda \in \Lambda} \quad & \quad  \mathcal{L}(\lambda) = L_{1}(u^{\ast}(\lambda), \lambda) \notag \\
    \text{s.t.} \quad & \quad u^{\ast}(\lambda) = \arg\min_{u} L_2(u, \lambda)
\end{align}
Here, $L_1(\cdot, \cdot): \R^{d}\times \R^{p} \rightarrow \R$ is the outer objective function, $L_2(\cdot, \cdot): \R^{d}\times \R^{p} \rightarrow \R$ is the inner objective function, and $\lambda \in \Lambda$ is the outer variable that is learned. For instance, in hyperparameter optimization, $L_1$ and $L_2$ correspond to the validation and training losses, respectively, while $u$ represents model parameters trained within $L_2$. The variable $\lambda$ represents the hyperparameters to be tuned, such as weight decay. Traditionally, $\lambda$ (e.g., weight decay) has been a single scalar value determined manually or through grid search~\citep{bergstra2012random}. Bilevel optimization, however, enables the automatic tuning of $\lambda$, which is especially useful when $\lambda$ is high-dimensional, as in per-parameter weight decay~\citep{grazzi2020iteration, luketina2016scalable, mackayself}, data cleaning~\citep{ren2018learning, shu2019meta, lorraine2020optimizing, yongholistic, gaoself}, and neural architecture search~\citep{liudarts, caiproxylessnas, xu2019pc, white2021bananas, shi2020bridging}.


Despite its flexibility and broad applicability, bilevel optimization remains underutilized in large-scale problems, such as foundation model training~\citep{radford2019language,touvron2023llama}. The main scalability challenges arise from the fundamental structure of bilevel optimization, which presents the following issues:

\begin{enumerate}
\item Bilevel optimization involves an outer-inner dependency, often leading to high computational costs.
\item Most bilevel optimization methods rely on second-order information, such as Hessians, which require significant memory.
\item The hierarchical nature of bilevel optimization complicates the theoretical analysis, particularly for stochastic extensions.

\end{enumerate}




Numerous approaches have been suggested to address these issues~\citep{shaban2019truncated, lorraine2020optimizing, mehra2021penalty}. However, none have fully resolved these challenges or produced a framework capable of handling very large-scale bilevel optimization problems, such as neural networks with billions of parameters and hyperparameters. This paper introduces a novel approach to completely address these limitations, with a simple yet effective main concept:

\quad \quad \quad \textit{Interpret the requirement for an inner optimum as an added constraint with a large penalty.}

This strategy naturally removes the outer-inner dependency from the original bilevel problem, reformulating it as an equivalent minimax optimization problem. We present a gradient-based MinimaxOPT algorithm for solving this minimax problem, which retains the same time and space complexity as direct training with gradient descent, and can be seamlessly extended to stochastic settings. Moreover, popular optimizers like SGD with momentum or Adam~\citep{Adam} can be integrated into MinimaxOPT without issue. To our knowledge, this is the first approach that scales bilevel optimization to extremely large problem sizes while maintaining compatibility with state-of-the-art optimizers.

\deleted{\added{In the framework of bilevel optimization \eqref{P:bilevel}} \replaced{the}{The} objective functions $L_1, L_2$ are \added{normally assumed to be} continuously differentiable and the constraint $\Lambda$ \replaced{being}{is} a closed convex set of $\R^p$. \added{In addition,} \replaced{suppose}{Suppose} that for each $\lambda$, the inner problem is solvable and admits a unique solution.}

\deleted{The bilevel problem is difficult to solve due to its nested nature that the dependence of the outer problem on $\lambda$ is also induced from the minimizer of the inner problem.}

\deleted{Gradient-based bilevel optimization is a widely used category of BLO in large-scale scenarios. The outer-level gradient-based method requires computing the hyper-gradient via the chain rule to update the outer variable:}

\added{\textbf{Related work.} The high computational cost of most gradient-based bilevel optimization methods arises from the well-known problem of outer-inner dependency. In particular, the outer-level optimization process necessitates the calculation of hyper-gradients}
\begin{align}\label{equ:hyper:grad}
\frac{\partial L_1}{\partial \lambda} = \frac{\partial L_1(u^{\ast}(\lambda), \lambda)}{\partial \lambda}  + \frac{\partial L_1(u^{\ast}(\lambda), \lambda)}{\partial u^{\ast}} \frac{\partial u^{\ast}(\lambda)}{\partial \lambda}.
\end{align}
\deleted{and the indirect gradient $ \frac{\partial u^{\ast}(\lambda)}{\partial \lambda}$ is not easy to access.} If the inner function is smooth, the derivative $\frac{\partial u^{\ast}(\lambda)}{\partial \lambda}$ \replaced{can be}{is} derived by implicit function theorem \replaced{$\frac{\partial L_2}{\partial u} (u^{\ast}(\lambda), \lambda)=0$}{$\frac{\partial L_2(u^{\ast}(\lambda), \lambda)}{\partial \lambda}=0$} \replaced{via}{so that}
\begin{align}\label{equ:implicit:grad}
\frac{\partial^2 L_2(u^{\ast}(\lambda), \lambda)}{\partial^2 u} \frac{\partial u^{\ast}(\lambda)}{\partial \lambda} + \frac{\partial^2 L_2(u^{\ast}(\lambda), \lambda)}{\partial \lambda \partial u}= 0    
\end{align}
\deleted{This allows to compute the hyper-gradient $\frac{\partial L_1}{\partial \lambda}$ by plugging the solution $\frac{\partial u^{\ast}}{\partial \lambda}$ of the linear system (\ref{equ:implicit:grad}) into (\ref{equ:hyper:grad}). However, the formula of $u^{\ast}(\lambda)$ is unknown or expensive to compute exactly.}

\deleted{One direct way to solve the linear system is to invert Hessian $\frac{\partial^2 L_2(u^{\ast}(\lambda), \lambda)}{\partial^2 u} $ which needs $\mathcal{O}(d^3)$ operations and is intractable for large-scale tasks.}

There are two significant challenges with this paradigm: 1) obtaining the minimizer $u^{\ast}$ of the inner problem, or at least an approximator, is necessary for calculating $\frac{\partial L_1(u^{\ast}, \lambda)}{\partial \lambda}$, $\frac{\partial L_1(u^{\ast}, \lambda)}{\partial u}$, $\frac{\partial^2 L_2(u^{\ast}, \lambda)}{\partial^2 u}$, and $\frac{\partial^2 L_2(u^{\ast}, \lambda)}{\partial \lambda \partial u}$; 2) computing $\frac{\partial u^{\ast} (\lambda)}{\partial \lambda}$ in Equation~\eqref{equ:implicit:grad} involves the Jacobian and Hessian of the inner function and may even require the Hessian inverse $\left(\frac{\partial^2 L_2(u^{\ast}, \lambda)}{\partial^2 u}\right)^{-1}$ if Equation~\eqref{equ:implicit:grad} is solved straightforwardly.

\deleted{The computational complexity of hyper-gradient for the bilevel models is inevitably very high without more careful design, especially in large-scale and high-dimensional practical applications.} 

To reduce the computational cost of the aforementioned approach, two types of methods have been proposed in past literature: 1) approximate implicit differentiable (AID) methods~\citep{pmlr-v22-domke12,pmlr-v48-pedregosa16,grazzi2020iteration,lorraine2020optimizing} and 2) iterative differentiable  (ITD) methods~\citep{pmlr-v22-domke12,pmlr-v37-maclaurin15,iterative_der,shaban2019truncated,grazzi2020iteration}.

In approximate implicit differentiable methods, $u^{\ast}(\lambda)$ is typically approximated by applying gradient-based iterative methods to optimize the inner problem. For instance, \citet{pmlr-v48-pedregosa16} proposed a framework that solves the inner problem and the linear system (\ref{equ:implicit:grad}) with some tolerances to balance the speed and accuracy, managing to optimize the hyper-parameter in the order of one thousand. Additionally, \citet{grazzi2020iteration} explored conjugate gradient (CG) and fixed-point methods to solve the linear system (\ref{equ:implicit:grad}) in conjunction with the AID framework. \citet{pmlr-v22-domke12} also utilized CG during the optimization process and demonstrated that the implicit CG method may fail with a loose tolerance threshold. Finally, \citet{lorraine2020optimizing} employed the Neumann series to approximate the inverse-Hessian, where Hessian- and Jacobian- vector products were used for hyper-gradient computation.

\deleted{The major drawback of AID methods is that they require a delicate tuning of tolerance to tradeoff the approximation accuracy and computation expense. If the tolerances of the AID methods are too loose, the resulting hyper-gradient can be very inaccurate. Conversely, the computation expense will be very high if the tolerances are too small. In practice, tolerance needs to be set appropriately to avoid enormous expenses  
without creating a useless hyper-gradient.}

In iterative differentiable methods, \citet{bengio2000gradient} applied reverse-mode differentiation (RMD), also known as backpropagation in the deep learning community, to hyper-parameter optimization.  \citet{pmlr-v22-domke12} considered the iterative algorithms to solve the inner problem for a given number of iterations and then sequentially computed the hyper-gradient using the back-optimization method. 
One well-known drawback of this conventional reverse-mode differentiation is that it stores the entire trajectory of the inner variables in memory, which becomes unmanageable for problems with many inner training iterations and is almost impossible to scale. To address this issue, \citet{pmlr-v37-maclaurin15} computes the hyper-gradient by  
reversing the inner updates of the stochastic gradient with momentum.  During the reverse pass, the inner updates are computed on the fly rather than stored in memory to reduce the storage of RMD. \citet{franceschi2018bilevel} studied the forward-mode and reverse-mode differentiation to compute the hyper-gradient of any iterative differentiable learning dynamics. Finally, \citet{shaban2019truncated} performed truncated back-propagation through the iterative optimization procedure, which utilizes the last $K_0$ intermediate variables rather than the entire trajectory to reduce memory cost. However, this approach sacrifices the accuracy of the hyper-gradient and leads to performance degradation.

Recently, stochastic bilevel optimization has also gained popularity in large-scale machine learning applications~\citep{ghadimi2018approximation,hong2020two,ji2021bilevel,chen2022single,khanduri2021near}. In this setting, the inner function $L_2$ and outer function $L_1$ either take the form of an expectation with respect to a random variable or adopt the finite sum form over a given dataset $\mathcal{D}$:
\begin{align}\label{P:bilevel:S}
    \min_{\lambda \in \Lambda} & \quad \E_{\xi}[L_1(u^{\ast}(\lambda), \lambda; \xi)] \notag\\
    \text{s.t.} & \quad u^{\ast}(\lambda) = \arg\min_{u} \E_{\zeta}[L_2(u, \lambda; \zeta)]
\end{align}
In this line of work, \citet{ghadimi2018approximation} proposed a bilevel stochastic approximation (BSA) algorithm, which employs stochastic gradient descent for the inner problem and computes the outer hyper-gradient (\ref{equ:hyper:grad}) by calling mutually independent samples to estimate gradient and Hessian. However, it primarily focuses on the theoretical aspects of BSA and lacks empirical results. Subsequently, \citet{ji2021bilevel} proposed stocBiO, a stochastic bilevel algorithm that calculates the mini-batch hyper-gradient estimator via the Neumann series and utilizes Jacobian- and Hessian-vector products. Some recent works propose the fully first-order method~\citep{pmlr-v202-kwon23c,chen2023near} which treats the inner-level problem as the penalty term, and the zeroth-order method~\citep{NEURIPS2022_1a82986c,yang2023achieving,aghasi2024fully} which estimates the hyper-gradient by finite difference.

Due to the nested structure of the bilevel problems, until now, few AID or ITD type methods can be readily extended to stochastic settings. Furthermore, this outer-inner dependency inevitably makes the computation of hyper-gradient reliant on the gradients of inner solutions. As all aforementioned AID/ITD methods follow this two-loops manner, it remains a significant challenge to achieve both low computational cost and competitive model performance for large-scale problems. Maintaining good theoretical guarantees on top of that would be even more difficult.

\deleted{The existing gradient-based bilevel methods always follow the two-loops manner. 
The big challenge is to compute the hyper-gradient, which inevitably involves the gradient of the inner solution. For AID and RMD methods, either the computation expense or space requirement is always very high, or one should sacrifice the accuracy to reduce the expense.}


\subsection{Contributions}
In this work, it becomes possible for the first time. To accomplish this, a novel paradigm is proposed: instead of directly solving the original bilevel problem (\ref{P:bilevel}), we approximate it using an equivalent \emph{minimax} formulation. To the best of our knowledge, this is the first method that has the potential to simultaneously achieve scalability, algorithmic compatibility, and theoretical extensibility for general bilevel problems. Specifically,

\deleted{This work proposes a new perspective to approximate the bilevel problem (\ref{P:bilevel}) by a \emph{minimax} formulation. We introduce an auxiliary variable $\omega$ to transform the nested problem (constraint) in (\ref{P:bilevel}) by $L_2(\omega, \lambda) - \max_{u} L_2(u, \lambda) \geq 0$ for any $\omega$, and then penalize this constraint to the outer function and formulate it as} 

\deleted{where $u \in \R^d$ is the inner variable, $\omega \in \R^d$, $\alpha > 0$ is the multiplier.
A detailed description of the proposed minimax problem and method is given in Section~\ref{sec:proposed:method}. We make the following contributions:}

\begin{itemize}
  \item This work proposes a new paradigm for general bilevel optimization that involves converting the problem into an equivalent minimax form. This approach opens up new possibilities for developing bilevel optimization methods that can address large-scale problems.
  \item An efficient optimization algorithm called MinimaxOPT is introduced to solve the minimax problem, which shares the same time/space complexity as gradient descent and can be extended to its stochastic version with ease. Furthermore, it can be seamlessly combined with popular optimizers, such as SGD momentum or Adam~\citep{Adam}.
  \item MinimaxOPT enjoys nice theoretical properties as common minimax optimization algorithms, where we provide theoretical convergence guarantees for cases when $L_2$ is strongly convex and $L_1$ is convex or strongly convex. Empirical results on multiple tasks are also provided to demonstrate its superiority over common bilevel optimization baselines.
\end{itemize}

\section{Proposed Problem and Method}\label{sec:proposed:method}

\deleted{In this section, we will describe how the minimax formulation (\ref{P:min:max}) can approximate the bilevel problem (\ref{P:bilevel}) and propose a minimax algorithm to solve problem~(\ref{P:min:max})}

\added{In the proposed reformulation of bilevel problem (\ref{P:bilevel}), an additional auxiliary variable $\omega$ is introduced to transform the nested inner problem $u^{\ast}(\lambda) = \arg\min_{u} L_2(u, \lambda)$ to constraint $L_2(\omega, \lambda) - \min_u L_2(u, \lambda) = 0$, where $\omega$ serves as a proxy to represent $u^{\ast}(\lambda)$:}
\begin{align}\label{P:bilevel_proxy}
    \min_{\lambda \in \Lambda} \quad & \quad  L_{1}(\omega, \lambda) \notag \\
    \text{s.t.} \quad & \quad  L_2(\omega, \lambda) - \min_u L_2(u, \lambda) = 0
\end{align}
\added{The additional constraint in~\eqref{P:bilevel_proxy} is then penalized with factor $\alpha$ in the outer function:}
\begin{align}\label{P:min:max}
    \min_{\omega, \lambda \in \Lambda}\max_{u}  L^{\alpha}(u, \omega, \lambda):= L_1(\omega, \lambda) + \alpha (L_2(\omega, \lambda) - L_2(u, \lambda))
\end{align}
\added{where $u \in \R^d$ is the inner variable whose optimum value is still $u^{\ast}(\lambda)$, while $\omega \in \R^d$, $\alpha > 0$ is the introduced multiplier. Intuitively, since $L_2(\omega, \lambda) -  \min_{u} L_2(u, \lambda)$ is always positive, for sufficiently large $\alpha$, minimizing $L^{\alpha}$ w.r.t. $\omega, \lambda$ is approximately equivalent to minimizing the second term $\alpha (L_2(\omega, \lambda) - \min_{u} L_2(u, \lambda))$, hence the inner constraint can be roughly satisified. It will then automatically turn to minimize the first term since $L_1(\omega, \lambda)$ now becomes the bottleneck when the inner constraint is satisfied. In such a manner, the approximation of both inner constraint and outer optimum can be obtained during the same optimization process, and $\alpha$ controls the priority.}

\deleted{Since $L_2(\omega, \lambda) -  \min_{u} L_2(u, \lambda)$ is always positive, for sufficient large $\alpha$, in order to minimize $L^{\alpha}$ w.r.t. $\omega, \lambda$, the second term $\alpha (L_2(\omega, \lambda) - \min_{u} L_2(u, \lambda)) \leq \mathcal{O}(1)$ must be controlled. It results in that the variable $\omega$ approximates the minimizer $u^{\ast}$ of $L_2(u, \lambda)$, and the accuracy is monitored by the multiplier $\alpha$.}

When $\alpha$ goes to infinity, the bilevel problem (\ref{P:bilevel}) and the proposed minimax problem~(\ref{P:min:max}) becomes \added{exactly} equivalent under some mild conditions. A precise description is given in Theorem~\ref{thm:equivalent:b:m}.

\begin{restatable}[]{thm}{thmequiv} \label{thm:equivalent:b:m}
Let $\lambda^{\ast}$ denote the solution of the bilevel problem and $u^{\ast}  = u(\lambda^{\ast})$ be the corresponding minimizer of the inner problem. We let  $(\hat{u}, \hat{\omega}, \hat{\lambda})$ denote the optimal solution of the minimax problem (\ref{P:min:max}).
Suppose that
\begin{itemize}
    \item[(i)] $L_1(\omega, \lambda) \ge 0$ is $\hat{L}_1$-Lipschitz continuous  with resepct to $\omega$.
    \item[(ii)] There exist $M, r > 0$, s.t. $| L_2(\hat{\omega}, \hat{\lambda}) - L_2(\hat{u}, \hat{\lambda})| \leq \Delta \Rightarrow \left\| \hat{\omega} - \hat{u} \right\|^r \leq M \Delta$.
\end{itemize}
  Denote $L_1^{\ast} \triangleq L_1(\ustar, \lambdastar)$, then for any fixed  $\alpha > 0$, the following statements hold:
\begin{itemize}
    \item[(1)] $0 \le L_2(\hat{\omega}, \hat{\lambda}) - L_2(\hat{u}, \hat{\lambda}) \leq \frac{L_1^{\ast}}{\alpha} $.

   \item[(2)] $L_1(u^{\ast}, \lambda^{\ast}) - \hat{L}_1 \cdot \left(\frac{ M L_1^{\ast}}{\alpha}\right)^{1/r}   \leq L_1(\hat{\omega}, \hat{\lambda}) \leq L_1(u^{\ast}, \lambda^{\ast}).$

\end{itemize}
\end{restatable}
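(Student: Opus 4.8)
The plan is to exploit the fact that the inner maximization over $u$ merely selects the minimizer of $L_2(\cdot,\lambda)$, and then to compare the minimax optimal value against the value attained at the test point $(\omega,\lambda)=(u^{\ast},\lambda^{\ast})$. First I would observe that, since $u$ enters $L^{\alpha}$ only through the term $-\alpha L_2(u,\lambda)$, maximizing over $u$ is the same as minimizing $L_2(\cdot,\lambda)$; hence at the optimum $(\hat u,\hat\omega,\hat\lambda)$ one has $L_2(\hat u,\hat\lambda)=\min_u L_2(u,\hat\lambda)$. In particular $L_2(\hat\omega,\hat\lambda)-L_2(\hat u,\hat\lambda)\ge 0$, which already delivers the left inequality of statement (1). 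It also reduces the minimax problem to the single minimization of $\Phi(\omega,\lambda):=L_1(\omega,\lambda)+\alpha\bigl(L_2(\omega,\lambda)-\min_u L_2(u,\lambda)\bigr)$ over $(\omega,\lambda)$.

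Next I would invoke optimality of $(\hat\omega,\hat\lambda)$ for $\Phi$. Evaluating $\Phi$ at the bilevel optimum $(u^{\ast},\lambda^{\ast})$, the penalty term vanishes because $u^{\ast}$ is itself the inner minimizer at $\lambda^{\ast}$, so $\Phi(u^{\ast},\lambda^{\ast})=L_1^{\ast}$. Optimality then yields the key inequality
\begin{align*}
L_1(\hat\omega,\hat\lambda)+\alpha\bigl(L_2(\hat\omega,\hat\lambda)-L_2(\hat u,\hat\lambda)\bigr)\le L_1^{\ast}.
\end{align*}
Combining this with $L_1\ge 0$ from assumption (i) gives the right inequality of (1), namely $L_2(\hat\omega,\hat\lambda)-L_2(\hat u,\hat\lambda)\le L_1^{\ast}/\alpha$; combining the same inequality with the already-established nonnegativity of the penalty term gives the upper bound $L_1(\hat\omega,\hat\lambda)\le L_1^{\ast}$ in statement (2).

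For the lower bound in (2) I would feed the bound from (1) into the growth condition (ii): since $|L_2(\hat\omega,\hat\lambda)-L_2(\hat u,\hat\lambda)|\le L_1^{\ast}/\alpha$, assumption (ii) gives $\|\hat\omega-\hat u\|\le (M L_1^{\ast}/\alpha)^{1/r}$. The $\hat L_1$-Lipschitz continuity of $L_1$ in $\omega$ then controls the gap, yielding $L_1(\hat\omega,\hat\lambda)\ge L_1(\hat u,\hat\lambda)-\hat L_1\,(M L_1^{\ast}/\alpha)^{1/r}$. The final and conceptually most important step is to note that $(\hat u,\hat\lambda)$ is \emph{feasible} for the original bilevel problem, since $\hat u=\arg\min_u L_2(u,\hat\lambda)$; therefore by optimality of $\lambda^{\ast}$ we have $L_1(\hat u,\hat\lambda)=\mathcal{L}(\hat\lambda)\ge\mathcal{L}(\lambda^{\ast})=L_1^{\ast}$. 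Substituting this in gives the claimed lower bound and completes the proof.

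I expect the main obstacle to be keeping the two distinct roles of $\hat u$ and $\hat\omega$ straight and recognizing that it is $(\hat u,\hat\lambda)$ — not $(\hat\omega,\hat\lambda)$ — that serves as the feasible bilevel point anchoring the lower bound; once that is seen, the argument is just a short chain of inequalities. A minor technical point is to confirm that the inner maximum is attained so that $\hat u$ exists and the reduction to $\Phi$ is legitimate, which is implicit in the hypothesis that $(\hat u,\hat\omega,\hat\lambda)$ is an optimal solution of \eqref{P:min:max}.
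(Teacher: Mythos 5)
Your proposal is correct and follows essentially the same route as the paper's proof: reduce the minimax problem to minimizing the penalized value function, test it against $(u^{\ast},\lambda^{\ast})$ where the penalty vanishes to get claim (1) and the upper bound of (2), then combine the growth condition (ii) with the Lipschitz bound and the bilevel feasibility of $(\hat u,\hat\lambda)$ for the lower bound of (2). The feasibility observation you highlight as the key step is exactly the chain of equalities/inequalities labeled (a)--(c) in the paper's argument.
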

Letting $\alpha \to \infty$, if $\omega, u \in \Omega$ and $\lambda \in \Lambda$ are all compact sets, we can obtain the exact equivalence $L_1(\hat{\omega}, \hat{\lambda}) = L_1(u^{\ast}, \lambda^{\ast})$, where a stronger result is available in Appendix~\ref{appendix:exact-equivalence-when-alpha-goes-to-inf}. Furthermore, Theorem \ref{thm:equivalent:b:m} shows that the minimax problem (\ref{P:min:max}) is an approximation of the bilevel problem (\ref{P:bilevel}) for any fixed $\alpha$ under mild conditions. The only uncommon condition is the second assumption, which is actually easy to satisfy. For example, $\mu$-strongly convex function (w.r.t. $\omega$) satisfies \emph{condition (ii)} with $r=2, M=2/\mu$ for $\hu = \arg\min_u L_2(u, \hlambda)$
\begin{align*}
 L_2(\hat{\omega}, \hat{\lambda}) -L_2(\hat{u}, \hat{\lambda}) \geq \left\langle \nabla_{u} L_2(\hat{u}, \hat{\lambda}), \hat{\omega} - \hat{u}\right\rangle + \frac{\mu}{2} \left\| \hat{\omega} - \hat{u} \right\|^2 \mathop{\ge}^{(a)} \frac{\mu}{2}\left\| \hat{\omega} - \hat{u} \right\|^2,
\end{align*}
\added{where $(a)$ uses the optimality of $\hat{u}$ on $L_2(u, \hlambda)$, which can be induced by the optimality of $\hat{u}$ on $L^{\alpha}$. More examples can be found in Appendix~\ref{appendix:bilevel-example-one-dim}.}

Before showing the theoretical results and proofs, we define
\begin{align}
    \Phi^{\alpha}(\omega, \lambda):= \max_{u} L^{\alpha}(u, \omega, \lambda); \quad u^{\ast}(\lambda) = \arg\max_{u} L^{\alpha}(u, \omega, \lambda)
\end{align}
 and 
\begin{align}
    \Gamma^{\alpha}(\lambda) = \min_{\omega} \Phi^{\alpha}(\omega, \lambda); \quad \omega_{\alpha}^{\ast}(\lambda) = \arg\min_{\omega} \Phi^{\alpha}(\omega, \lambda).
\end{align}
We make the following assumptions for the proposed minimax problem throughout this subsection.

\begin{assumption}\label{assumpt:v2}
We suppose that
\begin{itemize}
    \item[(1)] $L_1(\omega, \lambda)$ is twice continuous and differentiable, $\ell_{10}$-Lipschtiz continuous; $\ell_{11}$ gradient Lipschitz.
    \item[(2)]   $L_2(\omega, \lambda)$ is $\ell_{21}$ gradient Lipschtiz, $\ell_{22}$-Hessian Lipschtiz, and $\mu_2$-strongly convex in $\omega$.
\end{itemize}
\end{assumption}

\begin{restatable}[Stronger Equivalence to Bilevel optimization]{thm}{thmequivstronger} \label{thm:equiv}
Under Assumption \ref{assumpt:v2}, if $\alpha > 2\ell_{11}/\mu_2$, we have
\begin{subequations}
\begin{align}
|\mathcal{L}(\lambda) - \Gamma^{\alpha}(\lambda)| & \leq  \mathcal{O}\left(\frac{\kappa^2}{\alpha}\right)  \label{inequ:zero:dif}\\ 
 \left\| \nabla \mathcal{L}(\lambda) - \nabla \Gamma^{\alpha}(\lambda)\right\| & \leq  \mathcal{O}\left(\frac{\kappa^3}{\alpha}\right)\label{inequ:one:dif}\\ 
\left\| \nabla^2 \Gamma^{\alpha}(\lambda) \right\| & \leq \mathcal{O}(\kappa^3).  \label{inequ:second:dif} 
\end{align}
\end{subequations}
\end{restatable}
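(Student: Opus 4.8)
The plan is to reduce everything to the reduced value function $\Gamma^{\alpha}(\lambda)=\min_{\omega}\Phi^{\alpha}(\omega,\lambda)$ and compare it with $\mathcal{L}(\lambda)=L_1(u^{\ast}(\lambda),\lambda)$ order by order in $1/\alpha$. The first observation is that the inner maximization is explicit: since $L^{\alpha}(u,\omega,\lambda)=L_1(\omega,\lambda)+\alpha L_2(\omega,\lambda)-\alpha L_2(u,\lambda)$ and $L_2$ is $\mu_2$-strongly convex in its first argument, the maximizer over $u$ is exactly $u^{\ast}(\lambda)=\arg\min_u L_2(u,\lambda)$, independent of $\omega$, so $\Phi^{\alpha}(\omega,\lambda)=L_1(\omega,\lambda)+\alpha\big(L_2(\omega,\lambda)-L_2(u^{\ast}(\lambda),\lambda)\big)$. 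Under $\alpha>2\ell_{11}/\mu_2$ the map $\omega\mapsto L_1+\alpha L_2$ has Hessian $\succeq(\alpha\mu_2-\ell_{11})I\succeq(\alpha\mu_2/2)I$, so $\omega_{\alpha}^{\ast}(\lambda)$ is the unique minimizer and $\Gamma^{\alpha}$ is well defined and twice differentiable. The workhorse estimate I would establish first is $\|\omega_{\alpha}^{\ast}(\lambda)-u^{\ast}(\lambda)\|=O(1/\alpha)$: the stationarity condition $\nabla_{\omega}L_1(\omega_{\alpha}^{\ast},\lambda)+\alpha\nabla_{\omega}L_2(\omega_{\alpha}^{\ast},\lambda)=0$ gives $\|\nabla_{\omega}L_2(\omega_{\alpha}^{\ast},\lambda)\|\le\ell_{10}/\alpha$, and strong convexity of $L_2$ together with $\nabla_{\omega}L_2(u^{\ast},\lambda)=0$ turns this into $\|\omega_{\alpha}^{\ast}-u^{\ast}\|\le\ell_{10}/(\alpha\mu_2)$. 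Throughout, $\kappa$ denotes the inner condition number (say $\ell_{21}/\mu_2$), which absorbs the $1/\mu_2$ and smoothness factors.

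For (\ref{inequ:zero:dif}) I would sandwich $\Gamma^{\alpha}$. Evaluating $\Phi^{\alpha}$ at $\omega=u^{\ast}$ gives $\Gamma^{\alpha}(\lambda)\le\Phi^{\alpha}(u^{\ast},\lambda)=L_1(u^{\ast},\lambda)=\mathcal{L}(\lambda)$, since the penalty vanishes. For the reverse inequality I would evaluate at $\omega_{\alpha}^{\ast}$, bound $|L_1(\omega_{\alpha}^{\ast},\lambda)-L_1(u^{\ast},\lambda)|\le\ell_{10}\|\omega_{\alpha}^{\ast}-u^{\ast}\|$ by Lipschitzness and $0\le\alpha(L_2(\omega_{\alpha}^{\ast},\lambda)-L_2(u^{\ast},\lambda))\le\tfrac{\alpha\ell_{21}}{2}\|\omega_{\alpha}^{\ast}-u^{\ast}\|^2$ by smoothness and $\nabla_{\omega}L_2(u^{\ast},\lambda)=0$; both are $O(1/\alpha)$ after inserting the workhorse estimate, which yields the claimed $O(\kappa^2/\alpha)$ bound.

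For (\ref{inequ:one:dif}) I would apply the envelope theorem to both objects. Since $\omega_{\alpha}^{\ast}$ is stationary, $\nabla\Gamma^{\alpha}(\lambda)=\nabla_{\lambda}L_1(\omega_{\alpha}^{\ast},\lambda)+\alpha\big(\nabla_{\lambda}L_2(\omega_{\alpha}^{\ast},\lambda)-\nabla_{\lambda}L_2(u^{\ast},\lambda)\big)$, where $\nabla_{\lambda}L_2(u^{\ast},\lambda)$ is the envelope gradient of the optimal inner value; and $\nabla\mathcal{L}(\lambda)=\nabla_{\lambda}L_1(u^{\ast},\lambda)-\nabla^2_{\lambda u}L_2(u^{\ast},\lambda)H^{-1}\nabla_u L_1(u^{\ast},\lambda)$ with $H:=\nabla^2_{uu}L_2(u^{\ast},\lambda)$ is the usual implicit-function hypergradient. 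The crux is to Taylor-expand the penalty gradient: a first-order expansion of $\nabla_{\omega}L_2$ in the stationarity condition gives $\alpha(\omega_{\alpha}^{\ast}-u^{\ast})=-H^{-1}\nabla_u L_1(u^{\ast},\lambda)+O(1/\alpha)$, so that $\alpha\big(\nabla_{\lambda}L_2(\omega_{\alpha}^{\ast},\lambda)-\nabla_{\lambda}L_2(u^{\ast},\lambda)\big)=\nabla^2_{\lambda u}L_2\cdot\alpha(\omega_{\alpha}^{\ast}-u^{\ast})+O(1/\alpha)=-\nabla^2_{\lambda u}L_2\,H^{-1}\nabla_u L_1+O(1/\alpha)$. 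This is precisely the implicit term in $\nabla\mathcal{L}$, so the two $O(1)$ contributions cancel and the residual is $O(\kappa^3/\alpha)$, the extra power of $\kappa$ coming from $H^{-1}\sim1/\mu_2$ and the Hessian-Lipschitz ($\ell_{22}$) control of the second-order Taylor remainders.

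The bound (\ref{inequ:second:dif}) is where the real work lies. Here I would write $\Gamma^{\alpha}=P_{\alpha}-\alpha g$ with $P_{\alpha}(\lambda):=\min_{\omega}\{L_1+\alpha L_2\}$ and $g(\lambda):=L_2(u^{\ast}(\lambda),\lambda)$, and use the second-order sensitivity formulas $\alpha\nabla^2 g=\alpha\big(\nabla^2_{\lambda\lambda}L_2-\nabla^2_{\lambda u}L_2\,H^{-1}\nabla^2_{u\lambda}L_2\big)$ at $u^{\ast}$ and $\nabla^2 P_{\alpha}=\nabla^2_{\lambda\lambda}F_{\alpha}-\nabla^2_{\lambda\omega}F_{\alpha}(\nabla^2_{\omega\omega}F_{\alpha})^{-1}\nabla^2_{\omega\lambda}F_{\alpha}$ at $\omega_{\alpha}^{\ast}$, with $F_{\alpha}=L_1+\alpha L_2$. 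Expanding $(\nabla^2_{\omega\omega}L_1+\alpha\nabla^2_{\omega\omega}L_2)^{-1}$ by a Neumann series in $1/\alpha$ shows that the $O(\alpha)$ part of $\nabla^2 P_{\alpha}$ equals $\alpha\big(\nabla^2_{\lambda\lambda}L_2-\nabla^2_{\lambda\omega}L_2(\nabla^2_{\omega\omega}L_2)^{-1}\nabla^2_{\omega\lambda}L_2\big)$ evaluated at $\omega_{\alpha}^{\ast}$. Subtracting $\alpha\nabla^2 g$, which is the same Schur-complement expression at $u^{\ast}$, the two divergent blocks differ only through $\omega_{\alpha}^{\ast}$ versus $u^{\ast}$; the Hessian-Lipschitz assumption bounds that difference by $O(\ell_{22}\|\omega_{\alpha}^{\ast}-u^{\ast}\|)=O(1/\alpha)$, and multiplying by $\alpha$ leaves an $O(1)$ remainder. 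The main obstacle, and the step I would spend the most care on, is exactly this cancellation of the divergent $O(\alpha)$ terms: one must verify that the Neumann expansion and the second-order sensitivity formula agree to leading order and that every surviving contribution (the lower-order Neumann terms, the $\nabla^2 L_1$ cross terms, and the Lipschitz discrepancy) is controlled by products of $\ell_{10},\ell_{11},\ell_{21},\ell_{22}$ and $1/\mu_2$ that collect into $O(\kappa^3)$.
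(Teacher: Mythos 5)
Your proposal is correct, and for \eqref{inequ:zero:dif} and \eqref{inequ:one:dif} it is essentially the paper's own argument: the same workhorse estimate $\|\omega_{\alpha}^{\ast}(\lambda)-u^{\ast}(\lambda)\|\le \ell_{10}/(\mu_2\alpha)$ obtained from stationarity plus strong convexity (the paper's Lemma~\ref{lem:omega:ast}), the same Lipschitz-plus-smoothness splitting for the value gap, and the same cancellation of the implicit hypergradient term $-\nabla^2_{\lambda u}L_2\,H^{-1}\nabla_u L_1$ against the first-order Taylor expansion of $\alpha\bigl(\nabla_\lambda L_2(\omega_\alpha^\ast,\lambda)-\nabla_\lambda L_2(u^\ast,\lambda)\bigr)$, with the $\ell_{22}$ Hessian-Lipschitz constant absorbing the second-order remainders. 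Where you genuinely depart from the paper is \eqref{inequ:second:dif}. The paper differentiates the envelope gradient $\nabla\Gamma^{\alpha}(\lambda)=\nabla_\lambda L^{\alpha}(u^\ast(\lambda),\omega_\alpha^\ast(\lambda),\lambda)$ term by term, bounds $\|\nabla\omega_\alpha^\ast\|$ via Lemma~\ref{lem:grad:omega:bound}, and delegates the crucial cancellation of the $O(\alpha)$ pieces to a separate estimate $\|\nabla\omega_\alpha^\ast(\lambda)-\nabla u^\ast(\lambda)\|\le C_1/\alpha$ (Lemma~\ref{grad:inequ:omega}), whose proof it omits with a citation to an external source. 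Your decomposition $\Gamma^{\alpha}=P_\alpha-\alpha g$ with the two Schur-complement sensitivity formulas and the Neumann expansion of $(\nabla^2_{\omega\omega}L_1+\alpha\nabla^2_{\omega\omega}L_2)^{-1}$ accomplishes the same cancellation, but it does so self-containedly: expanding the inverse to first order in $1/\alpha$ is exactly an inline proof of the paper's omitted Lemma~\ref{grad:inequ:omega}, since $\nabla\omega_\alpha^\ast=-(\nabla^2_{\omega\omega}F_\alpha)^{-1}\nabla^2_{\omega\lambda}F_\alpha$ reduces to $-(\nabla^2_{\omega\omega}L_2)^{-1}\nabla^2_{\omega\lambda}L_2+O(1/\alpha)$, which is $\nabla u^\ast+O(1/\alpha)$ after the Hessian-Lipschitz correction for evaluating at $\omega_\alpha^\ast$ versus $u^\ast$. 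Your route costs a bit more linear-algebra bookkeeping (inverse differences bounded by $\ell_{22}/\mu_2^2$ times the point discrepancy, plus the lower-order Neumann terms) but buys a complete, citation-free proof of the one step the paper leaves unproved; the only cosmetic mismatch is that you take $\kappa=\ell_{21}/\mu_2$ while the paper uses $\kappa=\max\{\ell_{10},\ell_{11},\ell_{21},\ell_{22}\}/\mu_2$, which does not affect the stated orders.
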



\deleted{From (2) of Theorem~\ref{thm:equivalent:b:m}, the multiplier $\alpha$ controls the accuracy of such an approximation. Especially if the multiplier $\alpha$ is dynamically adjusted and increased to infinity, the minimax problem (\ref{P:min:max}) is exactly equivalent to the bilevel problem (\ref{P:bilevel}). In Algorithm~\ref{alg:minmax:general}, we design the multi-stage gradient descent and ascent method to solve the minimax problem~(\ref{P:min:max}).}

\added{To solve the above equivalent minimax problem~\eqref{P:min:max}, we propose a general multi-stage gradient descent and ascent method named MinimaxOPT in Algorithm~\ref{alg:minmax:general}.}
 At each iteration, the algorithm performs gradient ascent over the variable $u$ and gradient descent over the variables $\omega$ and $\lambda$. This enables us to update the variables synchronously \added{and completely remove the outer-inner dependency issue in bilevel problems.} \deleted{, unlike the bilevel methods, which update the outer variable until the inner optimization converges.} The multiplier $\alpha$ is increased by a factor $\tau > 1$ after each stage \added{and gradually approaches infinity during the process}. \added{It is worth noticing that} Algorithm~\ref{alg:minmax:general} involves unequal step-sizes for $u,\omega$ and $\lambda$\added{, which is mainly in consideration of their difference in the theoretical properties entailed by their different mathematical forms.}

 \begin{algorithm}[ht]
\caption{Multi-Stage Stochastic MinimaxOPT}
\label{alg:minmax:general}
\begin{algorithmic}[1]
\STATE {\bfseries Input:} step-size sequences $\left\lbrace \eta_i^{u}, \eta_i^{\omega}, \eta_i^{\lambda}\right\rbrace$, initial penalty $\alpha_{-1}$, penalty sequence $\left\lbrace \Delta_{\alpha}^i\right\rbrace_{i=0}^N$, and initialization $u_0^0$, $\lambda_0^0$, $\omega_0^0$
\FOR{$i = 0: N$}
\STATE{$\alpha_{i} = \alpha_{i-1} + \Delta_{\alpha}^i  $} 
\FOR{$k = 0:K_i-1$}
\STATE{$(\tilde{u}_{0}, \tilde{\omega}_0)=(u_k^i, \omega_k^i)$}
\FOR{$t=0: T_k^i-1$}
\STATE{Generating iid samples  $D_{i,k}^t = \left\lbrace S_{\text{train}}^t, S_{\text{val}}^t \right\rbrace$ from $S_{train}$ and $S_{val}$}
\STATE{$\tilde{u}_{t+1} = \tilde{u}_{t} - \eta_i^u \nabla_u L^{\alpha_i}(\tilde{u}_{t}, \tilde{\omega}_{t}, \lambda_k^i; D_{i,k}^t) $
}
\STATE{$\tilde{\omega}_{t+1} = \tilde{\omega}_{t} - \eta_i^{\omega}\nabla_{\omega} L^{\alpha_i}(\tilde{u}_{t}, \tilde{\omega}_{t}, \lambda_k^i; D_{i,k}^t))$ }
\ENDFOR
\STATE{$(u_{k+1}^i, \omega_{k+1}^i)=(\tilde{u}_{T_k^i}, \tilde{\omega}_{T_k^i})$}
\STATE{Generating iid samples  $D_{i,k} = \left\lbrace S_{\text{train}}^{i,k}, S_{\text{val}}^{i,k} \right\rbrace$ from $S_{train}$ and $S_{val}$}
\STATE{${\lambda}_{k+1}^i = {\lambda}_{k}^i -   \eta_i^{\lambda} \nabla_{\lambda} L^{\alpha_i}(u_{k+1}^i, \omega_{k+1}^i, \lambda_k^i; D_{i,k})$}
\ENDFOR
\STATE{ $(u_{0}^{i+1},\omega_{0}^{i+1}, \lambda_{0}^{i+1})= (u_{K_i}^i, \omega_{K_i}^i, \lambda_{K_i}^i)$}
\ENDFOR
\STATE{{\bf Output:} $(u_{0}^{N+1}, \omega_{0}^{N+1}, \lambda_{0}^{N+1})$}
\end{algorithmic}
\end{algorithm}   

{\bf Discussions with the current algorithms:} When $K_i = 1$, then this framework degenerates to the F2A algorithm of \citep{kwon2023fully}. In \citep{pmlr-v202-shen23c}, the authors develop the value-function-based penalty function: at each step,  they first run gradient descent for $u$ until converging and then update $\omega$ and $\lambda$ sequentially.



\subsection{Stochastic Extension of Minimax Formulation}
The minimax formulation of the stochastic bilevel optimization is 
\begin{align}
 \min_{\omega, \lambda \in \Lambda} \max_{u} \E_{\xi \sim \Xi }[L_1(\omega, \lambda;\xi)] + \alpha \left(\E_{\zeta \sim Z}[L_2(\omega, \lambda;\zeta)] - \E_{\zeta \sim Z}[L_2(u, \lambda; \zeta)] \right)
\end{align}
We \added{can} randomly sample the outer function $L_1$ by a mini-batch set $S_1$ without replacement and the inner function by another mutually independent mini-batch set $S_2$ without replacement, then we optimize the following stochastic version of the minimax problem:
\begin{align}
\min_{\omega, \lambda \in \Lambda} \max_{u} L_{\mathcal{D}}^{\alpha} = L^{\alpha}(u, \omega, \lambda; \mathcal{D}) := L_1(\omega, \lambda; S_1) + \alpha\left(L_2(\omega, \lambda; S_2) - L_2(u, \lambda; S_2)\right)
\end{align}
where $\mathcal{D} = \left\lbrace S_1, S_2\right\rbrace$ and  $S_{1}$ is i.i.d. from the samples set $\left\lbrace 1,2,\cdots, m\right\rbrace$ of $L_1$, $S_{2}$ are i.i.d. from the sample set  $\left\lbrace 1,2,\cdots, n\right\rbrace$ of $L_2$ and independent with $S_{1}$. We use $\mathcal{F}_k$ to denote the random information before the iteration $(u_k, \omega_k, \lambda_k)$, that is $\mathcal{F}_k:= \left\lbrace (u_k, \omega_k, \lambda_k), D_{k-1}, \cdots, D_1\right\rbrace$. As we can see $L_{D_k}^{\alpha}$ is unbiased estimation of $L^{\alpha}$
\begin{align}
     \E[L_{D_k}^{\alpha}(u_k, \omega_k, \lambda_k) \mid \mathcal{F}_k] = L^{\alpha}(u_k, \omega_k, \lambda_k)
 \end{align}

\begin{rem}
One significant advantage of the MinimaxOPT algorithm is its ability to be easily extended to large-scale scenarios where only stochastic gradient oracles are available. For instance, in tackling the stochastic bilevel optimization problem (\ref{P:bilevel:S}), one can simply replace the gradient oracles with stochastic gradients to extend the algorithm. Additionally, popular optimizers such as Adam or SGD momentum can be incorporated into the algorithm, and the resulting generalized algorithms enjoy the same theoretical guarantees as applying Adam/SGD momentum to minimax problems. 
\end{rem}

\deleted{\replaced{, at}{.
At} each iteration, we \added{can} sample the outer function $L_1$ by a mini-batch set $S_1^k$ and the inner function by another mutually independent mini-batch set $S_2^k$, then we optimize the following minimax problem:}
\deleted{where $L_1(u^{\ast}(\lambda), \lambda; S_1^k) = \frac{1}{|S_1^k|}\sum_{i \in S_1^k} L_1(u^{\ast}(\lambda),\lambda; \xi)$ and $L_2(u, \lambda; S_2^k) = \frac{1}{|S_2^k|}\sum_{i \in S_2^k} L_2(u,\lambda; \zeta)$. Replacing the full gradient in Algorithm~\ref{alg:minmax:general} with the corresponding mini-batch stochastic gradients and incorporating momentum acceleration results in Algorithm \ref{alg:minmax:general} (in Appendix~\ref{append:dnn}). Its empirical performance will be shown in Section~\ref{sec:numerical}.}

\deleted{In the next example, we verify that the proposed minimax problem (\ref{P:min:max}) shares the same solution as the bilevel problem~(\ref{P:bilevel}) and Algorithm \ref{alg:minmax:general} yields a high-accuracy solution for the bilevel problem.}


\section{ Preliminaries and Theoretical Analysis}
In this section, we provide the theoretical convergence guarantees for the proposed algorithm (Algorithm \ref{alg:minmax:general}).
Before presenting the main results, we introduce some basic concepts and definitions used throughout this paper.
\begin{defi}
A function $f: \R^d \rightarrow \R$ is $\mu$-strongly convex if $f(x) \geq f(y) + \left\langle \nabla f(y), x - y\right\rangle + \frac{\mu}{2}\left\| x - y \right\|^2$ for any $x, y \in \R^d$.
\end{defi}
\begin{defi}
We call the function $f: \R^d \rightarrow \R$ being $\mu$-strongly concave if $-f$ is $\mu$-strongly convex.
\end{defi}
\begin{defi}
A function $f: \R^d \rightarrow \R$ is $L$-Lipschitz continuous if  $ \left\| f(x) -  f(y) \right\| \leq  L\left\| x - y \right\|$ for any $x, y \in \R^d$.
\end{defi}
\begin{defi}
A function $f: \R^d \rightarrow \R$ is $\ell$-smooth if  $ \left\| \nabla f(x) - \nabla f(y) \right\| \leq  \ell\left\| x - y \right\|$ for any $x, y \in \R^d$.
\end{defi}
\begin{assumption}(Bounded variance)\label{assumpt: bounded_variance} Suppose for each $\xi_i \in \Xi$ and $\zeta_j \in Z$, the followings hold:
  \begin{itemize}
    \item[(i)] $\E[\left\| \nabla L_1(\omega, \lambda; \xi_i) - \nabla  L_1(\omega, \lambda) \right\|^2] \leq  \sigma_1^2$
       \item[(ii)] $\E[\left\| \nabla L_2(\omega, \lambda; \zeta_j) - \nabla L_2(\omega, \lambda)\right\|^2] \leq \sigma_2^2$.
\end{itemize}
\end{assumption}
\subsection{One-stage Gradient Descent Ascent Algorithm}\label{subsec:onestage}
We first focus on the analysis when the outer step $N$ is 1, and the multiplier $\alpha$ is fixed. Then Algorithm \ref{alg:minmax:general} is reduced to gradient descent ascent  (GDA) method of optimizing a fixed objective $L^{\alpha}(u, \omega, \lambda)$. Especially, we consider two time-scale Algorithm 1 with $\eta_{u} = \eta_{\omega} = \mathcal{O}(1/\ell)$ and $\eta_{\lambda}$ is in another scale and smaller than $\eta^u$. It reflects the non-symmetric nature of the objective function with $u, \omega$, and $\lambda$. In general, if $L_1$ and $L_2$ are convex, then $L^{\alpha_i}(u, \omega, \lambda)$ is convex with respect to $\omega$ and concave with respect to $u$. However, $L^{\alpha_i}(u, \omega, \lambda)$
with respect to $\lambda$ is a DC function (i.e., convex minus convex function), nor a convex or concave function.  \deleted{It is reasonable to use different scales step-sizes such that $\eta_{\lambda} \leq \eta_{u}, \eta_{\omega}$. 
The typical analysis of minimax optimization for convex-concave problems is not applied in this setting, which brings us the challenges to provide convergence for Algorithm~\ref{alg:minmax:general}. }


Recalling the definition of $L^{\alpha}$, the variables $u, \omega$ are independent. That is to say, $u$ does not affect the property of $L^{\alpha}$ with respect to $\omega$. This implies that
$\Phi(\omega, \lambda)$ is also $\mu$-strongly convex with $\omega$ and $u^{\ast}(\lambda)$ is independent on $\omega$. We provide a technical lemma that structures the functions $\Phi$ and $\Gamma$ in the (strongly-concave)-(strongly-convex)-nonconvex setting. 
\begin{restatable}[]{lem}{lemonestagevtwo} \label{lem:onestage:v2}
Under Assumption~\ref{assumpt:v2}, if $\alpha > 2\ell_{11}/\mu_2$, the followings hold:
\begin{itemize}
\item[(i)] $L^{\alpha}$ is $\ell_{L}$-smooth where $\ell_{L}=\frac{5}{2}\alpha \ell_{21}$; $\mu_2\alpha$-strongly concave w.r.t. $u$; $\frac{\mu_2\alpha}{2}$-strongly convex w.r.t. $\omega$
\item[(ii)]$\Phi^{\alpha}(\omega, \lambda)$ is $\ell_{\Phi,\lambda}$-smooth w.r.t. $\lambda$ where $\ell_{\Phi,\lambda}=(\kappa+1)\ell_{L}$; $\Phi^{\alpha}(\omega, \lambda)$ is $\ell_{L}$-smooth w.r.t. $\omega$; and $u^{\ast}(\lambda)$ is $\kappa$-Lipschitz continuous;
    \item[(iii)] $\Gamma^{\alpha}(\lambda)$ is $\ell_{\Gamma}$-smooth and $\omega_{\alpha}^{\ast}(\lambda)$ is $\ell_{\omega^{\ast}}$-Lipschitz continuous where $\ell_{\omega^{\ast}}=2\kappa+1$.
\end{itemize}
Here $\kappa=\max\left\lbrace \ell_{10}, \ell_{11}, \ell_{21}, \ell_{22}\right\rbrace/\mu_2$ and $\ell_{\Gamma}$ is a constant which is independent on $\alpha$.
\end{restatable}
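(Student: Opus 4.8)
The plan is to exploit the structure $L^{\alpha}(u,\omega,\lambda) = L_1(\omega,\lambda) + \alpha L_2(\omega,\lambda) - \alpha L_2(u,\lambda)$, in which $u$ enters only through $-\alpha L_2(u,\lambda)$ and is decoupled from $\omega$. For part (i) I would read off the Hessian blocks. Since $\nabla_u L^{\alpha} = -\alpha\nabla_u L_2$, the function is $\alpha\mu_2$-strongly concave and $\alpha\ell_{21}$-smooth in $u$. For $\omega$, $\nabla^2_{\omega\omega}L^{\alpha} = \nabla^2_{\omega\omega}L_1 + \alpha\nabla^2_{\omega\omega}L_2 \succeq (\alpha\mu_2 - \ell_{11})I$, and the threshold $\alpha > 2\ell_{11}/\mu_2$ forces $\alpha\mu_2 - \ell_{11}\ge \tfrac{\alpha\mu_2}{2}$, giving $\tfrac{\mu_2\alpha}{2}$-strong convexity in $\omega$. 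For the joint smoothness constant I would bound every Hessian block ($uu$, $u\lambda$, $\omega\omega$, $\omega\lambda$, $\lambda\lambda$) using Assumption~\ref{assumpt:v2} together with $\ell_{11} < \tfrac{\alpha\mu_2}{2}\le \tfrac{\alpha\ell_{21}}{2}$ (valid since $\mu_2\le\ell_{21}$); the $\lambda\lambda$ block dominates, with norm at most $\ell_{11} + 2\alpha\ell_{21}\le \tfrac52\alpha\ell_{21}$, yielding $\ell_L = \tfrac52\alpha\ell_{21}$.

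For part (ii), note that $\max_u L^{\alpha} = L_1(\omega,\lambda)+\alpha L_2(\omega,\lambda)-\alpha\min_u L_2(u,\lambda)$, so the maximizer $u^{\ast}(\lambda)=\arg\min_u L_2(u,\lambda)$ is independent of $\omega$. Its Lipschitz constant follows from the implicit function theorem applied to $\nabla_u L_2(u^{\ast}(\lambda),\lambda)=0$: $Du^{\ast} = -(\nabla^2_{uu}L_2)^{-1}\nabla^2_{u\lambda}L_2$, whence $\|Du^{\ast}\|\le \ell_{21}/\mu_2\le \kappa$. Smoothness of $\Phi^{\alpha}$ in $\omega$ is immediate because $\Phi^{\alpha}$ differs from $L^{\alpha}$ by an $\omega$-independent term, so the constant $\ell_L$ carries over. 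For smoothness in $\lambda$ I would use the envelope identity $\nabla_\lambda\Phi^{\alpha} = \nabla_\lambda L_1(\omega,\lambda)+\alpha\nabla_\lambda L_2(\omega,\lambda)-\alpha\nabla_\lambda L_2(u^{\ast}(\lambda),\lambda)$ and differentiate the last term through $u^{\ast}(\lambda)$; the chain rule contributes a factor $\alpha\ell_{21}(1+\|Du^{\ast}\|)\le \alpha\ell_{21}(1+\kappa)$, and collecting all terms gives a Lipschitz constant bounded by $(\kappa+1)\ell_L$.

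Part (iii) carries the real work. Strong convexity of $\Phi^{\alpha}$ in $\omega$ (inherited from part (i)) makes $\omega_\alpha^{\ast}(\lambda)$ well defined, and the implicit function theorem on $\nabla_\omega\Phi^{\alpha}(\omega_\alpha^{\ast},\lambda)=0$ gives $\|D\omega_\alpha^{\ast}\|\le \tfrac{2}{\alpha\mu_2}(\ell_{11}+\alpha\ell_{21}) = \tfrac{2\ell_{11}}{\alpha\mu_2}+\tfrac{2\ell_{21}}{\mu_2} < 1+2\kappa$, using $\alpha > 2\ell_{11}/\mu_2$; this is the claimed $\ell_{\omega^{\ast}}=2\kappa+1$. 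The delicate statement is that $\Gamma^{\alpha}$ is smooth with a constant \emph{independent of $\alpha$}. By Danskin's theorem, $\nabla\Gamma^{\alpha}(\lambda) = \nabla_\lambda L_1(\omega_\alpha^{\ast},\lambda) + \alpha\big[\nabla_\lambda L_2(\omega_\alpha^{\ast},\lambda) - \nabla_\lambda L_2(u^{\ast},\lambda)\big]$. The first term is Lipschitz with an $\alpha$-free constant by the part-(ii) chain-rule argument, so everything hinges on $T(\lambda):= \alpha[\nabla_\lambda L_2(\omega_\alpha^{\ast},\lambda) - \nabla_\lambda L_2(u^{\ast},\lambda)]$, whose two summands are each of order $\alpha$ but which must be shown Lipschitz with an $\alpha$-free constant.

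The hard part will be establishing two quantitative facts: (a) $\|\omega_\alpha^{\ast}(\lambda)-u^{\ast}(\lambda)\| = O(1/\alpha)$ and (b) $\lambda\mapsto \omega_\alpha^{\ast}(\lambda)-u^{\ast}(\lambda)$ is $O(1/\alpha)$-Lipschitz. Fact (a) follows from the optimality condition $\alpha\nabla_\omega L_2(\omega_\alpha^{\ast},\lambda) = -\nabla_\omega L_1(\omega_\alpha^{\ast},\lambda)$, so $\|\nabla_\omega L_2(\omega_\alpha^{\ast},\lambda)\|\le \ell_{10}/\alpha$, and $\mu_2$-strong convexity of $L_2$ gives $\|\omega_\alpha^{\ast}-u^{\ast}\|\le \ell_{10}/(\alpha\mu_2)$. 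Fact (b) I would obtain by comparing the IFT formulas for $D\omega_\alpha^{\ast}$ and $Du^{\ast}$: after dividing numerator and denominator of $D\omega_\alpha^{\ast}$ by $\alpha$, the discrepancy from $Du^{\ast}$ comes from the $O(1/\alpha)$ terms $\tfrac1\alpha\nabla^2 L_1$ and from evaluating the $L_2$-Hessians at $\omega_\alpha^{\ast}$ versus $u^{\ast}$, the latter controlled by $\ell_{22}\,\|\omega_\alpha^{\ast}-u^{\ast}\| = O(1/\alpha)$. Writing $T(\lambda) = \alpha A(\lambda)(\omega_\alpha^{\ast}(\lambda)-u^{\ast}(\lambda))$ with $A(\lambda)=\int_0^1\nabla^2_{\lambda u}L_2\,ds$ (so $\|A\|\le\ell_{21}$ and $A$ is Lipschitz via $\ell_{22}$) and splitting $T(\lambda)-T(\lambda')$ into one piece carrying $\alpha\|\omega_\alpha^{\ast}-u^{\ast}\|=O(1)$ and another carrying $\alpha$ times the Lipschitz constant of $\omega_\alpha^{\ast}-u^{\ast}$, also $O(1)$, the two explicit $\alpha$ factors are absorbed. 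This cancellation—pitting the $O(1/\alpha)$ closeness of $\omega_\alpha^{\ast}$ and $u^{\ast}$, in both value and derivative, against the explicit multiplier $\alpha$—is the crux; once it is in place, tracking the powers of $\kappa$ yields $\|\nabla^2\Gamma^{\alpha}\| = O(\kappa^3)$ with an $\alpha$-independent $\ell_\Gamma$.
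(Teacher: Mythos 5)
Your proposal is correct and follows essentially the same route as the paper's proof: Hessian block bounds for (i), Danskin's theorem plus a Lipschitz bound on $u^{\ast}(\lambda)$ for (ii), and for (iii) the same two quantitative facts ($\|\omega_{\alpha}^{\ast}-u^{\ast}\|=\mathcal{O}(1/\alpha)$ and an $\mathcal{O}(1/\alpha)$ bound on the gap between their derivatives) driving the $\alpha$-cancellation in $\nabla^2\Gamma^{\alpha}$. The only cosmetic differences are that you derive the Lipschitz constants of $u^{\ast}$ and $\omega_{\alpha}^{\ast}$ via the implicit function theorem where the paper uses variational-inequality/monotonicity arguments, and that you sketch a proof of the derivative-gap estimate which the paper itself omits by external citation.
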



\begin{restatable}[]{prop}{proponestagevtwo}\label{thm:onestage:prop}
 Under Assumptions~\ref{assumpt:v2} and \ref{assumpt: bounded_variance} and suppose choosing the step-size as:
\begin{align*}
  \eta^{\lambda}=\frac{1}{\ell_{\Gamma}}; \quad  \eta^{u}=\frac{2}{\alpha(\mu_2 + \ell_{21})}; \quad \eta^{\omega} = \frac{4}{\alpha(\mu_2 + 3\ell_{21})}
\end{align*}
we consider Algorithm \ref{alg:minmax:general} with one-stage $N=1$ and any fixed $\alpha > 2\ell_{21}/\mu_2$ and $\zeta >0$
\begin{subequations}
\begin{align}
T_k  & \geq \frac{3\kappa-1}{4}\ln\left(\frac{12(\ell_{11}^2 + \alpha^2\ell_{21}^2)\max \left\lbrace \E[\delta_k^2], \E[r_k^2]\right\rbrace}{\zeta^2}\right) \notag \\
B & =  \frac{12\kappa \left(\frac{1}{2\alpha} + 1\right)(\sigma_1^2 + \alpha^2\sigma_2^2)}{\zeta^2} \notag
\end{align}
\end{subequations}
where
$$\max \left\lbrace \E[\delta_k^2], \E[r_k^2]\right\rbrace \leq  \left\{ 
\begin{aligned}
  &\frac{\zeta^2}{3\alpha^2\ell_{21}^2} + 2\ell_{\omega^{\ast}}^2 (\eta^{\lambda})^2 \left(2\zeta^2 + 4\ell_{11}^2 + 4\kappa^2 + \frac{\sigma_1^2 + \alpha^2\sigma_2^2}{B} \right),  &  k \geq 1  \\
& \max \left\lbrace \left\|u_0 - u^{\ast}(\lambda_0)\right\|^2, 2\left\|\omega_0 - \omega^{\ast}(\lambda_0)\right\|^2 + \frac{2\kappa^2}{\alpha^2}\right\rbrace  & k=0
\end{aligned}
\right.
$$
We can achieve that $\E[\left\|u_{k+1} - u^{\ast}(\lambda_{k})\right\|^2] \leq \frac{\zeta^2}{6\alpha^2\ell_{21}^2}$ and $\E[\left\|\omega_{k+1} - \omega_{\alpha}^{\ast}(\lambda_{k})\right\|^2] \leq \frac{\zeta^2}{6(\ell_{11}^2 + \alpha^2\ell_{21}^2)}$ hold for all $k$ and further demonstrate that 
$\E[\left\| \nabla_{\lambda} L^{\alpha}(u_{k+1}, \omega_{k+1}, \lambda_{k}; D_{k}) - \nabla \Gamma^{\alpha}(\lambda_k) \right\|^2] \leq \zeta^2$ for all $k \leq K-1$. 
\end{restatable}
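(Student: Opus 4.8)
The plan is to exploit the separable structure of the penalized objective. Since $L^{\alpha}(u,\omega,\lambda)=L_1(\omega,\lambda)+\alpha L_2(\omega,\lambda)-\alpha L_2(u,\lambda)$, the ascent map on $u$ and the descent map on $\omega$ inside the inner loop are completely decoupled, and each acts on a smooth, strongly convex/concave function whose constants come from Lemma~\ref{lem:onestage:v2}: the $u$-subproblem is $\alpha\mu_2$-strongly concave and $\alpha\ell_{21}$-smooth, while the $\omega$-subproblem is $\tfrac{\alpha\mu_2}{2}$-strongly convex and $(\ell_{11}+\alpha\ell_{21})$-smooth. I would first write the one-step stochastic recursions for $\E[\|\tilde u_{t+1}-u^{\ast}(\lambda_k)\|^2]$ and $\E[\|\tilde\omega_{t+1}-\omega_{\alpha}^{\ast}(\lambda_k)\|^2]$. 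The prescribed step-sizes $\eta^u=\tfrac{2}{\alpha(\mu_2+\ell_{21})}$ and $\eta^{\omega}=\tfrac{4}{\alpha(\mu_2+3\ell_{21})}$ are precisely the optimal $\tfrac{2}{\mu+L}$ choices for these two subproblems, so standard smooth--strongly-convex SGD analysis gives a contraction of the form $\E[\cdot]_{t+1}\le(1-c/\kappa)\,\E[\cdot]_t+(\eta)^2\cdot(\text{variance floor})$, where the floor is $(\sigma_1^2+\alpha^2\sigma_2^2)/B$ after mini-batching via Assumption~\ref{assumpt: bounded_variance}.

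Unrolling these geometric recursions over $t=0,\dots,T_k-1$, the transient term decays like $(1-c/\kappa)^{T_k}$ times the warm-start error $\max\{\E[\delta_k^2],\E[r_k^2]\}$, while the persistent term is the variance floor. Matching the two contributions against the target accuracies $\tfrac{\zeta^2}{6\alpha^2\ell_{21}^2}$ (for $u$) and $\tfrac{\zeta^2}{6(\ell_{11}^2+\alpha^2\ell_{21}^2)}$ (for $\omega$) pins down the stated choices: the logarithmic $T_k\ge\tfrac{3\kappa-1}{4}\ln(\cdots)$ drives the transient below half the target, and $B$ is chosen so the variance floor covers the other half. This yields the two per-iteration precision bounds for $u_{k+1}$ and $\omega_{k+1}$, \emph{conditioned} on the warm-start bound on $\max\{\E[\delta_k^2],\E[r_k^2]\}$ holding.

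To close the induction on $k$ I would control the warm start. At the start of stage $k$ the targets move because $\lambda$ was updated, so by the $\kappa$- and $\ell_{\omega^{\ast}}$-Lipschitz continuity of $u^{\ast}(\cdot)$ and $\omega_{\alpha}^{\ast}(\cdot)$ from Lemma~\ref{lem:onestage:v2}, a triangle inequality gives $\delta_k\le\|u_k-u^{\ast}(\lambda_{k-1})\|+\kappa\|\lambda_k-\lambda_{k-1}\|$ and the analogue for $r_k$ with $\ell_{\omega^{\ast}}$. The first summand is the accuracy already achieved at stage $k-1$, and the drift $\|\lambda_k-\lambda_{k-1}\|=\eta^{\lambda}\|\nabla_{\lambda}L^{\alpha}(u_k,\omega_k,\lambda_{k-1};D_{k-1})\|$ is bounded by $\eta^{\lambda}$ times a quantity whose squared expectation is controlled by $2\zeta^2+4\ell_{11}^2+4\kappa^2+(\sigma_1^2+\alpha^2\sigma_2^2)/B$, using the Lipschitz bound on $\nabla_{\lambda}L^{\alpha}$ together with the final gradient-approximation estimate and the variance assumption. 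Substituting reproduces exactly the recursive $k\ge1$ bound in the statement, with the $k=0$ case being the initialization distance; induction over $k$ then certifies the warm-start hypothesis of the previous paragraph at every stage. This coupled induction is the main obstacle: the warm-start error, the inner-loop accuracy, and the $\lambda$-drift feed into one another across stages, so the constants in $T_k$, $B$ and the step-sizes must be arranged so that the $k\ge1$ recursion stays bounded and the target accuracies hold \emph{uniformly} in $k$, and the self-referential appearance of $\zeta^2$ on both sides of the warm-start bound is the delicate point to reconcile.

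Finally, for the hyper-gradient estimate I would split
\[
\nabla_{\lambda}L^{\alpha}(u_{k+1},\omega_{k+1},\lambda_k;D_k)-\nabla\Gamma^{\alpha}(\lambda_k)
\]
into a stochastic part and a deterministic approximation part. The stochastic part is mean-zero given $\mathcal{F}_k$ with second moment at most $(\sigma_1^2+\alpha^2\sigma_2^2)/B$ by Assumption~\ref{assumpt: bounded_variance} and the choice of $B$. For the approximation part, the envelope (Danskin) identity yields $\nabla\Gamma^{\alpha}(\lambda_k)=\nabla_{\lambda}L^{\alpha}(u^{\ast}(\lambda_k),\omega_{\alpha}^{\ast}(\lambda_k),\lambda_k)$, valid because $u^{\ast}$ and $\omega_{\alpha}^{\ast}$ are the \emph{unique} maximizer/minimizer guaranteed by the strong concavity/convexity in Lemma~\ref{lem:onestage:v2}; gradient-Lipschitzness of $L^{\alpha}$ then bounds this part by a constant multiple of $\alpha^2\ell_{21}^2\,\E[\|u_{k+1}-u^{\ast}(\lambda_k)\|^2]+(\ell_{11}^2+\alpha^2\ell_{21}^2)\,\E[\|\omega_{k+1}-\omega_{\alpha}^{\ast}(\lambda_k)\|^2]$, which the two precision bounds render $O(\zeta^2)$. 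Adding the pieces and checking that the numerical constants $6$ in the target accuracies and the constant in $B$ were fixed to absorb the cross terms gives $\E[\|\nabla_{\lambda}L^{\alpha}(u_{k+1},\omega_{k+1},\lambda_k;D_k)-\nabla\Gamma^{\alpha}(\lambda_k)\|^2]\le\zeta^2$, as claimed.
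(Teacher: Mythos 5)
Your proposal is correct and follows essentially the same route as the paper's proof: the same $\tfrac{2}{\mu+L}$ contraction analysis for the decoupled $u$- and $\omega$-inner loops, the same warm-start induction over $k$ using the Lipschitz continuity of $u^{\ast}(\cdot)$ and $\omega_{\alpha}^{\ast}(\cdot)$ together with the $\lambda$-drift bound $\|\nabla_{\lambda}L^{\alpha}\|^2\lesssim 2\zeta^2+4\ell_{11}^2+4\kappa^2+(\sigma_1^2+\alpha^2\sigma_2^2)/B$, and the same final decomposition of the hyper-gradient error via Danskin and gradient-Lipschitzness into the $3(\ell_{11}^2+\alpha^2\ell_{21}^2)r_{k+1}+3\alpha^2\ell_{21}^2\delta_{k+1}$ bound. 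Your explicit separation of the mean-zero stochastic part in the last step is in fact slightly more careful than the paper's write-up, which bounds only the deterministic approximation term while stating the claim for the mini-batch gradient.
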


\begin{restatable}[]{thm}{thmonestagevtwo}\label{thm:onestage:v2}
Let $\alpha=\mathcal{O}(\kappa^3\epsilon^{-1})$ and  $\zeta=\mathcal{O}(\epsilon)$ and suppose all the conditions in Proportion~\ref{thm:onestage:prop} hold.
After $K=\mathcal{O}(\kappa^4\epsilon^{-2})$ steps, we can reach $ \frac{1}{K}\sum_{k=0}^{K-1}\E[\left\|\nabla \Gamma^{\alpha}(\lambda_k) \right\|^2] \leq \epsilon^2$.
\end{restatable}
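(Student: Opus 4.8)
The plan is to read the $\lambda$-iteration of Algorithm~\ref{alg:minmax:general} (with $N=1$) as an \emph{inexact} nonconvex gradient-descent scheme on the smooth hyper-potential $\Gamma^{\alpha}$, and to combine the smoothness estimate of Lemma~\ref{lem:onestage:v2}(iii) with the per-step gradient-error control furnished by Proposition~\ref{thm:onestage:prop}. Concretely, write $g_k := \nabla_{\lambda} L^{\alpha}(u_{k+1}, \omega_{k+1}, \lambda_k; D_k)$, so that the update reads $\lambda_{k+1} = \lambda_k - \eta^{\lambda} g_k$. Proposition~\ref{thm:onestage:prop} guarantees that, for every $k$, this inexact direction is close to the true hyper-gradient in mean square, $\E[\|g_k - \nabla \Gamma^{\alpha}(\lambda_k)\|^2] \le \zeta^2$; this single bound absorbs both the minibatch noise and the bias coming from the fact that $(u_{k+1}, \omega_{k+1})$ only approximately solve the inner max and the middle min. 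The whole argument is then the standard descent recursion for smooth functions with a biased, noisy gradient oracle.

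First I would invoke the $\ell_{\Gamma}$-smoothness of $\Gamma^{\alpha}$ (Lemma~\ref{lem:onestage:v2}(iii), where crucially $\ell_{\Gamma}$ is independent of $\alpha$ and of order $\kappa^3$ by the Hessian bound~\eqref{inequ:second:dif}) to write the descent inequality
\begin{align*}
\Gamma^{\alpha}(\lambda_{k+1}) \le \Gamma^{\alpha}(\lambda_k) - \eta^{\lambda}\langle \nabla \Gamma^{\alpha}(\lambda_k), g_k\rangle + \tfrac{\ell_{\Gamma}(\eta^{\lambda})^2}{2}\|g_k\|^2 .
\end{align*}
Using the polarization identity $\langle \nabla \Gamma^{\alpha}, g_k\rangle = \tfrac12\|\nabla\Gamma^{\alpha}\|^2 + \tfrac12\|g_k\|^2 - \tfrac12\|g_k - \nabla\Gamma^{\alpha}\|^2$ together with the choice $\eta^{\lambda} = 1/\ell_{\Gamma}$, the coefficient multiplying $\|g_k\|^2$ vanishes exactly, leaving
\begin{align*}
\Gamma^{\alpha}(\lambda_{k+1}) \le \Gamma^{\alpha}(\lambda_k) - \tfrac{\eta^{\lambda}}{2}\|\nabla \Gamma^{\alpha}(\lambda_k)\|^2 + \tfrac{\eta^{\lambda}}{2}\|g_k - \nabla \Gamma^{\alpha}(\lambda_k)\|^2 .
\end{align*}
Taking expectations and inserting the Proposition's bound $\E[\|g_k - \nabla \Gamma^{\alpha}(\lambda_k)\|^2] \le \zeta^2$ gives a one-step decrease of $\Gamma^{\alpha}$ up to the irreducible error $\tfrac{\eta^{\lambda}}{2}\zeta^2$.

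Next I would telescope this inequality over $k = 0, \dots, K-1$. Since $\Phi^{\alpha}(\omega,\lambda) = L_1(\omega,\lambda) + \alpha(L_2(\omega,\lambda) - \min_u L_2(u,\lambda)) \ge 0$, the potential $\Gamma^{\alpha}$ is bounded below, so the telescoped sum of decrements is controlled by the finite initial gap $\Gamma^{\alpha}(\lambda_0) - \inf_{\lambda}\Gamma^{\alpha}(\lambda)$. Dividing by $\eta^{\lambda}K/2$ yields
\begin{align*}
\frac{1}{K}\sum_{k=0}^{K-1}\E[\|\nabla \Gamma^{\alpha}(\lambda_k)\|^2] \le \frac{2\ell_{\Gamma}\big(\Gamma^{\alpha}(\lambda_0) - \inf_{\lambda}\Gamma^{\alpha}(\lambda)\big)}{K} + \zeta^2 .
\end{align*}
To drive the right-hand side below $\epsilon^2$ I would balance the two terms: choosing $\zeta = \mathcal{O}(\epsilon)$ makes the second term at most $\epsilon^2/2$, and then $K = \mathcal{O}(\kappa^4 \epsilon^{-2})$ makes the first term at most $\epsilon^2/2$ once $\ell_{\Gamma} = \mathcal{O}(\kappa^3)$ and the bounded initial gap are substituted. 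The prescription $\alpha = \mathcal{O}(\kappa^3\epsilon^{-1})$ is not needed to make $\Gamma^{\alpha}$ decrease; rather it is what ensures that the hypotheses of Proposition~\ref{thm:onestage:prop} deliver the target accuracy $\zeta = \mathcal{O}(\epsilon)$ uniformly in $k$, and—via~\eqref{inequ:one:dif}—that an $\epsilon$-stationary point of $\Gamma^{\alpha}$ is also $\mathcal{O}(\epsilon)$-stationary for the genuine bilevel objective $\mathcal{L}$.

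The routine part is the descent recursion above; the genuine difficulty is hidden in the hypothesis I am allowed to assume, namely that $\E[\|g_k - \nabla \Gamma^{\alpha}(\lambda_k)\|^2] \le \zeta^2$ holds \emph{simultaneously for all} $k \le K-1$. This uniform-in-$k$ control is exactly the delicate inductive content of Proposition~\ref{thm:onestage:prop}: the tracking errors $\E[\|u_{k+1}-u^{\ast}(\lambda_k)\|^2]$ and $\E[\|\omega_{k+1}-\omega^{\ast}_{\alpha}(\lambda_k)\|^2]$ must be kept small even though $\lambda_k$ drifts between outer steps, which is what forces the stated warm-start recursion, the logarithmic inner-loop length $T_k$, and the batch size $B$. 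Once that uniform bound is in hand, the main obstacle for the present theorem reduces to careful constant bookkeeping—tracking how $\ell_{\Gamma} = \mathcal{O}(\kappa^3)$, the initial suboptimality, and $\zeta$ combine into the advertised $K = \mathcal{O}(\kappa^4\epsilon^{-2})$ complexity.
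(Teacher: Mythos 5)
Your proposal follows essentially the same route as the paper's proof: the $\ell_{\Gamma}$-smooth descent inequality on $\Gamma^{\alpha}$, the polarization identity with $\eta^{\lambda}=1/\ell_{\Gamma}$ to cancel the $\|g_k\|^2$ term, telescoping, and inserting the uniform-in-$k$ tracking bound $\zeta^2$ from Proposition~\ref{thm:onestage:prop}. The one step you gloss over is that the initial gap $\Gamma^{\alpha}(\lambda_0)-\Gamma^{\alpha}_{\min}$ must be shown to be $\mathcal{O}(\kappa)$ \emph{uniformly in $\alpha$} (the paper does this via Lemma~\ref{lem:omega:ast} and the Lipschitz/smoothness of $L_1,L_2$); since $\alpha=\mathcal{O}(\kappa^3\epsilon^{-1})$ diverges as $\epsilon\to 0$, merely asserting the gap is finite would not by itself justify the $K=\mathcal{O}(\kappa^4\epsilon^{-2})$ count.
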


\begin{cor}
 Under the conditions of Theorem~\ref{thm:onestage:v2}, the whole gradient oracle complexity of Algorithm \ref{alg:minmax:general} to achieve an $\epsilon$-solution is $\mathcal{O}(3 BK + 3B K T_k) = \mathcal{O}(\epsilon^{-6}\log(1/\epsilon))$.
\end{cor}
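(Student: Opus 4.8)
The plan is purely to count gradient oracle calls and substitute the parameter choices from Theorem~\ref{thm:onestage:v2} and Proposition~\ref{thm:onestage:prop}; there is no genuine analytic content beyond bookkeeping on top of those two statements. First I would tally the per-iteration cost. With $N=1$, the algorithm runs $K$ outer iterations, and within iteration $k$ it performs $T_k$ inner steps, each computing $\nabla_u L^{\alpha}$ and $\nabla_\omega L^{\alpha}$ on a mini-batch of size $B$, followed by a single $\lambda$-update computing $\nabla_\lambda L^{\alpha}$ on a batch of size $B$. Counting the three gradient components $u,\omega,\lambda$ crudely with a common factor, the inner updates cost $\mathcal{O}(BKT_k)$ oracle calls and the $\lambda$-updates cost $\mathcal{O}(BK)$, which reproduces the stated $\mathcal{O}(3BK + 3BKT_k)$ with the $BKT_k$ term dominant.

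The next step is to express each of $B$, $K$, $T_k$ in terms of $\epsilon$, treating $\kappa$ and the variances $\sigma_1,\sigma_2$ as problem constants. From Theorem~\ref{thm:onestage:v2} we have $\alpha=\mathcal{O}(\kappa^3\epsilon^{-1})$, $\zeta=\mathcal{O}(\epsilon)$, and $K=\mathcal{O}(\kappa^4\epsilon^{-2})=\mathcal{O}(\epsilon^{-2})$. Substituting into the batch-size formula $B=\mathcal{O}\big(\kappa(1+\tfrac{1}{2\alpha})(\sigma_1^2+\alpha^2\sigma_2^2)/\zeta^2\big)$ from Proposition~\ref{thm:onestage:prop}, the $\tfrac{1}{2\alpha}$ factor is bounded and the $\alpha^2\sigma_2^2$ term dominates, so $B=\mathcal{O}(\alpha^2/\zeta^2)=\mathcal{O}(\epsilon^{-2}/\epsilon^{2})=\mathcal{O}(\epsilon^{-4})$.

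For $T_k$, I would substitute into the bound $T_k=\mathcal{O}\big(\kappa\ln(\,(\ell_{11}^2+\alpha^2\ell_{21}^2)\max\{\E[\delta_k^2],\E[r_k^2]\}/\zeta^2)\big)$ and observe that the argument of the logarithm is a polynomial in $\kappa$, $\alpha$, and $1/\zeta$, hence a fixed power of $1/\epsilon$ up to $\kappa$-factors. Crucially, Proposition~\ref{thm:onestage:prop} already bounds $\max\{\E[\delta_k^2],\E[r_k^2]\}$ uniformly in $k$, so $T_k$ is uniformly $\mathcal{O}(\kappa\log(1/\epsilon))=\mathcal{O}(\log(1/\epsilon))$ rather than growing with $k$; this uniformity is what lets the single symbol $T_k$ stand for the inner count in the complexity expression.

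Putting these together gives $BKT_k=\mathcal{O}(\epsilon^{-4}\cdot\epsilon^{-2}\cdot\log(1/\epsilon))=\mathcal{O}(\epsilon^{-6}\log(1/\epsilon))$, which absorbs the lower-order term $BK=\mathcal{O}(\epsilon^{-6})$ and yields the claimed total. The only point I would check with real care is that the logarithmic factor remains $\mathcal{O}(\log(1/\epsilon))$ after substitution, i.e.\ that no quantity inside the $\ln$ blows up faster than polynomially in $1/\epsilon$; this is exactly what the uniform-in-$k$ error bound of Proposition~\ref{thm:onestage:prop} guarantees, so I regard it as the main (and only) obstacle to a clean statement.
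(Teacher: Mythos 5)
Your proposal is correct and follows essentially the same route as the paper, which obtains this corollary by direct substitution of $\alpha=\mathcal{O}(\kappa^3\epsilon^{-1})$, $\zeta=\mathcal{O}(\epsilon)$, $K=\mathcal{O}(\kappa^4\epsilon^{-2})$ into the expressions for $B$ and $T_k$ from Proposition~\ref{thm:onestage:prop}, giving $B=\mathcal{O}(\epsilon^{-4})$, $T_k=\mathcal{O}(\log(1/\epsilon))$ uniformly in $k$, and hence $BKT_k=\mathcal{O}(\epsilon^{-6}\log(1/\epsilon))$. Your attention to the uniformity of $T_k$ via the uniform-in-$k$ bound on $\max\{\E[\delta_k^2],\E[r_k^2]\}$ is exactly the point the paper's induction in the appendix establishes.
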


\begin{restatable}[]{thm}{thmonestagevtwo:sigma2zero}\label{thm:onestage:v2:sigmazero}
Suppose all the conditions in \ref{thm:onestage:prop} hold and consider Assumption~\ref{assumpt: bounded_variance} with $\sigma_2=0$. Then we let $\alpha = \mathcal{O}(\kappa^3\epsilon^{-1})$ and $\zeta = \mathcal{O}(\epsilon^{-1})$, then after $K = \kappa^4 \epsilon^{-2}$ steps and $B = \kappa \epsilon^{-2}$, we have 
the total gradient oracle complexity is $ \mathcal{O}(BKT_k) = \mathcal{O}(\kappa^6 \epsilon^{-4}\log(1/\epsilon))$.
\end{restatable}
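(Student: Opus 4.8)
The plan is to treat this statement as a complexity refinement of Theorem~\ref{thm:onestage:v2} in which the only structural change is that the inner gradient oracle is now exact ($\sigma_2 = 0$). I would first observe that the convergence conclusion itself is inherited unchanged: setting $\sigma_2 = 0$ can only decrease the variance proxy $\sigma_1^2 + \alpha^2 \sigma_2^2$ that appears in every error term of Proposition~\ref{thm:onestage:prop}, so with the same choices $\alpha = \mathcal{O}(\kappa^3 \epsilon^{-1})$ and the same accuracy target $\zeta = \mathcal{O}(\epsilon)$ as in Theorem~\ref{thm:onestage:v2}, the same $K = \mathcal{O}(\kappa^4 \epsilon^{-2})$ outer steps still drive $\frac{1}{K}\sum_{k=0}^{K-1} \E[\|\nabla \Gamma^{\alpha}(\lambda_k)\|^2] \le \epsilon^2$. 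Concretely, I would re-run the telescoping descent on the $\ell_{\Gamma}$-smooth surrogate $\Gamma^{\alpha}$ (smoothness from Lemma~\ref{lem:onestage:v2}), feeding in the gradient-estimation bound $\E[\|\nabla_{\lambda} L^{\alpha}(u_{k+1}, \omega_{k+1}, \lambda_k; D_k) - \nabla \Gamma^{\alpha}(\lambda_k)\|^2] \le \zeta^2$ from Proposition~\ref{thm:onestage:prop}; nothing in that step degrades when $\sigma_2 = 0$. Hence the novelty lies entirely in the per-iteration oracle cost.

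The heart of the argument is recomputing the batch size $B$ prescribed by Proposition~\ref{thm:onestage:prop}, namely $B = \frac{12\kappa (\frac{1}{2\alpha}+1)(\sigma_1^2 + \alpha^2 \sigma_2^2)}{\zeta^2}$. With $\sigma_2 = 0$ the term $\alpha^2 \sigma_2^2$ vanishes, and this is exactly the dominant term in the $\sigma_2 > 0$ regime, since $\alpha = \mathcal{O}(\kappa^3 \epsilon^{-1})$ forces $\alpha^2 \sigma_2^2 / \zeta^2 = \Omega(\kappa^6 \epsilon^{-4})$. What remains is $B = \mathcal{O}(\kappa \sigma_1^2 / \zeta^2)$, and at the scale $\zeta = \mathcal{O}(\epsilon)$ this collapses to $B = \mathcal{O}(\kappa \epsilon^{-2})$. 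I would also recheck the second prescription of Proposition~\ref{thm:onestage:prop}, the recursive bound on $\max\{\E[\delta_k^2], \E[r_k^2]\}$ that controls the inner errors $\|u_k - u^{\ast}(\lambda_k)\|$ and $\|\omega_k - \omega_{\alpha}^{\ast}(\lambda_k)\|$: the $\sigma_2 = 0$ substitution likewise removes the $\alpha^2 \sigma_2^2 / B$ injection there, so the recursion still closes (with slack) under the reduced $B$.

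For the inner loop length I would use $T_k \ge \frac{3\kappa-1}{4} \ln\!\big(\frac{12(\ell_{11}^2 + \alpha^2 \ell_{21}^2) \max\{\E[\delta_k^2], \E[r_k^2]\}}{\zeta^2}\big)$ from Proposition~\ref{thm:onestage:prop}; since $\alpha$ and $1/\zeta$ are polynomial in $\kappa$ and $1/\epsilon$, the logarithm is $\mathcal{O}(\log(1/\epsilon))$ and so $T_k = \mathcal{O}(\kappa \log(1/\epsilon))$. Each outer iteration costs one batch for the $\lambda$-update and $T_k$ batches for the inner $(u,\omega)$ ascent--descent, i.e. $\mathcal{O}(B T_k)$ oracle calls; summing over the $K$ outer iterations gives $\mathcal{O}(B K T_k)$. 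Substituting $B = \mathcal{O}(\kappa \epsilon^{-2})$, $K = \mathcal{O}(\kappa^4 \epsilon^{-2})$, and $T_k = \mathcal{O}(\kappa \log(1/\epsilon))$ yields the claimed $\mathcal{O}(\kappa^6 \epsilon^{-4} \log(1/\epsilon))$, improving the $\epsilon$-dependence from $\epsilon^{-6}$ to $\epsilon^{-4}$ relative to the general case.

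The main obstacle I anticipate is not the arithmetic but verifying that shrinking $B$ all the way to $\mathcal{O}(\kappa\epsilon^{-2})$ leaves the coupled inner-error recursion of Proposition~\ref{thm:onestage:prop} contractive. The quantities $\delta_k, r_k$ propagate across outer iterations through the $\lambda$-update, and one must confirm that the contraction supplied by the two-time-scale step-sizes together with the $\mu_2\alpha$-strong concavity in $u$ and $\frac{\mu_2\alpha}{2}$-strong convexity in $\omega$ (Lemma~\ref{lem:onestage:v2}) still dominates the noise now that only the $L_1$-variance $\sigma_1^2/B$ is injected. This is precisely where $\sigma_2 = 0$ is indispensable: it is the removal of the $\alpha^2\sigma_2^2$ coupling, rather than merely a smaller constant, that permits the order-of-magnitude reduction in $B$ and hence in the overall gradient oracle complexity.
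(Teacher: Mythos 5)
Your proposal is correct and takes essentially the same route as the paper: the paper's own argument simply substitutes $\sigma_2=0$ into the prescriptions of Proposition~\ref{thm:onestage:prop}, which drops the $\alpha^2\sigma_2^2$ term and reduces the batch size to $B=\mathcal{O}(\kappa\sigma_1^2/\zeta^2)=\mathcal{O}(\kappa\epsilon^{-2})$, then multiplies $BKT_k$ with $T_k=\mathcal{O}(\kappa\log(1/\epsilon))$ and $K=\mathcal{O}(\kappa^4\epsilon^{-2})$ to obtain $\mathcal{O}(\kappa^6\epsilon^{-4}\log(1/\epsilon))$. You also correctly work at the accuracy scale $\zeta=\mathcal{O}(\epsilon)$, which is what the argument actually requires despite the $\zeta=\mathcal{O}(\epsilon^{-1})$ written in the theorem statement.
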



\subsection{Multi-stage Gradient Descent Ascent Algorithm} \label{subsec:multistage}


\begin{restatable}[]{prop}{proponestagevtwo}\label{thm:multistage:prop}
 Under Assumptions~\ref{assumpt:v2} and \ref{assumpt: bounded_variance},
we consider Algorithm \ref{alg:minmax:general} with multi-stage $N>1$ with the multiplier $\alpha_i > 2\ell_{11}/\mu_2$, and select the step-sizes  as:
\begin{align*}
  \eta^{\lambda}=\frac{1}{\ell_{\Gamma}}; \quad  \eta^{u}=\frac{2}{\alpha_i(\mu_2 + \ell_{21})}; \quad \eta^{\omega} = \frac{4}{\alpha_i(\mu_2 + 3\ell_{21})}
\end{align*}
Then at each stage $i$, suppose that
\begin{subequations}
\begin{align}
T_k^i  & \geq \frac{3\kappa-1}{4}\ln\left(\frac{12(\ell_{11}^2 + \alpha_i^2\ell_{21}^2)\max \left\lbrace \E[(\delta_k^i)^2], \E[(r_k^i)^2]\right\rbrace}{\zeta_i^2}\right) \label{eq: Tk_max_multistage_final}\\
B_k^i & =  \frac{12\kappa \left(\frac{1}{2\alpha_i} + 1\right)(\sigma_1^2 + \alpha_i^2\sigma_2^2)}{\zeta_i^2} \label{eq: B_max_multistage_final}
\end{align}
\end{subequations}
where
$$\max \left\lbrace \E[(\delta_k^i)^2], \E[(r_k^i)^2]\right\rbrace \leq  \left\{ 
\begin{aligned}
  &\frac{\zeta_i^2}{3\alpha_i^2\ell_{21}^2} + 2\ell_{\omega^{\ast}}^2 (\eta_i^{\lambda})^2 \left(2\zeta_i^2 + 4\ell_{11}^2 + 4\kappa^2 + \frac{\sigma_1^2 + \alpha_i^2\sigma_2^2}{B_i} \right),  &  k \geq 1  \\
& \max \left\lbrace \delta_0^0, r_0^0, 2\Delta_0(\alpha_0, \zeta_0, B_0)   + \frac{8\ell_{10}^2}{\mu_2^2\alpha_0^2}\right\rbrace   & k=0
\end{aligned}
\right.
$$
we can achieve that $\left\| \nabla \Gamma^{\alpha_i}(\lambda_k^i) - \nabla_{\lambda} L^{\alpha}(u_{k+1}^i, \omega_{k+1}^i, \lambda_k^i)\right\|^2 \leq \zeta_i^2$ at each stage $i$. 
\end{restatable}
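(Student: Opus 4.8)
The plan is to reduce the multi-stage statement to the one-stage analysis already established in Proposition~\ref{thm:onestage:prop}, applied stage by stage, with the only genuinely new ingredient being the accounting of error across stage transitions. Within a single stage $i$ the multiplier is frozen at $\alpha_i$, so Algorithm~\ref{alg:minmax:general} restricted to that stage is exactly the one-stage GDA iteration analyzed before, but with $\alpha$ replaced by $\alpha_i$. Since we assume $\alpha_i > 2\ell_{11}/\mu_2$ at every stage, Lemma~\ref{lem:onestage:v2} applies verbatim with $\alpha \to \alpha_i$: $L^{\alpha_i}$ is $\tfrac{5}{2}\alpha_i\ell_{21}$-smooth, $\mu_2\alpha_i$-strongly concave in $u$ and $\tfrac{\mu_2\alpha_i}{2}$-strongly convex in $\omega$, and $\Gamma^{\alpha_i}$, $\omega_{\alpha_i}^{\ast}$ inherit their regularity. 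Consequently the per-iteration contraction recursions for the tracking errors $\delta_k^i := \|u_k^i - u^{\ast}(\lambda_k^i)\|$ and $r_k^i := \|\omega_k^i - \omega_{\alpha_i}^{\ast}(\lambda_k^i)\|$ derived in the proof of Proposition~\ref{thm:onestage:prop} hold unchanged inside stage $i$, and deliver the desired gradient-estimation bound as soon as the $T_k^i, B_k^i$ of \eqref{eq: Tk_max_multistage_final}--\eqref{eq: B_max_multistage_final} are used and the initialization error $\max\{\E[(\delta_0^i)^2], \E[(r_0^i)^2]\}$ is under control.

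The crux is therefore to bound the warm-start error at the beginning of each stage $i\ge 1$, where the iterate inherited from stage $i-1$ was tracking $\omega_{\alpha_{i-1}}^{\ast}$ but must now track the shifted target $\omega_{\alpha_i}^{\ast}$. First I would record how $\omega_\alpha^{\ast}(\lambda)$ depends on $\alpha$. Because maximizing $L^\alpha$ over $u$ affects only a $\lambda$-dependent constant, $\Phi^\alpha(\omega,\lambda)=L_1(\omega,\lambda)+\alpha\bigl(L_2(\omega,\lambda)-L_2(u^{\ast}(\lambda),\lambda)\bigr)$, so $\omega_\alpha^{\ast}(\lambda)=\arg\min_\omega\{L_1(\omega,\lambda)+\alpha L_2(\omega,\lambda)\}$. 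Its first-order condition $\nabla_\omega L_1(\omega_\alpha^{\ast},\lambda)+\alpha\nabla_\omega L_2(\omega_\alpha^{\ast},\lambda)=0$ together with the $\ell_{10}$-Lipschitzness of $L_1$ gives $\|\nabla_\omega L_2(\omega_\alpha^{\ast},\lambda)\|\le \ell_{10}/\alpha$, and since $L_2$ is $\mu_2$-strongly convex with minimizer $u^{\ast}(\lambda)$, this yields the key perturbation bound
\begin{align*}
\bigl\|\omega_\alpha^{\ast}(\lambda)-u^{\ast}(\lambda)\bigr\|\le \frac{\ell_{10}}{\mu_2\alpha}.
\end{align*}
A triangle inequality across two consecutive multipliers $\alpha_{i-1}<\alpha_i$ then gives $\|\omega_{\alpha_{i-1}}^{\ast}-\omega_{\alpha_i}^{\ast}\|^2\le 4\ell_{10}^2/(\mu_2^2\alpha_{i-1}^2)$; note that $u^{\ast}(\lambda)=\arg\min_u L_2(u,\lambda)$ is independent of $\alpha$, so no analogous drift occurs for the $u$-target and $\delta_0^i$ carries over directly.

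With this bound in hand I would close the argument by induction over stages. The base case $i=0$ uses the supplied initial errors $\delta_0^0, r_0^0$. For the inductive step, let $\Delta_{i-1}$ denote the terminal tracking error $\E[(r_{K_{i-1}}^{i-1})^2]$ of stage $i-1$ measured against $\omega_{\alpha_{i-1}}^{\ast}$ (abbreviated $\Delta_0(\alpha_0,\zeta_0,B_0)$ for the first transition). Since $(u_0^i,\omega_0^i,\lambda_0^i)=(u_{K_{i-1}}^{i-1},\omega_{K_{i-1}}^{i-1},\lambda_{K_{i-1}}^{i-1})$, splitting $r_0^i$ with $\|a+b\|^2\le 2\|a\|^2+2\|b\|^2$ gives
\begin{align*}
\E[(r_0^i)^2] \le 2\,\E\bigl[\|\omega_0^i-\omega_{\alpha_{i-1}}^{\ast}(\lambda_0^i)\|^2\bigr] + 2\bigl\|\omega_{\alpha_{i-1}}^{\ast}-\omega_{\alpha_i}^{\ast}\bigr\|^2 \le 2\Delta_{i-1}+\frac{8\ell_{10}^2}{\mu_2^2\alpha_{i-1}^2},
\end{align*}
which is precisely the $k=0$ bound asserted in the statement. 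Feeding this controlled initialization into the one-stage recursion of Proposition~\ref{thm:onestage:prop} with $(\alpha,\zeta)=(\alpha_i,\zeta_i)$ closes the induction and establishes $\|\nabla\Gamma^{\alpha_i}(\lambda_k^i) - \nabla_\lambda L^{\alpha}(u_{k+1}^i, \omega_{k+1}^i, \lambda_k^i)\|^2 \le \zeta_i^2$ at every $k$ of stage $i$.

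The main obstacle I anticipate is the stage-transition bookkeeping under worsening conditioning: because $L^{\alpha_i}$ is only $\tfrac{5}{2}\alpha_i\ell_{21}$-smooth, the step-sizes $\eta_i^u,\eta_i^\omega$ shrink like $1/\alpha_i$, so one must verify that the prescribed inner-loop length $T_k^i$ and batch size $B_k^i$ still drive the $(u,\omega)$ subproblem to within the required tolerance despite the larger multiplier, and that the warm-start error $2\Delta_{i-1}+8\ell_{10}^2/(\mu_2^2\alpha_{i-1}^2)$ remains compatible with the target $\zeta_i$. The saving grace is that the drift term decays as $1/\alpha_{i-1}^2$, so the warm start actually improves as $\alpha$ grows; the delicate point is to choose the tolerances $\{\zeta_i\}$ and the increments $\{\Delta_\alpha^i\}$ so that this decay outpaces the per-stage error accumulation, which is exactly what the stated $k=0$ condition is designed to encode.
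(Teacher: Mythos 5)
Your proposal is correct and follows essentially the same route as the paper: reduce each stage to the one-stage analysis with $\alpha\to\alpha_i$, then control the warm-start error by induction, using that $u^{\ast}(\lambda)$ does not drift with $\alpha$ while $\omega_{\alpha}^{\ast}$ drifts by $\mathcal{O}(\ell_{10}/(\mu_2\alpha))$ between consecutive stages. The only (immaterial) difference is that you bound $\|\omega_{\alpha_{i-1}}^{\ast}-\omega_{\alpha_i}^{\ast}\|$ by routing through the common anchor $u^{\ast}(\lambda)$ via Lemma~\ref{lem:omega:ast} and a triangle inequality, whereas the paper compares the two optima directly through the optimality of $\omega_{\alpha_i}^{\ast}$ in $L_1+\alpha_i L_2$ combined with the strong convexity of $L_2$ and the Lipschitz continuity of $L_1$; both yield the same $8\ell_{10}^2/(\mu_2^2\alpha_0^2)$ term in the $k=0$ bound.
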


\begin{restatable}[]{thm}{thmonestagesto}\label{thm:onestage:sto}
Under Assumptions~\ref{assumpt:v2} and \ref{assumpt: bounded_variance}, we consider Algorithm \ref{alg:minmax:general} with multi-stage $N>1$ with the multiplier $\alpha_i > 2\ell_{11}/\mu_2$, then
we let $\alpha_i = \alpha_0 \tau^i, \zeta_i=\tau^{-i}$, $K_i=\tau^{2i}$, $N=\log_{\tau}(1/\epsilon)$, and $T_k^i =\mathcal{O}(\kappa i)$ and $B_k^i=\tau^{4i}$, then  to obtain an $\epsilon$-stationary point, we need at least
the total number of iterations $\Sigma = \sum_{i=0}^{N} \sum_{k=1}^{K_i} T_k^i = \mathcal{O}\left(\epsilon^{-2}\right)$ and the total gradient oracle complexity is $\mathcal{O}\left(\sum_{i=0}^N \sum_{k=0}^{K_i} B_k^i\right) = \mathcal{O}(\epsilon^{-6}\log(1/\epsilon)) $.
    
\end{restatable}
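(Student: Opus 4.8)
The plan is to reduce, stage by stage, the $\lambda$-dynamics of Algorithm~\ref{alg:minmax:general} to inexact gradient descent on the fixed surrogate $\Gamma^{\alpha_i}$, and then to stitch the stages together through the equivalence estimates of Theorem~\ref{thm:equiv}, which control both $|\mathcal{L}(\lambda)-\Gamma^{\alpha_i}(\lambda)|$ and $\|\nabla\mathcal{L}(\lambda)-\nabla\Gamma^{\alpha_i}(\lambda)\|$ by $\mathcal{O}(\kappa^2/\alpha_i)$ and $\mathcal{O}(\kappa^3/\alpha_i)$ respectively. First I would fix a stage $i$ and appeal to Proposition~\ref{thm:multistage:prop}: with the prescribed inner length $T_k^i=\mathcal{O}(\kappa i)$ and batch size $B_k^i=\tau^{4i}$, the update direction $\nabla_\lambda L^{\alpha_i}(u_{k+1}^i,\omega_{k+1}^i,\lambda_k^i;D_{i,k})$ approximates the exact surrogate gradient $\nabla\Gamma^{\alpha_i}(\lambda_k^i)$ to mean-square accuracy $\zeta_i^2$ for every $k$. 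This collapses the whole inner machinery (the $u$-ascent, the $\omega$-descent, and the stochastic noise) into a single inexact-gradient oracle, so that the outer loop is exactly GD with step $\eta^\lambda=1/\ell_\Gamma$ on the $\ell_\Gamma$-smooth function $\Gamma^{\alpha_i}$ from Lemma~\ref{lem:onestage:v2}(iii). The key structural fact I would exploit here is that $\ell_\Gamma$ is independent of $\alpha$, so the per-stage smoothness does not degrade as the penalty grows.

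Applying the standard descent lemma for inexact GD on an $\ell_\Gamma$-smooth function and telescoping over the $K_i$ iterations of stage $i$ yields a bound of the schematic form
\begin{align}
\frac{1}{K_i}\sum_{k=0}^{K_i-1}\E\!\left[\left\|\nabla\Gamma^{\alpha_i}(\lambda_k^i)\right\|^2\right] \leq \frac{2\ell_\Gamma\left(\E[\Gamma^{\alpha_i}(\lambda_0^i)]-\min_\lambda\Gamma^{\alpha_i}(\lambda)\right)}{K_i}+\mathcal{O}(\zeta_i^2),
\end{align}
which is essentially the one-stage statement of Theorem~\ref{thm:onestage:v2} adapted to fixed $\alpha_i$. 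The warm start $(u_0^{i+1},\omega_0^{i+1},\lambda_0^{i+1})=(u_{K_i}^i,\omega_{K_i}^i,\lambda_{K_i}^i)$ then lets me convert the surrogate function-value gap into the true gap: by the zeroth-order estimate of Theorem~\ref{thm:equiv}, $\Gamma^{\alpha_i}(\lambda_0^i)-\min_\lambda\Gamma^{\alpha_i}(\lambda)\le \mathcal{L}(\lambda_0^i)-\min_\lambda\mathcal{L}(\lambda)+\mathcal{O}(\kappa^2/\alpha_i)$, and since $\mathcal{L}$ is driven down along the trajectory the right-hand side stays uniformly bounded across stages. At the final stage $N$ I would set $\tau^N=1/\epsilon$, so that $\alpha_N=\alpha_0/\epsilon$, $\zeta_N=\epsilon$ and $K_N=\epsilon^{-2}$, making the per-stage bound $\mathcal{O}(1/K_N+\zeta_N^2)=\mathcal{O}(\epsilon^2)$; combining it with the first-order bias $\|\nabla\mathcal{L}-\nabla\Gamma^{\alpha_N}\|^2=\mathcal{O}(\kappa^6/\alpha_N^2)=\mathcal{O}(\kappa^6\epsilon^2)$ through $\|\nabla\mathcal{L}\|^2\le 2\|\nabla\Gamma^{\alpha_N}\|^2+2\|\nabla\mathcal{L}-\nabla\Gamma^{\alpha_N}\|^2$ gives $\frac{1}{K_N}\sum_k\E[\|\nabla\mathcal{L}(\lambda_k^N)\|^2]=\mathcal{O}(\epsilon^2)$, i.e. an $\epsilon$-stationary point.

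The complexity bookkeeping is then pure geometric summation. The iteration count is $\Sigma=\sum_{i=0}^N\sum_{k=1}^{K_i}T_k^i=\mathcal{O}(\kappa)\sum_{i=0}^N i\,\tau^{2i}$, which is dominated by the last stage and scales as $\tau^{2N}=\epsilon^{-2}$ (up to the $\kappa$ and $\log_\tau(1/\epsilon)$ factors coming from $i=N$). The gradient-oracle count multiplies each inner step by its batch size, giving $\sum_{i,k}B_k^i\,T_k^i=\mathcal{O}(\kappa)\sum_{i=0}^N i\,\tau^{6i}=\mathcal{O}(N\tau^{6N})=\mathcal{O}(\epsilon^{-6}\log(1/\epsilon))$, where the logarithmic factor is exactly the $i=N$ term of $T_k^i=\mathcal{O}(\kappa i)$.

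I expect the main obstacle to be the cross-stage coupling rather than any single-stage estimate. Because the surrogate $\Gamma^{\alpha_i}$ itself changes with the stage, I must ensure that (a) the warm-started function-value gap at the start of stage $i+1$ does not inflate as $\alpha$ increases, which requires the $\mathcal{O}(1/\alpha_i)$ equivalence bounds to dominate the switch cost $|\Gamma^{\alpha_{i+1}}-\Gamma^{\alpha_i}|$, and (b) the three competing error sources at each stage, namely the optimization residual $1/K_i$, the inexactness $\zeta_i$, and the surrogate bias $\mathcal{O}(\kappa^3/\alpha_i)$, are balanced by the schedule $\alpha_i=\alpha_0\tau^i$, $\zeta_i=\tau^{-i}$, $K_i=\tau^{2i}$ so that they all collapse to $\mathcal{O}(\epsilon^2)$ simultaneously at $i=N$. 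Verifying that the $\alpha$-uniform constant $\ell_\Gamma$ keeps the step size $\eta^\lambda=1/\ell_\Gamma$ valid across every stage is the technical linchpin that makes this telescoping sound.
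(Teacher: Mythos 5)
Your proposal is correct and follows essentially the same route as the paper: per-stage reduction to inexact gradient descent on the $\ell_\Gamma$-smooth surrogate $\Gamma^{\alpha_i}$ via Proposition~\ref{thm:multistage:prop}, control of the warm-start gap across stages, conversion to $\nabla\mathcal{L}$ through the $\mathcal{O}(\kappa^3/\alpha_i)$ bias of Theorem~\ref{thm:equiv}, and geometric summation under the schedule $\alpha_i=\alpha_0\tau^i$, $\zeta_i=\tau^{-i}$, $K_i=\tau^{2i}$, $B_k^i=\tau^{4i}$. The only cosmetic difference is that you certify stationarity at the final stage while the paper averages $\E\|\nabla\Gamma^{\alpha_i}(\lambda_k^i)\|^2$ over all stages weighted by $K_i$ (with the last stage dominating), and your oracle count $\sum_{i,k}B_k^iT_k^i=\mathcal{O}(\kappa\,\epsilon^{-6}\log(1/\epsilon))$ matches the paper's stated bound.
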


\section{Numerical Experiments}\label{sec:numerical}

We evaluate the practical performance of the proposed minimax framework for solving bilevel problems. We start with the experiments on a linear model with logistic regression for the deterministic version of \replaced{MinimaxOPT}{the minimax algorithms}, then further explore \replaced{its stochastic counterpart}{the stochastic version of the proposed minimax algorithm} in deep neural networks and a hyper-data cleaning task\deleted{ which are the representative instances of bilevel optimization in machine learning}.

\subsection{Hyper-parameter Optimization for Logistic Regression with $\ell_2$ Regularization} \label{sec:numerical:1}

 The first problem is estimating hyperparameters of $\ell_2$ regularized logistic regression problems, in other words, hyperparameter optimization for weight decay:
 
 \deleted{The loss function of the inner problem is the per-parameter regularized logistic loss function. The dataset is
partitioned into two sets: a train set $S_{\text{train}} = \left\lbrace a_i, b_i \right\rbrace_{i=1}^n$ and a validation set $S_{\text{val}} = \left\lbrace a_i, b_i \right\rbrace_{i=1}^m$ where $a_i, b_i$ are denoted the input feature and true label. In the setting of classification, we use logistic regression as the validation loss of the outer problem, formulated as }
\begin{align*}
    \min_{\lambda \in \R_{+}^{d}} \,\,\, & \quad L_1(u^{\ast}(\lambda), \lambda) = \sum_{i \in S_{\text{val}}} \log\left(1+\exp(-b_ia_i^{T} u^{\ast}(\lambda))\right) \\
  \text{s.t.} \,\,\, &  \quad u^{\ast}(\lambda) = \arg\min_{u \in \R^d}  L_2(u, \lambda):= \sum_{j \in S_{\text{train}}} \log\left(1+\exp(-b_ja_j^{T}u)\right) + \frac{1}{2} u^{T}\text{diag}(\lambda)u.
\end{align*}
Here $S_{\text{train}} = \left\lbrace a_i, b_i \right\rbrace_{i=1}^n$ and $S_{\text{val}} = \left\lbrace a_i, b_i \right\rbrace_{i=1}^m$ reprenset the training and validation set respectively, with $a_i, b_i$ being the features and labels.
This setting is similar to section 3.1 of \citep{grazzi2020iteration}, where we test MinimaxOPT in Algorithm \ref{alg:minmax:general} and compare with three popular bilevel methods: (1) \emph{Reverse}-mode differentiation, which computes the hyper-gradient by $K_0$-truncated back-propagation~\citep{iterative_der,shaban2019truncated}; (2) \emph{Fixed-point} method, which computes the hyper-gradient via the fixed point method~\citep{grazzi2020iteration}~\replaced{and}{;} (3) \emph{Conjugate gradient} (CG) method, which solves the implicit linear system by the conjugate gradient method~\citep{grazzi2020iteration}.

We first conduct the experiment on a {\bf synthesis} generated dataset ($n+m=1000, d=20$) where half of the dataset is used for training and the rest for validation. \deleted{For the test bilevel algorithms, gradient descent is used as the optimizer for inner and outer problems. The details of all test methods are given in Appendix~\ref{append:logistic}. The results are reported in Figure~\ref{fig:syn}. The gradient calls mean the number of times the entire data has been called. We observe that the proposed method is the fastest to reduce the validation loss at the lowest level. }
\added{To make the comparison fair, we use gradient descent for all methods, which requires one full pass over the dataset. As observed in Figure~\ref{fig:syn}, MinimaxOPT significantly outperforms other methods with a much smaller number of gradient calls.  More experimental details are available in Appendix~\ref{append:logistic}.}
\begin{figure}[t]
     \centering
     \vskip -0.1in
\includegraphics[width=0.3\textwidth,height=1.6in]{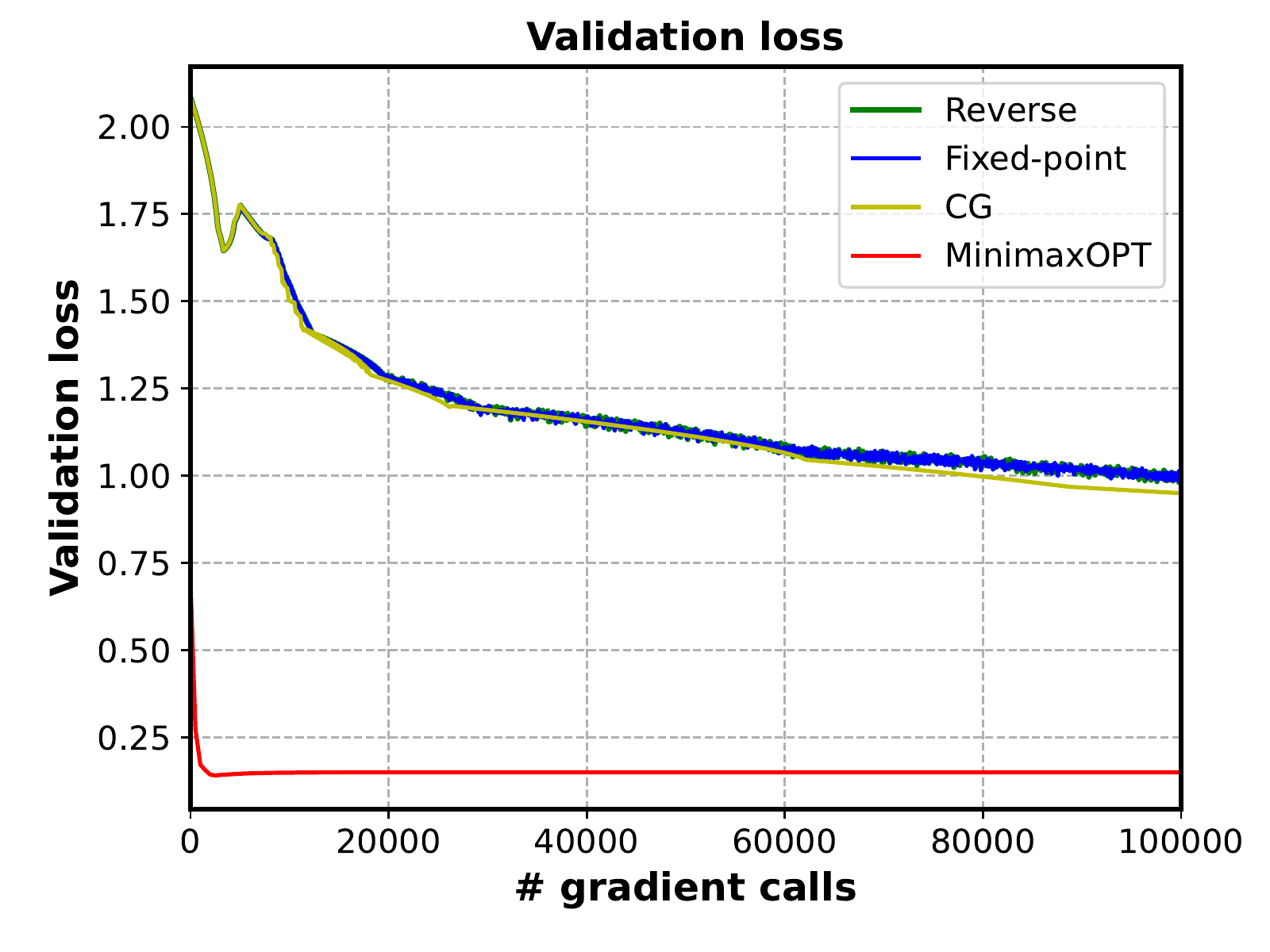}
\includegraphics[width=0.3\textwidth,height=1.6in]{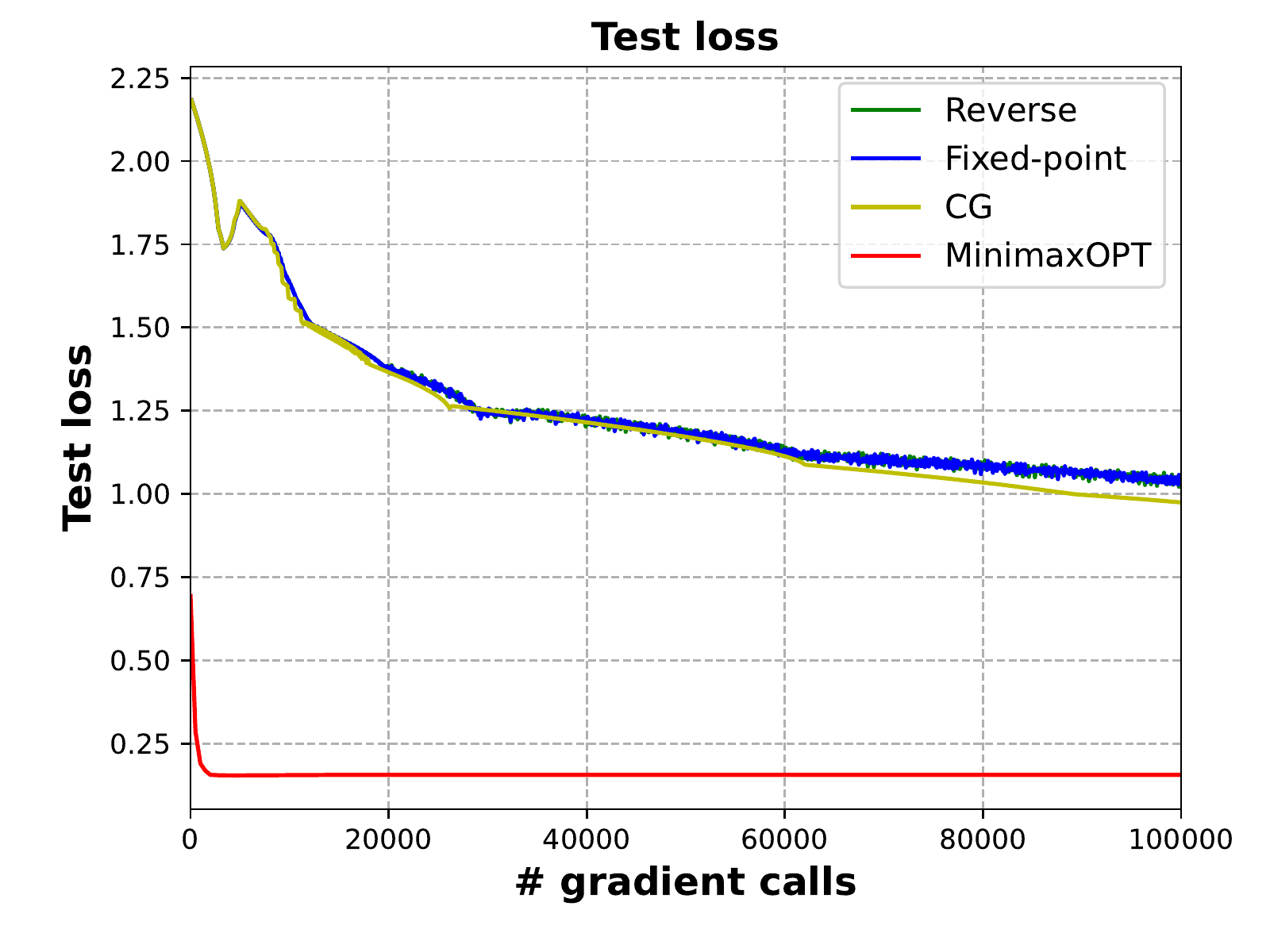}
\includegraphics[width=0.3\textwidth,height=1.6in]{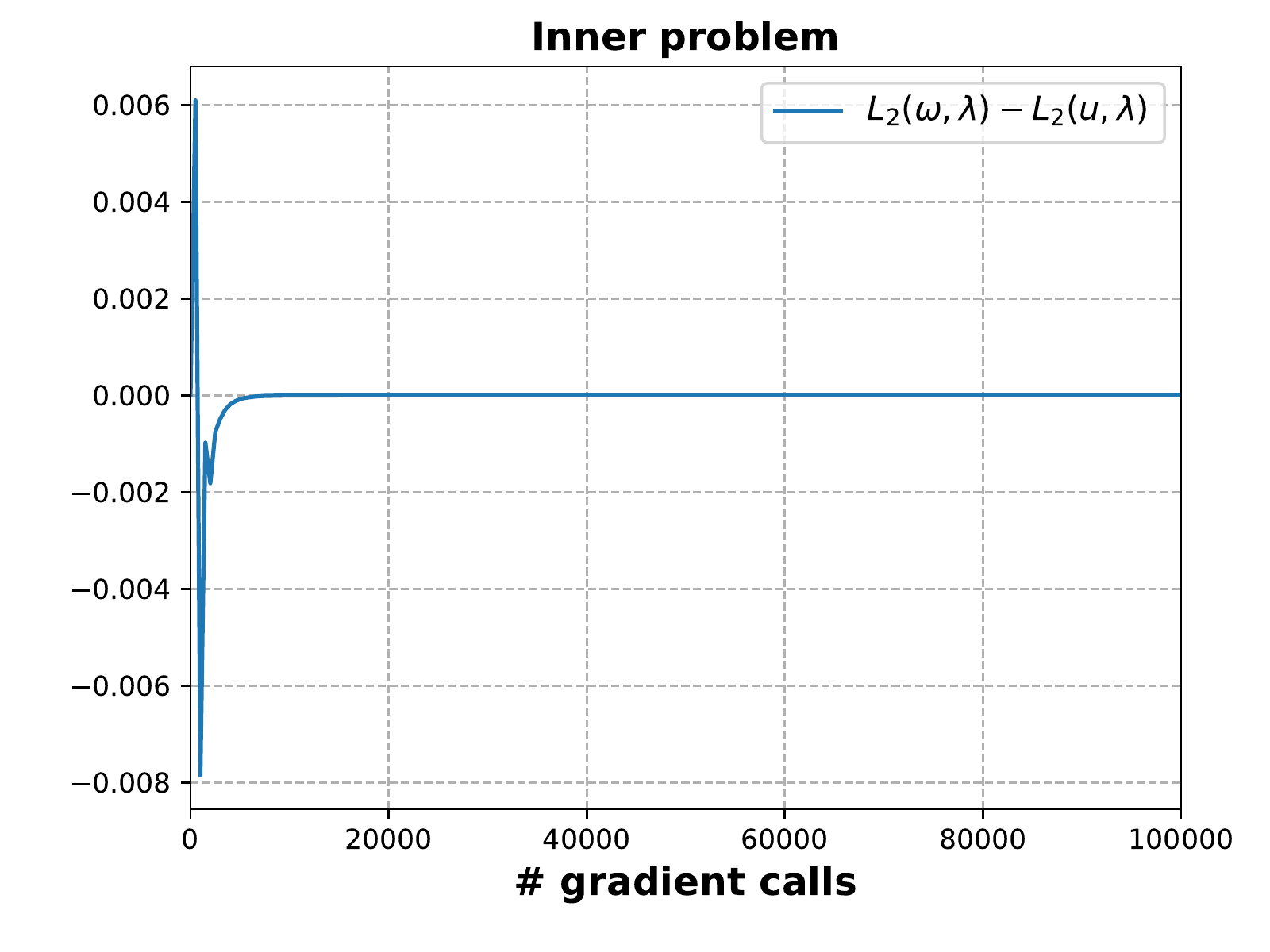}
\vskip -0.1in
 \caption{Hyper-parameter optimization results on a synthesis dataset}
\label{fig:syn}
\end{figure}

 

The next experiment is on a real dataset {\bf 20newsgroups}~\citep{Lang95}, which consists of 18846 news divided into 20 topics, and the features contain 130107 tf-idf sparse vectors. The full train dataset is equally split for training and validation. We follow the setting in \citep{grazzi2020iteration} and each feature is regularized by $\lambda_i \geq 0$, given $\lambda = [\lambda_i] \in \R^{d}$. To ensure this non-negativity, $\exp(\lambda_i) \geq 0$ is used in place of $\lambda_i$. Other settings remain similar to the previous synthesis dataset experiment, which is available in Appendix~\ref{append:logistic}.  The results in Figure~\ref{fig:news} show that the proposed minimax algorithm results in the best validation loss and achieves the highest test accuracy.

\deleted{In this experiment, the result of the reverse method is very close to the fixed-point method.}
\begin{figure}[t]
     \centering
     \vskip -0.1in
\includegraphics[width=0.3\textwidth,height=1.6in]{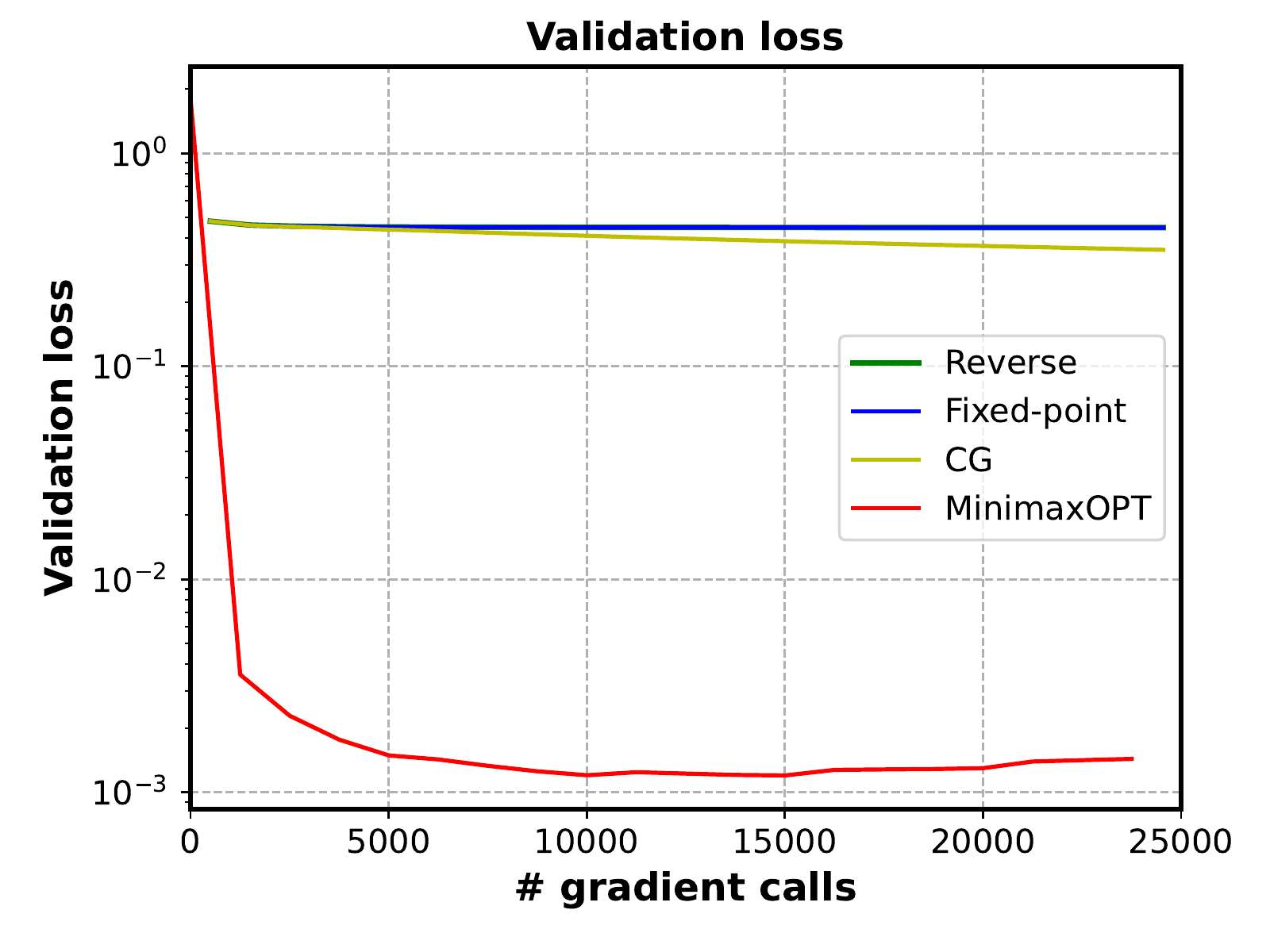}
\includegraphics[width=0.3\textwidth,height=1.6in]{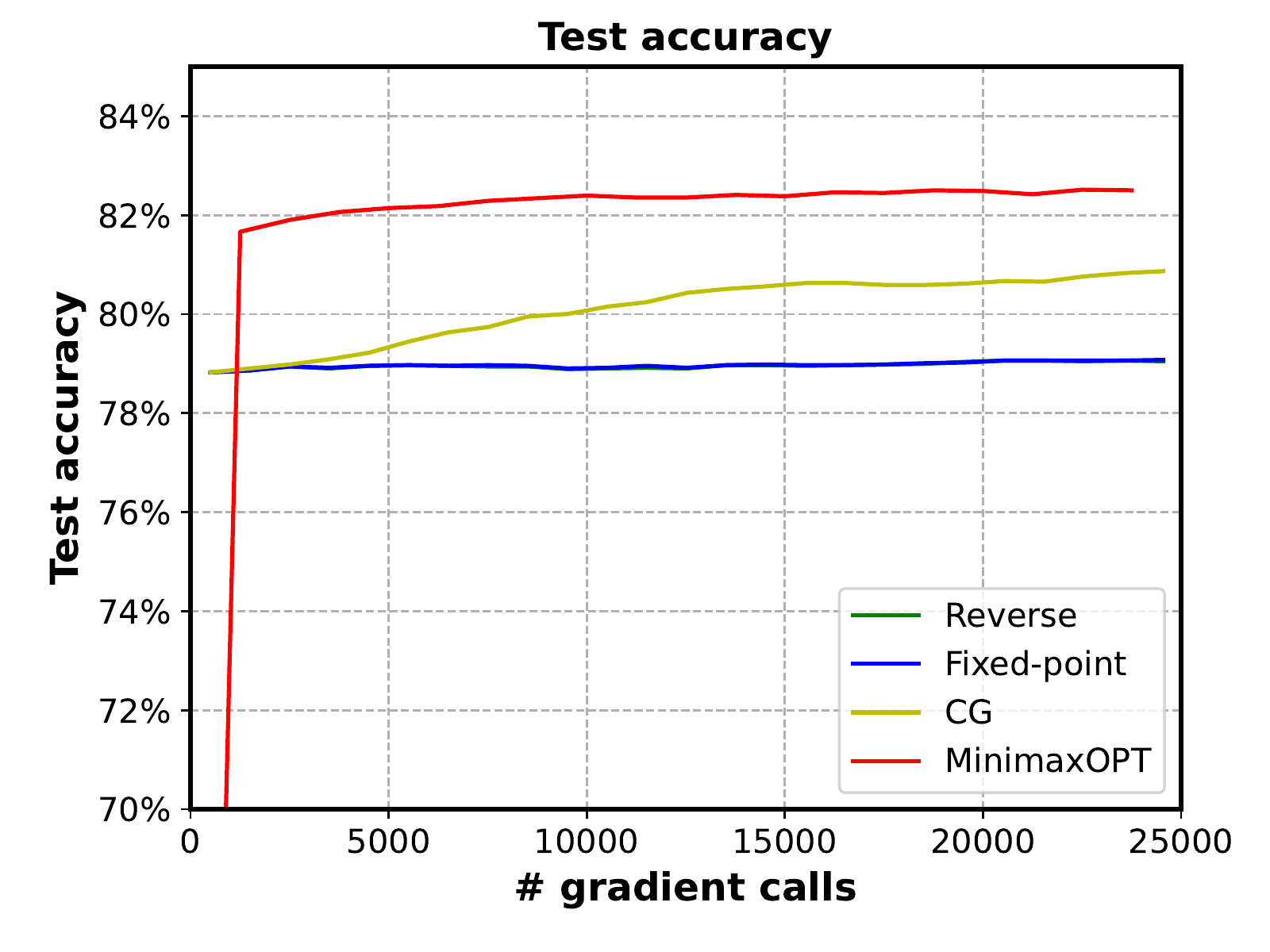}
\includegraphics[width=0.3\textwidth,height=1.6in]{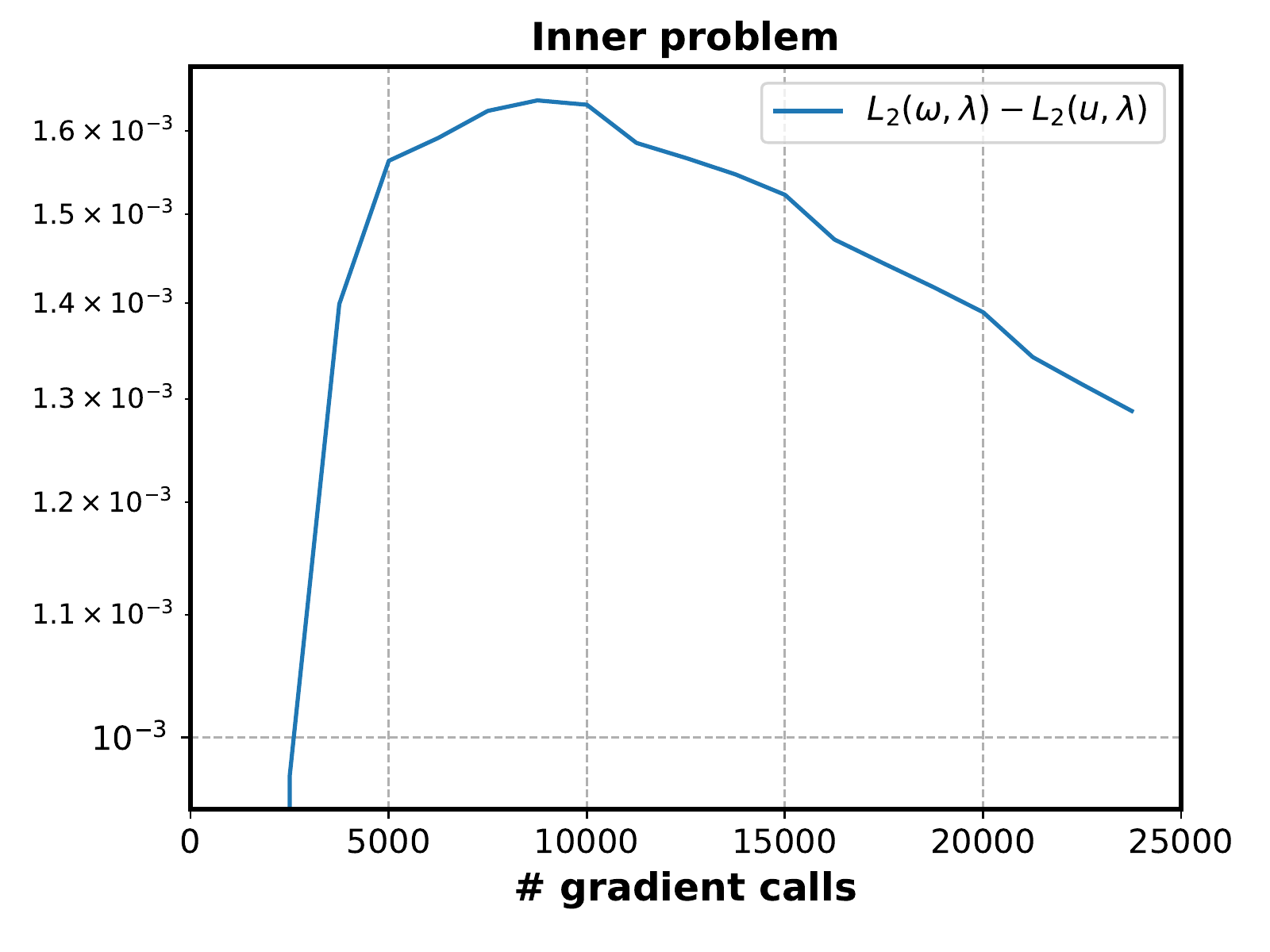}
\vskip -0.1in
 \caption{Hyper-parameter optimization results on 20newsgroups dataset}
\label{fig:news}
\end{figure}
\deleted{In Table~\ref{tab:logistic:time}, we report the total time of each algorithm. Our method saves around half the time compared to other bilevel methods that use almost the same number of gradients.}

\subsection{Deep Neural Networks with CIFAR10}
\label{sec:num:dnn}

\added{Next, we consider the task of training Resnet18~\citep{he2016deep} on CIFAR10~\citep{krizhevsky2009learning} for image classification.}
The entire training data is split by 0.9:0.1 for training and validation. We apply weight-decay per layer and initialize it with $10^{-10}$. The experiments are repeated three times under different seeds to eliminate randomness. \added{For more experimental details, please refer to Appendix~\ref{append:dnn}.}

\added{Two types of bilevel optimization baselines are adopted, one favors performance and runs 50 inner epochs for each outer iteration, while the other emphasizes efficiency and utilizes only 1 inner epoch. For each type, four different baselines are presented to compare with MinimaxOPT: (1) truncated reverse; (2) $T_1-T_2$ (also called one-step), which uses the identity matrix to approximate Hessian~\citep{luketina2016scalable}; (3) conjugate gradient (CG), a stochastic version which computes the Hessian matrix on a single minibatch and applies CG to compute the implicit linear system five times; (4) Neumann approximation: approximate the inverse of the second-order matrix with the Neumann series~\citep{lorraine2020optimizing}. Since those methods does not scale well, we shrink the model size of Resnet18 and use plane=16 instead of 64, so that all the aforementioned baselines can reach convergence within reasonable computational budget. }

\deleted{We implement the stochastic version of Algorithm \ref{alg:minmax:general} and use momentum to accelerate the convergence, shown in Algorithm \ref{alg:minmax:general} (in Appendix~\ref{append:dnn}). The cosine schedule is introduced into learning rate $\eta_k^i$: $\eta_k^i = \eta_0 /\tau^{i} \times 0.5 \times \left(\cos(\pi \times t/T) + 1\right)$ where $t = i \times K + k + 1$ and $T=KN$. We use the same batch size of 256 for both training and validation datasets. The parameter $\tau=1.5$. Other details about the experiments are given in Appendix~\ref{append:dnn}.}

\deleted{The test accuracy of each algorithm under different gradient oracles is reported in Figure~\ref{fig:cifar10} and Tables~\ref{tab:cifar10:1} and \ref{tab:cifar10:2}. Compared to the baselines where the inner problem converges, our method significantly reduces the computation cost and reaches a higher test accuracy. For the baselines that the inner problem is trained within one epoch, the proposed method can achieve higher test accuracy with the same gradient oracles.}

\added{As one can observe from Figure~\ref{fig:cifar10} and Table~\ref{tab:cifar10}, MinimaxOPT surpasses all baselines by a huge gap. It reaches the highest test accuracy with an order of magnitude speedup when compared with the second best method. This demonstrates that MinimaxOPT is not only algorithmically scalable but can also strike a good balance between performance and speed in medium-sized tasks.}

\begin{figure}[t]
\begin{minipage}{0.45\linewidth}
\begin{center}
\centerline{ \includegraphics[width=\textwidth,height=2in]{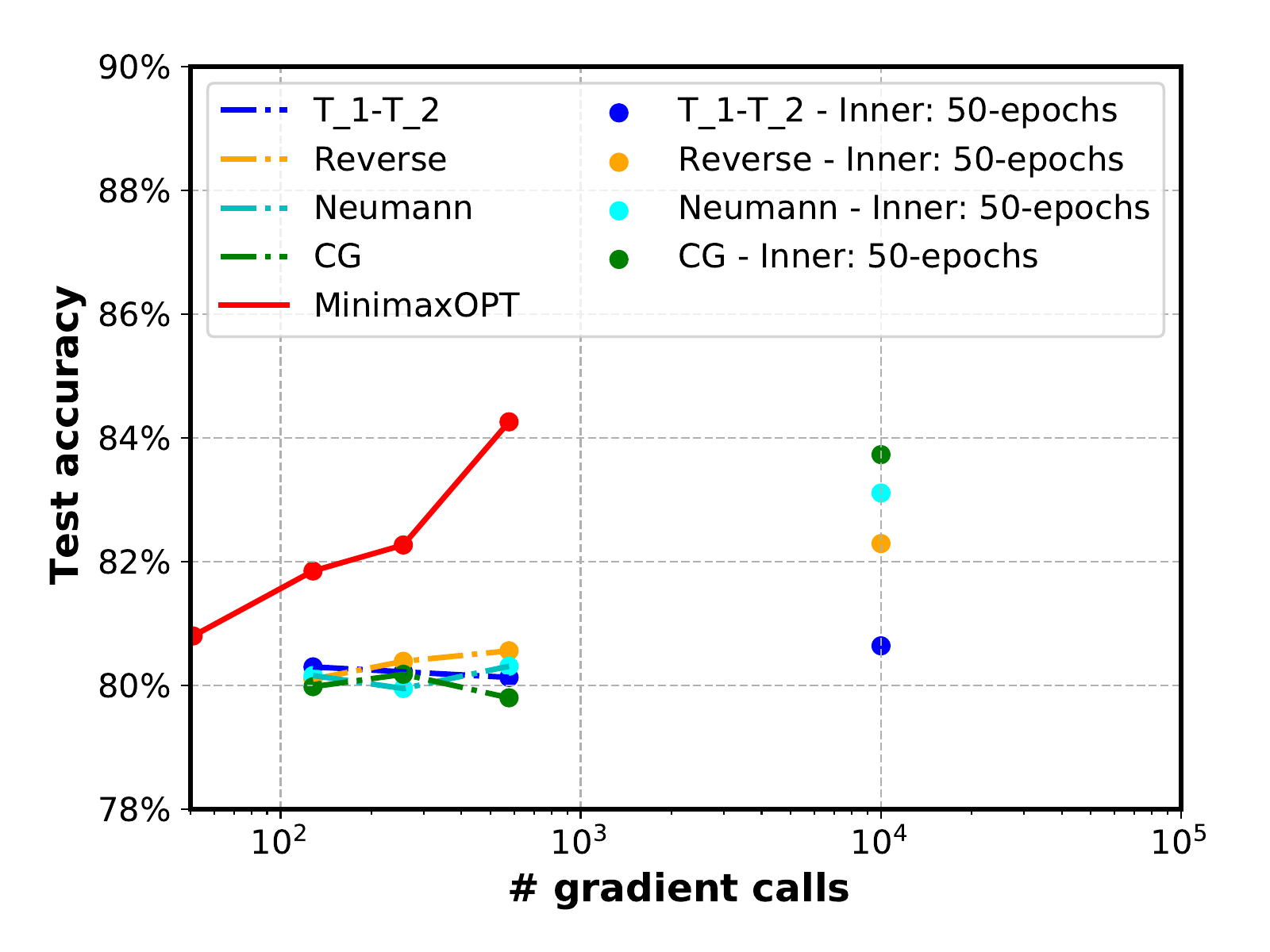}
}
\captionof{figure}{Test accuracy on CIFAR10}
\label{fig:cifar10} 
\end{center}
\end{minipage} 
\hfill
\begin{minipage}{0.48\linewidth}		
	\centering
		\captionof{table}{Test accuracy on CIFAR10.}
\label{tab:cifar10}
		\resizebox{\textwidth}{!}{%
		\begin{tabular}[width=0.9\linewidth]{@{}cccc@{}}
   \hline 
  Method &   \makecell{Inner\\epochs} &  Test accuracy & \# Gradient \\ \hline 
 $T_1-T_2$ & \multirow{4}{*}{1}  &   80.22 $\pm$ 0.97 & 256 \\
  Reverse &  &  80.39 $\pm$ 0.64  & 256 \\
   Neumann &      &   79.95 $\pm $ 1.16 & 256\\
CG &    &  80.18 $\pm$ 0.89 & 256\\ \midrule
 MinimaxOPT & - &  \textbf{82.27 }$\pm 1.416$  & 256 \\
    \bottomrule
  $T_1-T_2$ & \multirow{4}{*}{50}   & 80.67 $\pm$ 0.97 &  \multirow{4}{*}{10120}  \\ 
Reverse   &      &  82.29 $\pm$ 0.83 &  \\
 Neumann  &    &   83.11 $\pm$ 0.81 &   \\
   CG &    &  83.73 $\pm$ 0.68 \\  \midrule
 MinimaxOPT &  - &  \textbf{84.26} $\pm$ 1.56 & 576 \\
     \bottomrule
    \end{tabular}}
	\end{minipage}
\end{figure}

\subsection{Data Hyper-Cleaning on MNIST}\label{sec:num:hyper:clean}
\deleted{We evaluate the performance of the proposed minimax method on the data hyper-cleaning task. The task is to learn a classifier to clean the dataset with corrupted labels, formulated as the bilevel problem:}

\added{Data hyper-cleaning aims at cleaning the dataset with corrupted labels via reweighting,}
\begin{align*}
   \min_{\lambda} \quad  & \quad L_1(\lambda; S_{val}) = \sum_{j\in S_{val}}\ell(u^{\ast}(\lambda);\xi_j)\notag \\
  s.t., \quad  & \quad  u^{\ast}(\lambda) = \arg\min_{u} L_2(u, \lambda; S_{train}) =  \sum_{i \in S_{train}} \sigma(\lambda_i) \ell(u; \xi_{i}) + c \left\|u \right\|^2
\end{align*}
where $\sigma(\lambda_i):= \text{sigmoid}(\lambda_i)$, $\ell(x)$ is the loss function, $\lambda \in \R^m$ with $m = |S_{train}|$ and $c>0$ being a constant. We consider a subset of MNIST with 20000 examples for training, 5000 examples for validation, and 10000 examples for testing. The setting is similar to section 6 of \citep{ji2021bilevel} except for the model choice, where we use a non-linear model of two-layer neural networks with 0.2 dropouts instead of logistic regression. For the baseline, we compare three bilevel methods: (1) stocBiO~\citep{ji2021bilevel}, which is the stochastic bilevel method and uses Neumann series approximation to obtain sample-efficient hyper-gradient estimator; (2) truncated reverse method; (3) conjugate gradient (CG) method. The weight-decay parameter is fixed to be $c=0.001$. We sample both inner and outer problems by mini-batch for stocBiO and the stochastic version of MinimaxOPT. For reverse and CG methods, we employ gradient descent to optimize the inner and outer problems, as their stochastic counterparts are not easily accessible. The test accuracy of each method under different noise levels $p$ is presented in Figure~\ref{fig:hyper:noise} and the time of each algorithm to reach 90\% test accuracy is recorded in Table~\ref{tab:time:noise}. \deleted{Our minimax method is much faster than others to achieve the same relatively high accuracy.} \added{All those results demonstrate that MinimaxOPT is capable of reaching a relative high accuracy with much shorter time than others.} 

 \begin{figure}[t]
     \centering
     \vskip -0.1in
\includegraphics[width=0.3\textwidth,height=1.6in]{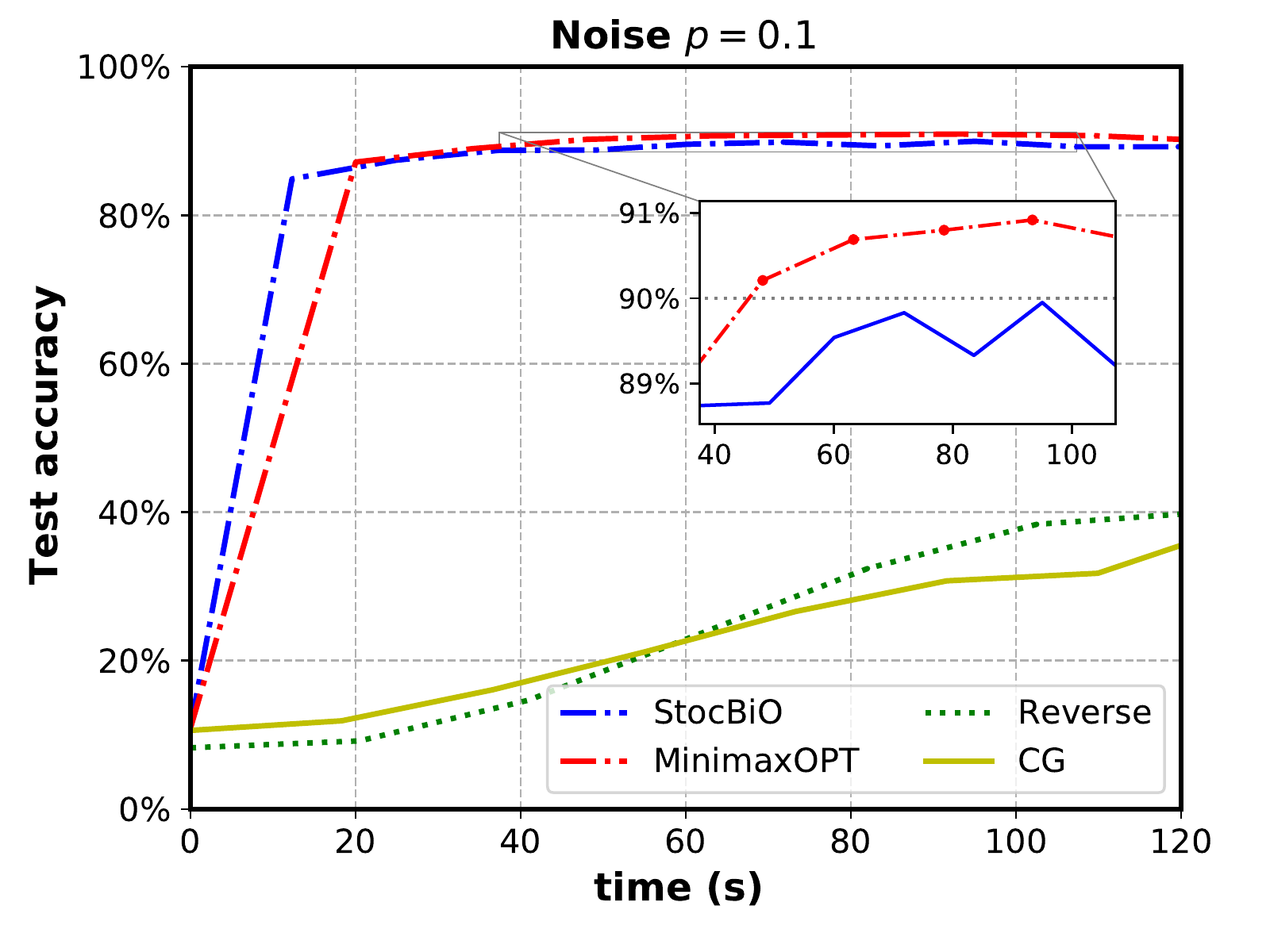}
\includegraphics[width=0.3\textwidth,height=1.6in]{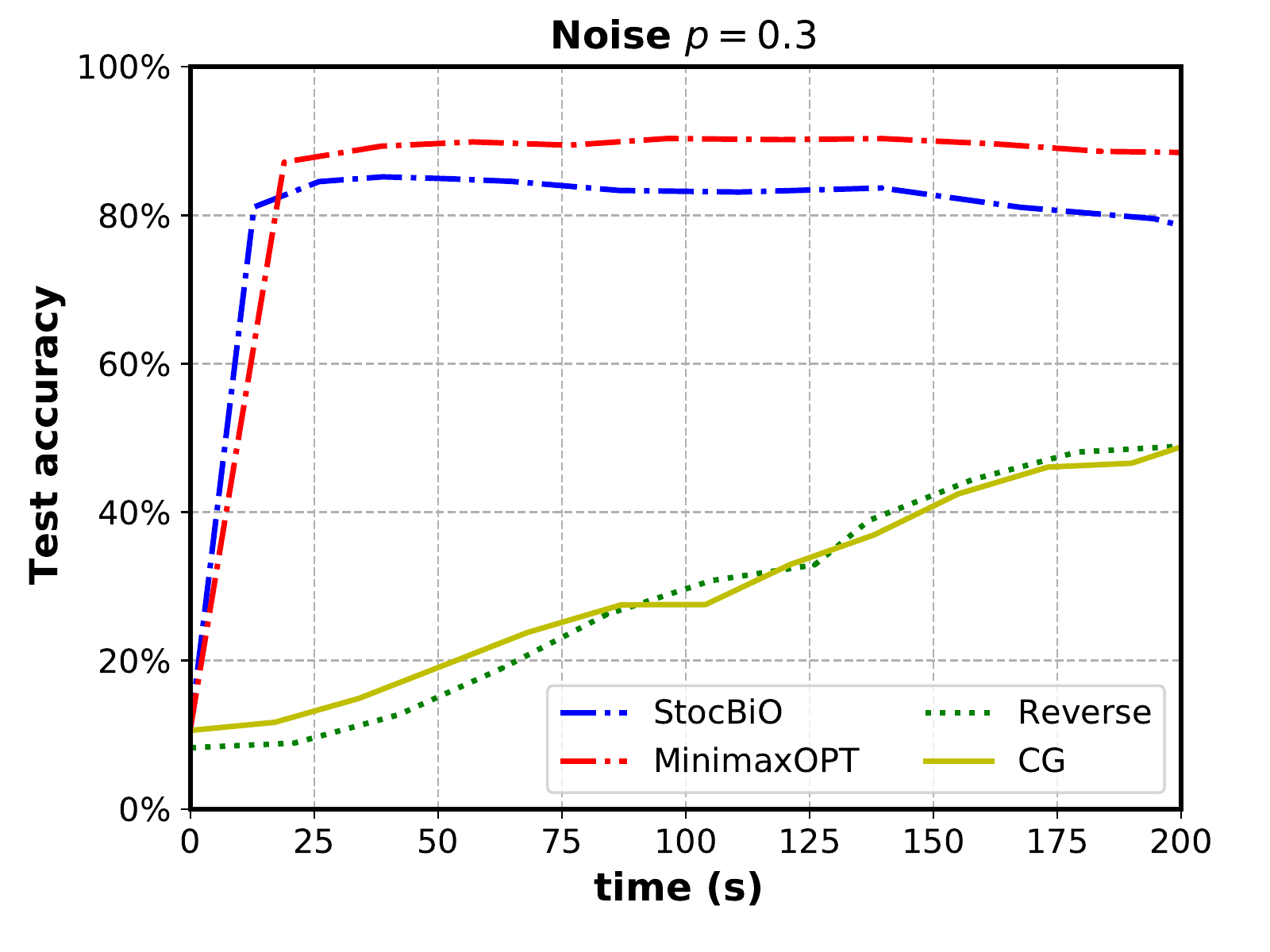}
\includegraphics[width=0.3\textwidth,height=1.6in]{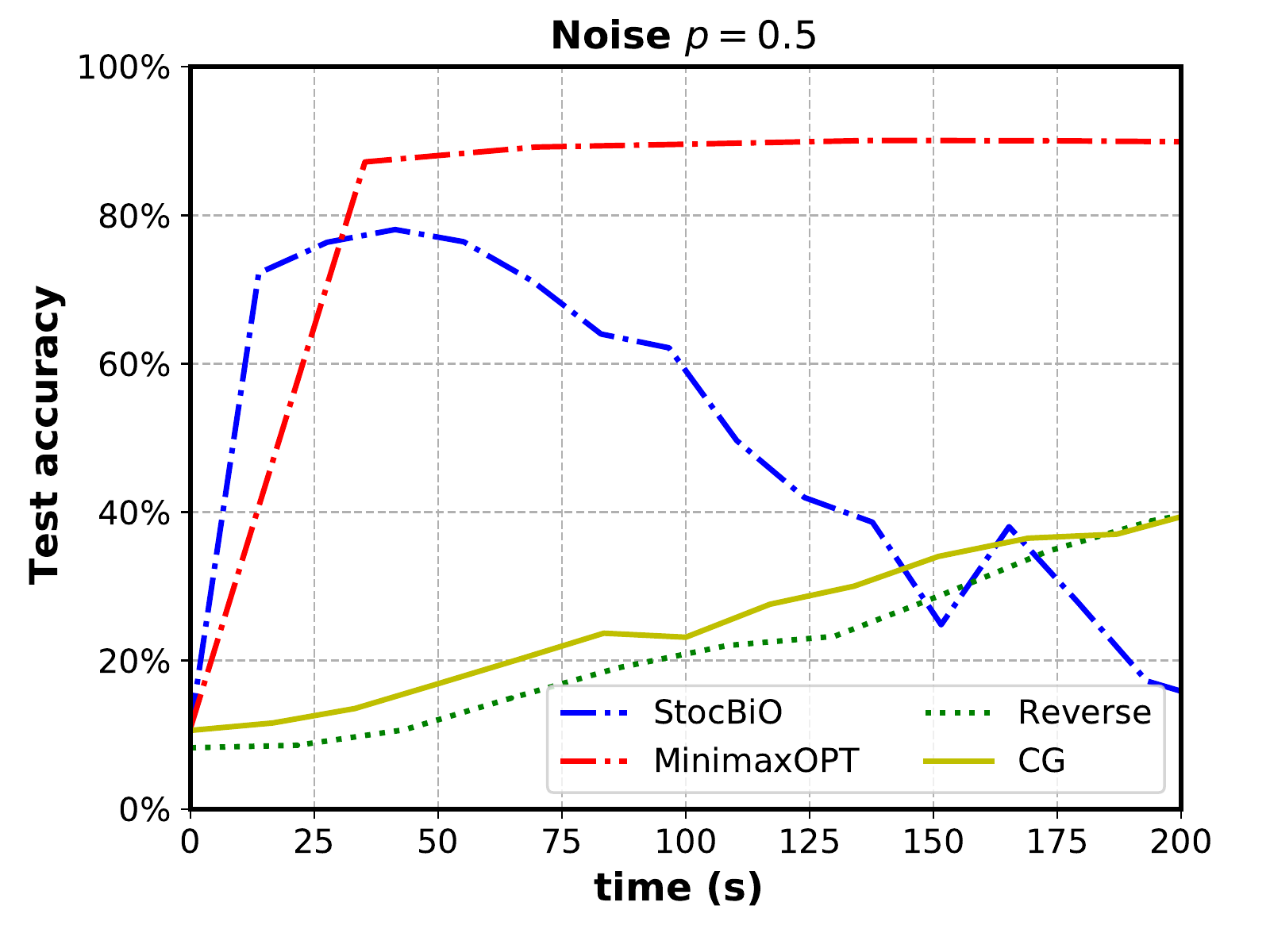}
\vskip -0.1in
 \caption{Data cleaning results on MNIST}
\label{fig:hyper:noise}
\end{figure}

\begin{table}[t]
  \centering
  \caption{Time of test accuracy reaching 90\%, ``-'' means the method fails to reach this accuracy} \label{tab:time:noise}
          \footnotesize
    \begin{tabular}{c|c|c|c|c}
     \toprule
        &  \multicolumn{3}{c}{Time (s)} \\ \midrule    
       Noise    & stocBiO &  MinimaxOPT & Reverse & CG\\ \midrule
  $p = 0.1$   &  95 &  48 & 3163 & 3428\\ \midrule 
      $p=0.3$  & --  & 96   & 4675      &  4113\\\midrule  
      $p=0.5$  &  -- &  137 & 7947 & 6084 \\ \midrule 
     \end{tabular}

\end{table}

\section{Conclusion}
\deleted{We proposed a novel prospect for approximating the bilevel problem via a minimax problem. We introduce an additional variable $\omega$ to decouple the dependence of the inner and outer problems. The gradient of the resulting minimax problem does not involve any second-order derivatives. We demonstrated that the proposed minimax problem is equivalent to the bilevel problem when the multiplier $\alpha$ goes to infinity. We proposed the multi-stage gradient descent and ascent (GDA) method to solve the minimax problem. The variables are updated synchronously, unlike the two-loop manner of the conventional bilevel methods. We provided an $\mathcal{O}(1/K)$ convergence rate for the one-stage GDA method and investigated the convergence properties for the multi-stage GDA method. Our numerical experiments demonstrated the superior performance of the proposed method compared to the baseline bilevel methods. We provided the empirical results of the stochastic version of the multi-stage GDA method. In the future, it will be very interesting to provide theoretical convergence guarantees for the stochastic multi-stage GDA method and explore the performance of the proposed problem and methods in other applications.}

\deleted{In this work, we propose a novel paradigm to efficiently solve general bilevel optimization problems. By converting bilevel optimization into equivalent minimax problems, we are able to decouple the infamous outer-inner dependency and open the door for more Hessian-free bilevel optimization algorithms to come. A multi-stage gradient descent algorithm named MinimaxOPT is proposed, which enjoys the same time/space complexity as gradient descent and can be painlessly equipped with SGD, SGD momentum, Adam, and many other first-order optimizers. Theoretically, besides the existing convergence results available in current minimax literature, an $\mathcal{O}(1/K)$ convergence rate is also provided for MinimaxOPT's one-stage setting, along with several additional convergence properties for its multi-stage setting. Empirically, MinimaxOPT surpasses existing bilevel bases by a huge gap and provides significant speedups. In the future, the theoretical guarantees for stochastic MinimaxOPT can be further investigated, along with its empirical performance being further verified in other settings.}

In this work, we propose a novel paradigm for efficiently solving general bilevel optimization problems. By converting bilevel optimization into equivalent minimax problems, we are capable of addressing the infamous outer-inner dependency issue, which opens up possibilities for more Hessian-free bilevel optimization algorithms. As a first step, we introduce MinimaxOPT, a multi-stage gradient descent ascent algorithm that shares the same time/space complexity as gradient descent. Algorithmically, MinimaxOPT can be easily equipped with first-order optimizers such as SGD, SGD with momentum, or Adam. Theoretically, MinimaxOPT enjoys convergence guarantees similar to those available in current minimax literature, and possesses further guarantees of $\mathcal{O}(\epsilon^{-6})$ convergence rate for its one-stage stochastic setting, along with other convergence properties for its multi-stage setting. Empirically, MinimaxOPT outperforms existing bilevel optimization baselines by a significant margin and provides substantial speedups. In the future, the theoretical guarantees for stochastic MinimaxOPT can be further investigated, along with its empirical performance being verified in other settings.


\bibliographystyle{plainnat}
\bibliography{ref_bilevel_minmax}

\newpage

\appendix


\section{Proof of Theorem~\ref{thm:equivalent:b:m}}
\thmequiv*
\begin{proof} \deleted{(of Theorem \ref{thm:equivalent:b:m})}
\deleted{For any fixed $\alpha $, in order to minimize $L^{\alpha}(u, \omega, \lambda)$, then there must exist a constant $M_2 > 0$ such that $L_2(\hat{\omega}, \hat{\lambda}) - L_2(\hat{u}, \hat{\lambda}) \leq \frac{M_2}{\alpha}$. Otherwise, $\alpha \left(L_2(\hat{\omega}, \hat{\lambda}) -L_2(\hat{u}, \hat{\lambda}) \right) \rightarrow \infty$ when $ \alpha \rightarrow \infty$.}

\textbf{For claim (1)}:
Since $\hw, \hlambda$ is the minimizer of 
\begin{align*}
  L'(\omega, \lambda)
  &\triangleq \max_u L^{\alpha}(u, \omega, \lambda)
  \\
  &= \max_u \left( L_1(\omega, \lambda) + \alpha \left(L_2(\omega, \lambda) - L_2(u, \lambda)\right) \right)
  \\
  &= L_1(\omega, \lambda) + \alpha \left(L_2(\omega, \lambda) - \min_u L_2(u, \lambda)\right)
\end{align*}
it satisfies
\begin{align*}
  L^{\alpha}(\hu, \hw, \hlambda)
  =
  L'(\hw, \hlambda)
  &\le
  L'(\ustar, \lambdastar)
  \\
  &=
  L_1(\ustar, \lambdastar) + \alpha \left(L_2(\ustar, \lambdastar) - \min_u L_2(u, \lambdastar)\right)
  \\
  &= L_1(\ustar, \lambdastar) = L_1^{\ast}.
\end{align*}
Thus
\begin{align*}
 &L_1(\hw, \hlambda) + \alpha (L_2(\hw, \hlambda) - L_2(\hu, \hlambda)) = L^{\alpha}(\hu, \hw, \hlambda) \le L_1^{\ast}
 \\
 \Rightarrow\quad &
 L_2(\hw, \hlambda) - L_2(\hu, \hlambda)
 \le \frac{L_1^{\ast} - L_1(\hw, \hlambda)}{\alpha}
 \le \frac{L_1^{\ast}}{\alpha}
\end{align*}
On the other hand, according to $\hu$'s optimality in minimax, we have
\begin{equation}
\label{eq:u:minmax:optimality}
\begin{aligned}
\hu &= \arg\max_u L^{\alpha}(u, \hw, \hlambda) = \arg\max_u L_1(\hw, \hlambda) + \alpha (L_2(\hw, \hlambda) - L_2(u, \hlambda))
\\
&= \arg\min L_2(u, \hlambda)
\end{aligned}
\end{equation}
Hence,
\begin{align*}
  L_2(\hw, \hlambda) - L_2(\hu, \hlambda)
  = L_2(\hw, \hlambda) - \min_u L_2(u, \hlambda)
  \ge 0.
\end{align*}

\textbf{For claim (2)}: by Lipschtiz continuity of $L_1(\cdot)$ we have
\begin{align}\label{inequ:lip:l1}
|L_1(\omega, \lambda) - L_1(u, \lambda)| \leq \hat{L}_1 \left\| \omega - u \right\|.
\end{align}
Due to (1) that $0 \le L_2(\hat{\omega}, \hat{\lambda}) - L_2(\hat{u}, \hat{\lambda}) \leq L_1^{\ast}/\alpha$ and the assumption $| L_2(\hat{\omega}, \hat{\lambda}) - L_2(\hat{u}, \hat{\lambda})| \leq \Delta \Rightarrow \left\| \hat{\omega} - \hat{u} \right\|^r \leq M \Delta$, we have $\left\| \hat{\omega} - \hat{u} \right\| \leq \left(\frac{M L_1^{\ast}}{\alpha}\right)^{1/r}$. Then using (\ref{inequ:lip:l1}), we have
\allowdisplaybreaks
\begin{align}
L_1(\hat{\omega}, \hat{\lambda}) & \geq L_1(\hat{u}, \hat{\lambda}) -  \hat{L}_1 \cdot \left(\frac{ M L_1^{\ast}}{\alpha}\right)^{1/r} 
  \\
  &\mathop{=}^{(a)} L_1(u(\hat{\lambda}), \hat{\lambda})
  -  \hat{L}_1 \cdot \left(\frac{ M L_1^{\ast}}{\alpha}\right)^{1/r}   \notag
  \\
  & \mathop{\geq}^{(b)} L_1(u(\lambda^{\ast}), \lambda^{\ast})) -  \hat{L}_1 \cdot \left(\frac{ M L_1^{\ast}}{\alpha}\right)^{1/r} 
  \\
  &\mathop{=}^{(c)} L_1(u^{\ast}, \lambda^{\ast}) -  \hat{L}_1 \cdot \left(\frac{ M L_1^{\ast}}{\alpha}\right)^{1/r} 
\end{align}
where (a) uses the fact that $\hat{u} = \arg\min L_2(u, \hat{\lambda})$ and (b) (c) use the definition of $\lambda^{\ast} = \arg\min L_1(u(\lambda), \lambda) $. Furthermore by the optimality of $(\hat{u}, \hat{\omega}, \hat{\lambda})$ of $L^{\alpha}(u, \omega, \lambda)$, we have
\begin{align*}
&L_1(\hat{\omega}, \hat{\lambda}) + \alpha \left(L_2(\hat{\omega}, \hat{\lambda}) - L_2(\hat{u}, \hat{\lambda}) \right)
\\
=& L'(\hw, \hlambda)
\leq
L'(\ustar, \lambdastar)
\\
=& L_1(u^{\ast}, \lambda^{\ast}) + \alpha \left(L_2(u^{\ast}, \lambda^{\ast}) - L_2(u^{\ast}, \lambda^{\ast}) \right) = L_1(u^{\ast}, \lambda^{\ast}).
\end{align*}
Therefore, the inequalities in claim (2) hold. 
\end{proof}

\subsection{Exact Bilevel-Minimax Equivalence when $\alpha\to\infty$}
\label{appendix:exact-equivalence-when-alpha-goes-to-inf}
\begin{cor}
  \label{cor:exact-equivalence-when-alpha-goes-to-inf}
  With the same settings in Theorem~\ref{thm:equivalent:b:m}, suppose
  $\omega, u \in \Omega, \lambda \in \Lambda$ are all compact sets and $L_2(u, \lambda)$ being continuous. Furthermore, assume the inner problem of~\eqref{P:bilevel} admits unique solutions. Denote $\hw_\alpha$, $\hu_\alpha$ and $\hlambda_\alpha$ as the minimax optimum for any fixed $\alpha$, then for any sequence $\left\{ \left(\alpha, \hw_{\alpha}, \hlambda_{\alpha}\right) \right\}$ satisfying $\alpha \to \infty$, there exists a subsequence $\left\{ \left(\alpha_n, \hw_{\alpha_n}, \hlambda_{\alpha_n}\right) \right\}_n$, s.t.
  \begin{align*}
    \lim_{n\to\infty} \hw_{\alpha_n} = \ustar \text{ and }
    \lim_{n\to\infty} \hlambda_{\alpha_n} = \lambdastar
  \end{align*}
\end{cor}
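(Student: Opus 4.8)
The plan is to prove this as a compactness-plus-continuity argument built directly on top of Theorem~\ref{thm:equivalent:b:m}. The corollary asks for convergence along a subsequence of the minimax optima as $\alpha\to\infty$, and the natural tool is the Bolzano--Weierstrass theorem: since $\hw_\alpha\in\Omega$ and $\hlambda_\alpha\in\Lambda$ live in compact sets, the sequence $\{(\hw_{\alpha_n},\hlambda_{\alpha_n})\}$ automatically has a convergent subsequence. The real content is to identify the limit with $(\ustar,\lambdastar)$, and for this I would exploit the two conclusions of Theorem~\ref{thm:equivalent:b:m} together with the uniqueness assumption on the inner problem.

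First I would extract a convergent subsequence. By compactness of $\Omega\times\Lambda$, pass to a subsequence (still indexed by $n$) along which $\hw_{\alpha_n}\to\bar\omega$ and $\hlambda_{\alpha_n}\to\bar\lambda$ for some $\bar\omega\in\Omega$, $\bar\lambda\in\Lambda$. The goal is then to show $\bar\omega=\ustar$ and $\bar\lambda=\lambdastar$. Next I would control the inner gap. Claim~(1) of Theorem~\ref{thm:equivalent:b:m} gives $0\le L_2(\hw_{\alpha_n},\hlambda_{\alpha_n})-L_2(\hu_{\alpha_n},\hlambda_{\alpha_n})\le L_1^\ast/\alpha_n\to 0$. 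Since $\hu_{\alpha_n}=\arg\min_u L_2(u,\hlambda_{\alpha_n})$ by the optimality relation~\eqref{eq:u:minmax:optimality}, this says that $\hw_{\alpha_n}$ is asymptotically a minimizer of $L_2(\cdot,\hlambda_{\alpha_n})$; passing to the limit using continuity of $L_2$ yields $L_2(\bar\omega,\bar\lambda)=\min_u L_2(u,\bar\lambda)$, so by the uniqueness hypothesis $\bar\omega=u(\bar\lambda)$, the inner solution at $\bar\lambda$.

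It then remains to pin down $\bar\lambda=\lambdastar$, which I would obtain from claim~(2). Taking $n\to\infty$ in the sandwich
\begin{align*}
  L_1(\ustar,\lambdastar)-\hat L_1\cdot\Bigl(\tfrac{ML_1^\ast}{\alpha_n}\Bigr)^{1/r}\le L_1(\hw_{\alpha_n},\hlambda_{\alpha_n})\le L_1(\ustar,\lambdastar)
\end{align*}
and using the continuity of $L_1$ together with $(\hw_{\alpha_n},\hlambda_{\alpha_n})\to(\bar\omega,\bar\lambda)$, the error term vanishes and I conclude $L_1(\bar\omega,\bar\lambda)=L_1(\ustar,\lambdastar)$. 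Combined with $\bar\omega=u(\bar\lambda)$, this reads $\mathcal L(\bar\lambda)=\mathcal L(\lambdastar)$, i.e.\ $\bar\lambda$ achieves the bilevel optimum; if one further assumes $\lambdastar$ is the unique outer minimizer, this forces $\bar\lambda=\lambdastar$ and hence $\bar\omega=u(\lambdastar)=\ustar$.

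The main obstacle I anticipate is the step identifying $\bar\omega$ with the inner minimizer at the limiting $\bar\lambda$: it requires swapping the limit in $n$ with the $\arg\min$ over $u$, which is exactly where continuity of $L_2$ and the uniqueness of the inner solution must be invoked carefully (otherwise the limit could land on a different minimizer, or the minimizing value could fail to be attained). A secondary subtlety is that claim~(2) only directly controls the \emph{value} $L_1(\hw_{\alpha_n},\hlambda_{\alpha_n})$, so recovering uniqueness of the \emph{argument} $\lambdastar$ tacitly needs the outer objective $\mathcal L$ to have a unique minimizer; I would either add this as a standing hypothesis (consistent with the uniqueness assumption already made on the inner problem) or argue that every subsequential limit is a bilevel optimum, which is what the statement's ``there exists a subsequence'' phrasing is content to assert.
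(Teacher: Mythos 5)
Your proposal is correct and follows essentially the same route as the paper's proof: Bolzano--Weierstrass on the compact sets, claim~(1) of Theorem~\ref{thm:equivalent:b:m} to kill the inner gap, the sandwich in claim~(2) to force $L_1(\bar\omega,\bar\lambda)=L_1(\ustar,\lambdastar)$, and uniqueness to identify the limits. The one place where the mechanics differ is the identification $\bar\omega=u(\bar\lambda)$: the paper first extracts a convergent subsequence of $\hu_{\alpha_n}$ as well, proves $\hu_\infty=u(\hlambda_\infty)$ by an $\epsilon$-argument exploiting the optimality of $\hu_{\alpha_n}$, and then invokes condition~(ii) of Theorem~\ref{thm:equivalent:b:m} to conclude $\left\|\hw_{\alpha_n}-\hu_{\alpha_n}\right\|^r\le M L_1^{\ast}/\alpha_n\to 0$, hence $\hw_\infty=\hu_\infty$; you instead pass to the limit in the value gap $L_2(\hw_{\alpha_n},\hlambda_{\alpha_n})-\min_u L_2(u,\hlambda_{\alpha_n})\to 0$, which additionally requires continuity of the value function $\lambda\mapsto\min_u L_2(u,\lambda)$ (true here by compactness of $\Omega$ and joint continuity of $L_2$, but worth stating). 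The paper's detour through $\hu_{\alpha_n}$ buys you a proof that never needs the value-function continuity, at the cost of using condition~(ii); your version avoids condition~(ii) in that step. You also correctly flag that pinning down $\bar\lambda=\lambdastar$ from the equality of outer \emph{values} tacitly requires uniqueness of the outer minimizer --- the paper's proof uses exactly this (``entailed by the uniqueness of the bilevel problem's optimum''), even though the corollary's hypotheses only state uniqueness for the inner problem, so your suggestion to make it a standing hypothesis is well taken.
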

\begin{proof}
  According to Bolzano-Weierstrass theorem and the compactness of $\Omega, \Lambda$, for any sequences of $\left\{ \left(\alpha, \hw_\alpha, \hu_\alpha, \hlambda_\alpha\right)\right\}$ satisfying $\alpha \to \infty$, there exists a subsequence $\left\{ \left(\alpha_n, \hw_{\alpha_n}, \hu_{\alpha_n}, \hlambda_{\alpha_n}\right) \right\}_n$ that converges
  \begin{align*}
    \hw_\infty \triangleq \lim_{n\to\infty} \hw_{\alpha_n}, \quad \hu_\infty \triangleq \lim_{n\to\infty} \hu_{\alpha_n}, \quad
    \hlambda_\infty \triangleq \lim_{n\to\infty} \hlambda_{\alpha_n}
     \text{ and } \lim_{n\to\infty} \alpha = \infty, 
  \end{align*}
  First, we show that $\hu_\infty = u(\hlambda_\infty)$. For $\forall \epsilon > 0$, based on the definition of limits and continuity of $L_2(u, \lambda)$, there exists $N > 0$, s.t. for $\forall n \ge N$,
  \begin{align*}
    &\left\| L_2\left(\hu_\infty, \hlambda_\infty\right) - L_2\left(\hu_{\alpha_n}, \hlambda_{\alpha_n}\right) \right\| < \epsilon / 2
    \\
    &
    \left\|L_2\left(u(\hlambda_\infty), \hlambda_{\alpha_n}\right) - L_2\left(u(\hlambda_\infty), \hlambda_\infty\right) \right\| < \epsilon / 2
  \end{align*}
  It follows
  \begin{align*}
     L_2\left(\hu_\infty, \hlambda_\infty\right)
     &\le
     L_2\left(\hu_{\alpha_n}, \hlambda_{\alpha_n}\right) + \frac{\epsilon}{2}
     \\
     &=
     L_2\left(u(\hlambda_{\alpha_n}), \hlambda_{\alpha_n}\right) + \frac{\epsilon}{2}
     \\
     &\le
     L_2\left(u(\hlambda_\infty), \hlambda_{\alpha_n}\right) + \frac{\epsilon}{2}
     \\
     &\le
     L_2\left(u(\hlambda_\infty), \hlambda_{\infty}\right) + \frac{\epsilon}{2} + \frac{\epsilon}{2}
  \end{align*}
  Here the equality and the second inequality are entailed by the optimality of $\hu_{\alpha_n}$ in $L^{\alpha_n}(u, \omega, \lambda)$, as proved in Equation~\eqref{eq:u:minmax:optimality}. Furthermore, since $u(\hlambda_\infty)$ is the minimizer of $L_2(u, \hlambda_\infty)$, we have
  \begin{align*}
    L_2\left(u(\hlambda_\infty), \hlambda_{\infty}\right)
    \le L_2\left(\hu_\infty, \hlambda_\infty\right)
  \end{align*}
  Hence
  \begin{align*}
     L_2\left(u(\hlambda_\infty), \hlambda_{\infty}\right)
     \le L_2\left(\hu_\infty, \hlambda_\infty\right)
     \le
     L_2\left(u(\hlambda_\infty), \hlambda_{\infty}\right) + \epsilon
  \end{align*}
  Given the arbitrarity of $\epsilon$, we have
  \begin{align*}
    L_2\left(u(\hlambda_\infty), \hlambda_{\infty}\right) = L_2\left(\hu_\infty, \hlambda_\infty\right)
  \end{align*}
  Since $L_2(u, \hlambda_\infty)$ admits a unique solution, we have
  \begin{align*}
    u(\hlambda_\infty) = \hu_\infty
  \end{align*}

  Second, we show that $\hw_\infty = u(\hlambda_\infty)$. According to (1) in Theorem~\ref{thm:equivalent:b:m},
  \begin{align*}
    & 0 \le L_2(\hw_{\alpha_n}, \hlambda_{\alpha_n}) - L_2(\hu_{\alpha_n}, \hlambda_{\alpha_n})
    \leq \frac{L_1^{\ast}}{\alpha_n}
    \\
    \overset{\text{(ii) in Theorem~\ref{thm:equivalent:b:m}}}{\Longrightarrow} \quad&
     \left\| \hw_{\alpha_n} - \hu_{\alpha_n} \right\|^r \leq \frac{M L_1^{\ast}}{\alpha_n}
    \\
    \overset{n \to \infty}{\Longrightarrow} \quad&
    \left\| \hw_{\infty} - \hu_{\infty} \right\|^r  = 0
  \end{align*}
  thus $\hw_{\infty} = \hu_{\infty} = u(\hlambda_\infty)$.

  Finally, we show that $\hlambda_\infty = \lambdastar$, which makes $\hw_\infty = u(\hlambda_\infty) = u(\lambdastar) = \ustar$. According to (2) in Theorem~\ref{thm:equivalent:b:m}, we have
  \begin{align*}
    &L_1(u^{\ast}, \lambda^{\ast}) - \hat{L}_1 \cdot \left(\frac{ M L_1^{\ast}}{\alpha_n}\right)^{1/r}   \leq L_1(\hw_{\alpha_n}, \hlambda_{\alpha_n}) \leq L_1(u^{\ast}, \lambda^{\ast}).
    \\
    \overset{n\to\infty}{\Longrightarrow}\quad&
      L_1(\hw_{\infty}, \hlambda_{\infty}) = L_1(u^{\ast}, \lambda^{\ast}).
    \\
    \overset{\hw_{\infty} = u(\hlambda_\infty)}{\Longrightarrow}\quad&
      L_1(u(\hlambda_\infty), \hlambda_{\infty}) = L_1(u(\lambda^{\ast}), \lambda^{\ast})
    \\
    \Longrightarrow \quad&
      \hlambda_\infty = \lambdastar,
  \end{align*}
  where the last step is entailed by the uniqueness of the bilevel problem's optimum, given $\lambdastar$
 being well-defined.

 Therefore, we can obtain
 \begin{align*}
   &\lim_{n\to\infty} \hlambda_{\alpha_n} = \hlambda_\infty = \lambdastar
   \\
   &\lim_{n\to\infty} \hw_{\alpha_n} = \hw_\infty = u(\hlambda_\infty) = u(\lambdastar) = \ustar
 \end{align*}
\end{proof}

\subsection{An Example of Bilevel-Minimax Equivalence}\label{appendix:bilevel-example-one-dim}

\begin{exmp} \label{example:convergence_one_dim}
We consider the bilevel problem with one-dimensional least-square functions 
\begin{align*}
    \min_{\lambda \in \Lambda} \,\, & L_1(\lambda)  := \frac{\mu_1}{2} (u(\lambda)- \tilde{\omega}_1)^2  \notag \\
  \text{s.t.}\,\, &  u(\lambda) =  \arg\min_{u} L_2(u, \lambda)  := \frac{\mu_2}{2}(u - \tilde{\omega}_2)^2 + \lambda u^2
\end{align*}
where $u \in \R^1$ and $\lambda \in \Lambda := [0, \lambda_{\max}]$. Then $\hlambda = \lambdastar$.
\end{exmp}
 The solution to the inner problem is $u(\lambda)  =  \frac{\mu_2  }{\mu_2 + 2\lambda}\tilde{\omega}_2  
$. Incorporating this inner solution to the outer problem, 
  then
the solution of the outer problem is 
  \begin{align}
      \lambda^{\ast} = \text{Proj}_{\Lambda}\left(\frac{\mu_2}{2} \left(\frac{\tilde{\omega}_2}{\tilde{\omega}_1} -1\right) \right)
  \end{align}
where the inner minimzer $u^{\ast} = u(\lambda^{\ast}) = \frac{\mu_2 \tilde{\omega}_2}{\mu_2 + 2\lambda^{\ast}} $.
In this case, the minimax formulation (\ref{P:min:max}) is
  \begin{align}
    \min_{\omega, \lambda \in \Lambda } \max_{u} L(\omega, \lambda, u) = \frac{\mu_1}{2} (\omega - \tilde{\omega}_1)^2 + \alpha \left(\frac{\mu_2}{2}(\omega - \tilde{\omega}_2)^2 + \lambda \omega^2 - \frac{\mu_2}{2}(u - \tilde{\omega}_2)^2 - \lambda u^2 \right)
  \end{align}
  where $\alpha > 0$.
 We follow the procedure that we first maximize the minimax problem on $u$, then minimize the problem on $\omega$, and next minimize the problem on $\lambda$.
  For any fixed $\alpha > 0$,  the solution of the minimax problem $(\hat{u}, \hat{\omega}, \hat{\lambda})$ is
 \begin{align*}
        \hat{\lambda} & =  \text{Proj}_{\Lambda} \left(\frac{\mu_2}{2}\left( \frac{\tilde{\omega}_2}{\tilde{\omega}_1} - 1 \right) \right);\quad 
        \hat{\omega}  = \frac{\mu_1 \tilde{\omega}_1 + \alpha \mu_2 \tilde{\omega}_2}{\mu_1 + \alpha (\mu_2 + 2\hat{\lambda})}; \quad 
         \hat{u}  = \frac{\mu_2}{\mu_2 + 2\hat{\lambda}} \tilde{\omega}_2 
    \end{align*}
The first observation is that $\hat{\lambda} = \lambda^{\ast}$.     If $\lambda^{\ast} = \frac{\mu_2}{2} (\frac{\tilde{\omega}_2}{\tilde{\omega}_1} -1)  \in \Lambda $, we have $\hat{u} = \tilde{\omega}_1$. In this case, the minimizer of the bilevel problem $\omega^{\ast} = \hat{u}$. The minimax problem has the same solution as the bilevel problem. Else, if $\lambda^{\ast} = \frac{\mu_2}{2}(\frac{\tilde{\omega}_2}{\tilde{\omega}_1} -1)  \notin \Lambda $, we get that $\lambda^{\ast} = 0$ or $\lambda^{\ast} = \lambda_{\max}$. Whatever we always have $\hat{\lambda} = \lambda^{\ast}$ and $\hat{u} = \omega^{\ast}$. If $\alpha \rightarrow \infty$, we have $\hat{\omega} \rightarrow \frac{\mu_2}{\mu_2 + 2\lambda} \tilde{\omega}_2 = \hat{u}$. Thus, when $\alpha \rightarrow \infty$, the minimax problem formulated in (\ref{P:min:max}) is equivalent to the bilevel problem (\ref{P:bilevel}).

We might as well set $\mu_1=1$, $\mu_2=0.1$, $\tilde{\omega}_1=0.1$, $\tilde{\omega}_2 = 1$, then the optimal hyper-parameter of the bilevel problem 
is $\lambda^{\ast} = 0.45$ and the corresponding $\omega^{\ast} = \omega(\lambda^{\ast})=0.1$.  For Algorithm \ref{alg:minmax:general}, we set $u_0^0=\omega_0^0=0$, $\alpha_0=\eta_0=\eta_0^{\lambda}=1$, $\lambda_0^0=1$, after $K=100$ and $N=5$ steps, the output of Algorithm \ref{alg:minmax:general} is $(u_{K+1}^N, \omega_{K+1}^N, \lambda_{K+1}^N) = (0.10015, 0.10014, 0.44925)$. Therefore,  Algorithm \ref{alg:minmax:general} produces the hyper-parameter $\lambda_{K+1}^{N}$, which is a relatively high-accuracy solution of the bilevel problem with only $0.001$ noise error.

\section{Proofs and Useful Lemmas of Theorem~\ref{thm:equiv}}

\begin{lem}[\cite{ghadimi2018approximation}]
 Under Assumption~\ref{assumpt:v2}, we have
 $\mathcal{L}(\lambda)$ is $\ell$-smooth where $\ell = \mathcal{O}\left(\kappa^3\right)$ and $\kappa = \max\left\lbrace \ell_{10}, \ell_{11}, \ell_{21}\right\rbrace / \mu_2$.
\end{lem}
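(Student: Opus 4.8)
The plan is to verify directly that the hyper-gradient $\nabla\mathcal{L}(\lambda)$ is Lipschitz and to track how the constants of Assumption~\ref{assumpt:v2} propagate into the smoothness estimate. First I would write the hyper-gradient in closed form. Because $u^{\ast}(\lambda)=\arg\min_u L_2(u,\lambda)$ satisfies the stationarity identity $\nabla_u L_2(u^{\ast}(\lambda),\lambda)=0$, differentiating this identity in $\lambda$ (the implicit function theorem, applicable since $\mu_2$-strong convexity makes $\nabla^2_{uu}L_2$ invertible) gives
\[
\nabla u^{\ast}(\lambda) = -\bigl[\nabla^2_{uu}L_2(u^{\ast}(\lambda),\lambda)\bigr]^{-1}\nabla^2_{u\lambda}L_2(u^{\ast}(\lambda),\lambda),
\]
and substituting into the chain rule for $\mathcal{L}(\lambda)=L_1(u^{\ast}(\lambda),\lambda)$ yields
\[
\nabla\mathcal{L}(\lambda)=\nabla_\lambda L_1(u^{\ast}(\lambda),\lambda) -\nabla^2_{\lambda u}L_2(u^{\ast}(\lambda),\lambda)\bigl[\nabla^2_{uu}L_2(u^{\ast}(\lambda),\lambda)\bigr]^{-1}\nabla_u L_1(u^{\ast}(\lambda),\lambda).
\]

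Second, I would record the magnitude bounds that follow immediately from Assumption~\ref{assumpt:v2}: $\bigl\|[\nabla^2_{uu}L_2]^{-1}\bigr\|\le 1/\mu_2$ by strong convexity, $\|\nabla^2_{\lambda u}L_2\|\le\ell_{21}$ from the gradient-Lipschitz property of $L_2$, and $\|\nabla_u L_1\|\le\ell_{10}$. In particular $\|\nabla u^{\ast}(\lambda)\|\le \ell_{21}/\mu_2\le\kappa$, so $u^{\ast}(\cdot)$ is $\kappa$-Lipschitz --- a fact I will reuse repeatedly, since every factor above is evaluated along the curve $\lambda\mapsto(u^{\ast}(\lambda),\lambda)$.

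Third, the heart of the argument is to show each factor in the product term is Lipschitz in $\lambda$ and then combine them through the elementary product rule $\mathrm{Lip}(fgh)\le \mathrm{Lip}(f)\,\|g\|\,\|h\|+\|f\|\,\mathrm{Lip}(g)\,\|h\|+\|f\|\,\|g\|\,\mathrm{Lip}(h)$. The gradient-Lipschitz and Hessian-Lipschitz hypotheses control $\nabla_\lambda L_1$, $\nabla_u L_1$ and $\nabla^2_{\lambda u}L_2$ composed with $(u^{\ast},\mathrm{id})$, each picking up a factor $(\kappa+1)$ from the Lipschitzness of $u^{\ast}$. The delicate factor is the inverse Hessian: with $A=\nabla^2_{uu}L_2(u^{\ast}(\lambda_1),\lambda_1)$ and $B$ the analogue at $\lambda_2$, the resolvent identity $\|A^{-1}-B^{-1}\|\le\|A^{-1}\|\,\|B^{-1}\|\,\|A-B\|$ combined with $\|A^{-1}\|,\|B^{-1}\|\le1/\mu_2$ and the $\ell_{22}$-Hessian-Lipschitz bound $\|A-B\|\le\ell_{22}(\kappa+1)\|\lambda_1-\lambda_2\|$ shows that $\lambda\mapsto[\nabla^2_{uu}L_2]^{-1}$ is Lipschitz with constant $\mathcal{O}\!\bigl(\ell_{22}(\kappa+1)/\mu_2^2\bigr)$.

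Finally, assembling these estimates via the product rule and the triangle inequality gives the smoothness constant of $\mathcal{L}$. The dominant contribution comes from differentiating the inverse-Hessian factor: its derivative introduces two copies of $[\nabla^2_{uu}L_2]^{-1}$ (each of norm $1/\mu_2$) while the chain through $u^{\ast}$ supplies another factor of the condition number, so that $\|\nabla^2_{\lambda u}L_2\|\cdot\mathrm{Lip}\bigl([\nabla^2_{uu}L_2]^{-1}\bigr)\cdot\|\nabla_u L_1\|$ scales cubically in $\kappa$, giving $\ell=\mathcal{O}(\kappa^3)$. I expect the main obstacle to be the careful bookkeeping of which factor contributes which power of $\kappa$ in this inverse-Hessian Lipschitz bound; once the resolvent identity and the $\kappa$-Lipschitzness of $u^{\ast}$ are in hand the remainder is routine accounting, and this is precisely the computation of \citet{ghadimi2018approximation}.
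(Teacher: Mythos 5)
The paper does not prove this lemma at all---it imports it verbatim from \citet{ghadimi2018approximation}---and your reconstruction is precisely the standard argument behind that cited result (implicit-function-theorem form of the hyper-gradient, magnitude bounds from strong convexity and Lipschitzness, the resolvent identity for the inverse Hessian, and the product rule for Lipschitz constants), so it is correct and matches the intended source. One small observation: your proof necessarily invokes the $\ell_{22}$-Hessian-Lipschitz constant to make the inverse-Hessian factor Lipschitz, which Assumption~\ref{assumpt:v2} does supply but which the lemma's displayed definition of $\kappa$ omits (elsewhere the paper defines $\kappa=\max\{\ell_{10},\ell_{11},\ell_{21},\ell_{22}\}/\mu_2$, which is the definition your bookkeeping actually requires).
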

\begin{lem}\label{lem:omega:ast}
Under Assumption~\ref{assumpt:v2}, we have
\begin{align*}
 \left\| \omega_{\alpha}^{\ast}(\lambda) - \omega^{\ast}(\lambda)\right\| \leq \frac{C_0}{\alpha}
\end{align*}
where $C_0 = \ell_{10}/\mu_2$.
\end{lem}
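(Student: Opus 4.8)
The plan is to first reduce $\Phi^{\alpha}$ to a transparent closed form in $\omega$, and then compare first-order stationarity conditions for the two minimizers. Recall $L^{\alpha}(u,\omega,\lambda) = L_1(\omega,\lambda) + \alpha\big(L_2(\omega,\lambda) - L_2(u,\lambda)\big)$, where only the term $-\alpha L_2(u,\lambda)$ depends on $u$. Since $L_2(\cdot,\lambda)$ is $\mu_2$-strongly convex, the maximum over $u$ is attained at $u^{\ast}(\lambda)=\arg\min_u L_2(u,\lambda)$ and is independent of $\omega$, so
\[
\Phi^{\alpha}(\omega,\lambda) = L_1(\omega,\lambda) + \alpha L_2(\omega,\lambda) - \alpha\min_u L_2(u,\lambda).
\]
The last term is constant in $\omega$, hence $\omega_{\alpha}^{\ast}(\lambda) = \arg\min_{\omega}\big[L_1(\omega,\lambda) + \alpha L_2(\omega,\lambda)\big]$, whereas $\omega^{\ast}(\lambda) = \arg\min_{\omega} L_2(\omega,\lambda)$. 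In the regime $\alpha > 2\ell_{11}/\mu_2$ the objective $L_1+\alpha L_2$ is strongly convex in $\omega$ (as $L_2$ is $\mu_2$-strongly convex and $\nabla^2 L_1$ has norm at most $\ell_{11}$), so $\omega_{\alpha}^{\ast}$ is the unique stationary point and satisfies $\nabla_{\omega}L_1(\omega_{\alpha}^{\ast},\lambda) + \alpha\,\nabla_{\omega}L_2(\omega_{\alpha}^{\ast},\lambda) = 0$, while $\nabla_{\omega}L_2(\omega^{\ast},\lambda)=0$.

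The core estimate then follows from the strong monotonicity of $\nabla_{\omega}L_2$. Writing $\nabla_{\omega}L_2(\omega_{\alpha}^{\ast},\lambda) = -\tfrac{1}{\alpha}\nabla_{\omega}L_1(\omega_{\alpha}^{\ast},\lambda)$ from the first condition and substituting into the $\mu_2$-strong-convexity inequality for $L_2$, I would obtain
\[
\mu_2\|\omega_{\alpha}^{\ast}-\omega^{\ast}\|^2
\le \big\langle \nabla_{\omega}L_2(\omega_{\alpha}^{\ast},\lambda) - \nabla_{\omega}L_2(\omega^{\ast},\lambda),\ \omega_{\alpha}^{\ast}-\omega^{\ast}\big\rangle
= -\tfrac{1}{\alpha}\big\langle \nabla_{\omega}L_1(\omega_{\alpha}^{\ast},\lambda),\ \omega_{\alpha}^{\ast}-\omega^{\ast}\big\rangle.
\]
Applying Cauchy–Schwarz together with the $\ell_{10}$-Lipschitz bound $\|\nabla_{\omega}L_1\|\le\ell_{10}$, and dividing through by $\|\omega_{\alpha}^{\ast}-\omega^{\ast}\|$ (the case $\omega_{\alpha}^{\ast}=\omega^{\ast}$ being trivial), I arrive at $\mu_2\|\omega_{\alpha}^{\ast}-\omega^{\ast}\|\le \ell_{10}/\alpha$, which is precisely the claimed bound with $C_0=\ell_{10}/\mu_2$.

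I expect no serious obstacle. The one point that requires care is that $L_1$ need not be convex, so I cannot appeal to convexity of the full objective; instead the argument is anchored entirely on the strong monotonicity of $\nabla_{\omega}L_2$ combined with the stationarity condition, which is exactly why only the modulus $\mu_2$ and the Lipschitz constant $\ell_{10}$ enter $C_0$. The strong convexity of $L_1+\alpha L_2$ is used solely to guarantee existence and uniqueness of $\omega_{\alpha}^{\ast}$. Since $\omega$ ranges over $\R^d$ unconstrained, the stationarity condition is an equality; had $\omega$ been constrained to a convex set, the same derivation would carry over by replacing it with the corresponding variational inequality and using firm nonexpansiveness of the projection.
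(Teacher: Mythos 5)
Your proof is correct and follows essentially the same route as the paper: both arguments combine the stationarity conditions $\nabla_{\omega}L_1(\omega_{\alpha}^{\ast},\lambda)+\alpha\nabla_{\omega}L_2(\omega_{\alpha}^{\ast},\lambda)=0$ and $\nabla_{\omega}L_2(\omega^{\ast},\lambda)=0$ with the $\mu_2$-strong monotonicity of $\nabla_{\omega}L_2$ and the bound $\|\nabla_{\omega}L_1\|\le\ell_{10}$. The only cosmetic difference is that you invoke the inner-product form of strong monotonicity followed by Cauchy--Schwarz, while the paper uses the equivalent norm form $\|\nabla_{\omega}L_2(\omega,\lambda)-\nabla_{\omega}L_2(\omega',\lambda)\|\ge\mu_2\|\omega-\omega'\|$ directly.
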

\begin{proof}
 (1) By the optimality of $\omega_{\alpha}^{\ast}$ in $\Phi^{\alpha}(\omega, \lambda)$, we have
 \begin{align}
     \nabla_{\omega} \Phi^{\alpha}(\omega_{\alpha}^{\ast}(\lambda), \lambda) = \nabla_{\omega} L_1(\omega_{\alpha}^{\ast}(\lambda), \lambda) + \alpha \nabla_{\omega} L_2(\omega_{\alpha}^{\ast}(\lambda), \lambda) = 0. \notag 
 \end{align}
 For the strongly convexity of $L_2$ w.r.t. $\omega$ implies that 
 \begin{align}\label{sc:inequ}
     \left\| \nabla_{\omega} L_2(\omega, \lambda) - \nabla_{\omega} L_2(\omega^{'}, \lambda) \right\| \geq \mu_2 \left\| \omega - \omega^{'} \right\|, \quad \forall \, \omega, \omega^{'}. 
 \end{align}
 Recalling the definition $\omega^{\ast}(\lambda) = \arg\min_{\omega} L_2(\omega, \lambda)$, we achieve that
 \begin{align}
     \mu_2 \left\| \omega_{\alpha}^{\ast}(\lambda) - \omega^{\ast}(\lambda) \right\| & \mathop{\leq}^{(a)} \left\| \nabla_{\omega} L_2(\omega_{\alpha}^{\ast}(\lambda), \lambda) - \nabla_{\omega} L_2(\omega^{\ast}(\lambda), \lambda) \right\| \mathop{=}^{(b)} \left\| \nabla_{\omega} L_2(\omega_{\alpha}^{\ast}(\lambda), \lambda) \right\| \notag \\
     &  \mathop{=}^{(c)} \frac{1}{\alpha} \left\| \nabla_{\omega} L_1(\omega_{\alpha}^{\ast}(\lambda), \lambda) \right\|  \mathop{\leq}^{(d)} \frac{\ell_{10}}{\alpha} \notag 
 \end{align}
 where $(a)$ uses the property of (\ref{sc:inequ}) which is implied from the strongly convexity of $L_2$ (w.r.t. $\omega$), $(b)$ and $(c)$ are obtained from the optimality of $\omega^{\ast}(\lambda) $ and $\omega_{\alpha}^{\ast}(\lambda)$ resepctively, and $(d)$ follows from the Lipschtiz continuity of $L_1$.
\end{proof}
\begin{lem}\label{lem:grad:omega:bound}
Under Assumption~\ref{assumpt:v2}, if $\alpha > 2\ell_{11}/\mu_2$, we have $\left\|\nabla \omega_{\alpha}^{\ast}(\lambda)\right\| \leq 3\ell_{21} /\mu_2$.
\end{lem}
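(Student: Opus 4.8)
The plan is to exploit the fact that the inner maximization over $u$ completely decouples from $\omega$, which collapses $\omega_{\alpha}^{\ast}(\lambda)$ into the minimizer of a strongly convex surrogate, and then apply the implicit function theorem to that surrogate's optimality condition. First I would observe that $u$ enters $L^{\alpha}(u,\omega,\lambda) = L_1(\omega,\lambda) + \alpha\left(L_2(\omega,\lambda) - L_2(u,\lambda)\right)$ only through the term $-\alpha L_2(u,\lambda)$. Since $L_2(\cdot,\lambda)$ is $\mu_2$-strongly convex in its first argument, the maximization over $u$ is exact and
\begin{align*}
\Phi^{\alpha}(\omega,\lambda) = L_1(\omega,\lambda) + \alpha L_2(\omega,\lambda) - \alpha\min_u L_2(u,\lambda).
\end{align*}
The last term does not depend on $\omega$, so $\omega_{\alpha}^{\ast}(\lambda) = \arg\min_{\omega} g(\omega,\lambda)$ where $g(\omega,\lambda) := L_1(\omega,\lambda) + \alpha L_2(\omega,\lambda)$.

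Next I would establish strong convexity of $g$ in $\omega$ and invoke the implicit function theorem. Under Assumption~\ref{assumpt:v2}, the $\ell_{11}$-gradient-Lipschitz property gives $\nabla^2_{\omega\omega} L_1 \succeq -\ell_{11} I$, and the $\mu_2$-strong convexity of $L_2$ gives $\nabla^2_{\omega\omega} L_2 \succeq \mu_2 I$, so $\nabla^2_{\omega\omega} g \succeq (\alpha\mu_2 - \ell_{11}) I$, which is positive definite because $\alpha > 2\ell_{11}/\mu_2$ forces $\alpha\mu_2 - \ell_{11} > \alpha\mu_2/2 > 0$. With $g$ twice continuously differentiable and its $\omega$-Hessian invertible, differentiating the first-order condition $\nabla_{\omega} g(\omega_{\alpha}^{\ast}(\lambda),\lambda) = 0$ in $\lambda$ yields
\begin{align*}
\nabla \omega_{\alpha}^{\ast}(\lambda) = -\left[\nabla^2_{\omega\omega} g\right]^{-1} \nabla^2_{\omega\lambda} g,
\end{align*}
all evaluated at $(\omega_{\alpha}^{\ast}(\lambda),\lambda)$.

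Finally I would bound the two factors separately. The strong convexity gives $\left\|\left[\nabla^2_{\omega\omega} g\right]^{-1}\right\| \le \frac{1}{\alpha\mu_2 - \ell_{11}} < \frac{2}{\alpha\mu_2}$. The gradient-Lipschitz assumptions bound the full Hessians of $L_1$ and $L_2$ in operator norm by $\ell_{11}$ and $\ell_{21}$ respectively, and since an off-diagonal block of a symmetric matrix has operator norm at most that of the whole matrix, $\left\|\nabla^2_{\omega\lambda} L_1\right\| \le \ell_{11}$ and $\left\|\nabla^2_{\omega\lambda} L_2\right\| \le \ell_{21}$, hence $\left\|\nabla^2_{\omega\lambda} g\right\| \le \ell_{11} + \alpha\ell_{21}$. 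Combining the two bounds,
\begin{align*}
\left\|\nabla \omega_{\alpha}^{\ast}(\lambda)\right\| \le \frac{\ell_{11} + \alpha\ell_{21}}{\alpha\mu_2 - \ell_{11}} < \frac{2\ell_{11}}{\alpha\mu_2} + \frac{2\ell_{21}}{\mu_2}.
\end{align*}
The threshold $\alpha > 2\ell_{11}/\mu_2$ makes $\frac{2\ell_{11}}{\alpha\mu_2} < 1 \le \frac{\ell_{21}}{\mu_2}$, where the last inequality is the standard fact $\ell_{21} \ge \mu_2$ (equivalently $\kappa \ge 1$), delivering $\left\|\nabla \omega_{\alpha}^{\ast}(\lambda)\right\| < 3\ell_{21}/\mu_2$.

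I expect the main obstacle to be purely one of justification rather than computation: namely, verifying the regularity required to apply the implicit function theorem (twice continuous differentiability of $g$ and invertibility of its $\omega$-Hessian), both of which I read off from Assumption~\ref{assumpt:v2} together with the $\alpha$ threshold, and cleanly passing from the scalar gradient-Lipschitz constants to operator-norm control of the individual cross-Hessian blocks. Everything after the Jacobian formula is routine norm bookkeeping, with the final constant hinging on $\ell_{21} \ge \mu_2$.
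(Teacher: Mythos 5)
Your proposal is correct and follows essentially the same route as the paper: both exploit the fact that $u$ decouples from $\omega$ so that $\omega_{\alpha}^{\ast}(\lambda)$ minimizes $L_1+\alpha L_2$ in $\omega$, implicitly differentiate the first-order condition to get $\nabla\omega_{\alpha}^{\ast}=-[\nabla^2_{\omega\omega}]^{-1}\nabla^2_{\omega\lambda}$, and bound the inverse Hessian by $2/(\alpha\mu_2)$ (via $\ell_{11}\le\alpha\mu_2/2$) and the cross-Hessian by $\ell_{11}+\alpha\ell_{21}$, invoking $\mu_2\le\ell_{21}$ to land on $3\ell_{21}/\mu_2$. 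The only difference is cosmetic bookkeeping in the final arithmetic (the paper absorbs $\ell_{11}+\alpha\ell_{21}\le\tfrac{3}{2}\alpha\ell_{21}$ first, you split the fraction afterward).
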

\begin{proof}
Under Assumption~\ref{assumpt:v2}, if $\alpha \geq 2\ell_{11}/\mu_2$, then 
\begin{align}
\lambda_{\min}\left(\nabla_{\omega}^2 L^{\alpha}(u, \omega, \lambda) \right) & = \lambda_{\min} \left(\nabla_{\omega}^2 L_1(\omega, \lambda) + \alpha \nabla_{\omega}^2 L_2(\omega, \lambda) \right)  \notag \\
& =\lambda_{\min} \left(\nabla_{\omega}^2 L_1(\omega, \lambda) \right) + \alpha \lambda_{\min} \left(\nabla_{\omega}^2 L_2(\omega, \lambda) \right) \notag \\
& = - \ell_{11} + \alpha \mu_2  \geq \frac{\alpha \mu_2}{2}. \notag 
\end{align}
That is: $L^{\alpha}(u, \omega, \lambda)$ is $\alpha \mu_2 /2$-strongly convex in $\omega$. The definition of $u^{\ast}(\lambda) = \arg\min L_2(u,\lambda)$,  implies that $\nabla_{u} L_2(u^{\ast}(\lambda), \lambda) = 0$. Taking derivative w.r.t. $\lambda$ on the both sides of $\nabla_{u} L_2(u^{\ast}(\lambda), \lambda) = 0$ yields
\begin{align}\label{inequ:grad:u:ast}
\nabla_{u}^2 L_2(u^{\ast}(\lambda), \lambda) \nabla_{\lambda} u^{\ast}(\lambda) + \nabla_{u \lambda}^2 L_2(u^{\ast}(\lambda), \lambda) = 0.
\end{align}
By the optimality of $\omega_{\alpha}^{\ast}(\lambda)$ such that $\omega_{\alpha}^{\ast}(\lambda) = \arg\min_{\omega} \Phi^{\alpha}(\omega, \lambda)$, we have $\nabla_{\omega} \Phi^{\alpha}(\omega_{\alpha}^{\ast}(\lambda), \lambda) = \nabla_{\omega} L^{\alpha}(u^{\ast}(\lambda), \omega_{\alpha}^{\ast}(\lambda), \lambda)=0$. The derivative of $L^{\alpha}$ with respect to $\omega$ is not affected by $u^{\ast}(\lambda)$. Then $\nabla_{\omega} L^{\alpha}(u, \omega_{\alpha}^{\ast}(\lambda), \lambda)=0$ holds for any $u$.  Taking derivative w.r.t. $\lambda$ on both sides gives that
\begin{align*}
  \nabla_{\omega}^2 L^{\alpha}(u, \omega_{\alpha}^{\ast}(\lambda), \lambda)  \nabla \omega_{\alpha}^{\ast} (\lambda) + \nabla_{\omega\lambda}^2 L^{\alpha}(u, \omega_{\alpha}^{\ast}(\lambda), \lambda) =0.
\end{align*}
Then
\begin{align*}
\left\| \nabla \omega_{\alpha}^{\ast} (\lambda) \right\| & = \left\|- \nabla_{\omega\lambda}^2 L^{\alpha}(u, \omega_{\alpha}^{\ast}(\lambda), \lambda) \left[ \nabla_{\omega}^2 L^{\alpha}(u, \omega_{\alpha}^{\ast}(\lambda), \lambda)\right]^{-1} \right\| \notag \\
& \leq \left\| \nabla_{\omega\lambda}^2 L^{\alpha}(u, \omega_{\alpha}^{\ast}(\lambda), \lambda) \right\| \left\|\nabla_{\omega}^2 L^{\alpha}(u, \omega_{\alpha}^{\ast}(\lambda), \lambda)^{-1}  \right\|
\leq 3\ell_{21} /\mu_2
\end{align*}
where $\left\|\nabla_{\omega\lambda}^2 L^{\alpha}(u, \omega_{\alpha}^{\ast}(\lambda), \lambda) \right\|\leq \lambda_{\max}\left( \nabla_{\omega\lambda}^2 L^{\alpha}(u, \omega_{\alpha}^{\ast}(\lambda), \lambda)\right) \leq \ell_{11} + \alpha \ell_{21} \leq \alpha \left(\frac{\mu_2}{2}  + \ell_{21} \right) \leq \frac{3}{2} \alpha \ell_{21}$ with $\mu_2 \leq \ell_{21}$.
\end{proof}
\begin{lem}\label{grad:inequ:omega}
Under Assumption~\ref{assumpt:v2}, if $\alpha \geq 2\ell_{11}/\mu_2$, then 
\begin{align}
    \left\| \nabla \omega_{\alpha}^{\ast}(\lambda) - \nabla \omega^{\ast}(\lambda) \right\| \leq \frac{C_1}{\alpha} \notag 
\end{align}
where $C_1 = \mathcal{O}(\kappa^3)$.
\end{lem}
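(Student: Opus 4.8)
The plan is to obtain closed-form expressions for both Jacobians $\nabla \omega_\alpha^*(\lambda)$ and $\nabla \omega^*(\lambda)$ by implicit differentiation, and then control their difference through a matrix-inverse perturbation argument. First I would differentiate the two stationarity conditions. Since $\Phi^{\alpha}(\omega,\lambda) = L_1(\omega,\lambda) + \alpha(L_2(\omega,\lambda) - \min_u L_2(u,\lambda))$ and the last term is independent of $\omega$, differentiating the optimality condition $\nabla_\omega \Phi^{\alpha}(\omega_{\alpha}^{\ast}(\lambda),\lambda)=0$ in $\lambda$ gives
\begin{align*}
\nabla\omega_{\alpha}^{\ast}(\lambda) = -\left[\nabla_\omega^2 L_1 + \alpha\nabla_\omega^2 L_2\right]^{-1}\left[\nabla_{\omega\lambda}^2 L_1 + \alpha\nabla_{\omega\lambda}^2 L_2\right],
\end{align*}
all derivatives evaluated at $(\omega_{\alpha}^{\ast}(\lambda),\lambda)$, exactly as in the proof of Lemma~\ref{lem:grad:omega:bound}. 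Analogously, differentiating $\nabla_\omega L_2(\omega^{\ast}(\lambda),\lambda)=0$ yields $\nabla\omega^{\ast}(\lambda) = -[\nabla_\omega^2 L_2]^{-1}\nabla_{\omega\lambda}^2 L_2$ evaluated at $(\omega^{\ast}(\lambda),\lambda)$; both Jacobians exist by the implicit function theorem thanks to $\nabla_\omega^2 L_2 \succeq \mu_2 I$.

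Then I would normalize the $\alpha$-Jacobian by factoring $\alpha$ out of both brackets. Writing $\tilde A_\alpha = \frac1\alpha\nabla_\omega^2 L_1 + \nabla_\omega^2 L_2$ and $\tilde B_\alpha = \frac1\alpha\nabla_{\omega\lambda}^2 L_1 + \nabla_{\omega\lambda}^2 L_2$ (at $(\omega_{\alpha}^{\ast},\lambda)$), and $A = \nabla_\omega^2 L_2$, $B = \nabla_{\omega\lambda}^2 L_2$ (at $(\omega^{\ast},\lambda)$), both Jacobians take the uniform form $\nabla\omega_{\alpha}^{\ast} = -\tilde A_\alpha^{-1}\tilde B_\alpha$ and $\nabla\omega^{\ast} = -A^{-1}B$. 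The hypothesis $\alpha \geq 2\ell_{11}/\mu_2$ together with $\nabla_\omega^2 L_2 \succeq \mu_2 I$ gives $\tilde A_\alpha \succeq \frac{\mu_2}{2}I$, hence $\|\tilde A_\alpha^{-1}\|\leq 2/\mu_2$, while $\|A^{-1}\|\leq 1/\mu_2$; this uniform invertibility across all large $\alpha$ is what makes the estimate go through.

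Next I would apply the identity $\nabla\omega_{\alpha}^{\ast} - \nabla\omega^{\ast} = -\tilde A_\alpha^{-1}(\tilde B_\alpha - B) + \tilde A_\alpha^{-1}(\tilde A_\alpha - A)A^{-1}B$ and bound the three pieces. Each of $\tilde B_\alpha - B$ and $\tilde A_\alpha - A$ splits into a $\frac1\alpha\nabla^2 L_1$ term, bounded by $\ell_{11}/\alpha$, and a difference of $L_2$-Hessians at $\omega_{\alpha}^{\ast}$ versus $\omega^{\ast}$, bounded by $\ell_{22}\|\omega_{\alpha}^{\ast} - \omega^{\ast}\| \leq \ell_{22}\ell_{10}/(\mu_2\alpha)$ via the Hessian-Lipschitz part of Assumption~\ref{assumpt:v2} and Lemma~\ref{lem:omega:ast}. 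Thus both differences are $\mathcal{O}(\kappa^2\mu_2/\alpha)$, while $\|A^{-1}B\|\leq \ell_{21}/\mu_2 = \mathcal{O}(\kappa)$. Combining the factors yields $\|\nabla\omega_{\alpha}^{\ast} - \nabla\omega^{\ast}\| \leq \mathcal{O}(\kappa^2/\alpha) + \mathcal{O}(\kappa^3/\alpha) = \mathcal{O}(\kappa^3/\alpha)$, so $C_1 = \mathcal{O}(\kappa^3)$.

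The main obstacle is the second term of the decomposition: the two Jacobians live at different scales (one bracket carries a factor $\alpha$) and are evaluated at different points, so a naive subtraction does not deliver an $\mathcal{O}(1/\alpha)$ bound. The normalization by $\alpha$ plus the perturbation identity is the crux, and the dominant $\kappa^3$ arises precisely from the product $\|\tilde A_\alpha^{-1}\|\cdot\|\tilde A_\alpha - A\|\cdot\|A^{-1}B\|$, where the Hessian-Lipschitz estimate supplies one factor of $\kappa^2$ and $\|A^{-1}B\|$ contributes the remaining $\kappa$.
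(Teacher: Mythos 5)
Your proposal is correct, and it is essentially the only reasonable route: implicit differentiation of the two stationarity conditions, normalization of the penalized system by $\alpha$, and the resolvent perturbation identity $\tilde A_\alpha^{-1}\tilde B_\alpha - A^{-1}B = \tilde A_\alpha^{-1}(\tilde B_\alpha - B) - \tilde A_\alpha^{-1}(\tilde A_\alpha - A)A^{-1}B$. Note that the paper itself does not prove this lemma at all --- it only writes ``the proof is similar to Lemma B.5 in Lesi Chen's paper'' --- so your argument fills a genuine gap rather than paralleling an in-paper derivation. I checked the key quantitative steps: $\alpha \ge 2\ell_{11}/\mu_2$ does give $\tilde A_\alpha \succeq \tfrac{\mu_2}{2}I$ and hence $\lVert \tilde A_\alpha^{-1}\rVert \le 2/\mu_2$; the splitting of $\tilde A_\alpha - A$ and $\tilde B_\alpha - B$ into an $\ell_{11}/\alpha$ piece and an $\ell_{22}\lVert \omega_\alpha^{\ast}-\omega^{\ast}\rVert \le \ell_{22}\ell_{10}/(\mu_2\alpha)$ piece (via Lemma~\ref{lem:omega:ast} and the Hessian-Lipschitz part of Assumption~\ref{assumpt:v2}, applied to the cross block exactly as the paper does elsewhere) yields $\mathcal{O}(\kappa^2\mu_2/\alpha)$; and the product $\lVert \tilde A_\alpha^{-1}\rVert\cdot\lVert\tilde A_\alpha - A\rVert\cdot\lVert A^{-1}B\rVert$ correctly produces the dominant $\mathcal{O}(\kappa^3/\alpha)$ term. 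Your identification of the normalization by $\alpha$ as the crux is also accurate --- without it the two linear systems are at incompatible scales and the naive difference diverges. The only point worth making explicit in a final write-up is that the implicit function theorem applies to $\omega_\alpha^{\ast}$ because $\nabla_\omega^2 L_1 + \alpha\nabla_\omega^2 L_2 \succeq \tfrac{\alpha\mu_2}{2}I \succ 0$ (not merely because $\nabla_\omega^2 L_2 \succeq \mu_2 I$, which only covers $\omega^{\ast}$), but you have all the ingredients for that on the page.
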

\begin{proof}(of Lemma~\ref{grad:inequ:omega})
The proof is similar to Lemma B.5. in Lesi Chen's paper. We omit it here.
\end{proof}

\subsection{Proof of Theorem~\ref{thm:equiv}}
\thmequivstronger*
\begin{proof}
(1): To demonstrate (\ref{inequ:zero:dif}) of Theorem~\ref{thm:equiv}. By definitions of $\omega^{\ast}(\lambda) = u^{\ast}(\lambda)= \arg\min_{\omega} L_2(\omega, \lambda)$, we have $u^{\ast}(\lambda) = \omega^{\ast}(\lambda)$, then
\begin{align}
|\mathcal{L}(\lambda) - \Gamma^{\alpha}(\lambda)| &  \leq |L_1(\omega_{\alpha}^{\ast}(\lambda), \lambda) + \alpha \left( L_2(\omega_{\alpha}^{\ast}(\lambda), \lambda) - L_2(u^{\ast}(\lambda), \lambda) \right) - L_1(\omega^{\ast}(\lambda), \lambda)| \notag \\
& \leq |L_1(\omega_{\alpha}^{\ast}(\lambda), \lambda)-L_1(\omega^{\ast}(\lambda), \lambda)| + \alpha |L_2(\omega_{\alpha}^{\ast}(\lambda), \lambda) - L_2(u^{\ast}(\lambda), \lambda)| \notag \\
& \mathop{\leq}^{(a)} \ell_{10}\left\|\omega_{\alpha}^{\ast}(\lambda) - \omega^{\ast}(\lambda) \right\| + \frac{\alpha \ell_{21}}{2} \left\| \omega_{\alpha}^{\ast}(\lambda) - u^{\ast}(\lambda) \right\|^2 \notag \\
& \mathop{\leq}^{(b)} \ell_{10} \left(1+ \frac{\ell_{21}}{2\mu_2} \right) \frac{\ell_{10}}{\mu_2 \alpha}
\end{align}
where $(a)$ uses the Lipschtiz continuity of $L_1$ and gradient Lipschitz property of $L_2$ and $(b)$ uses the result of Lemma \ref{lem:omega:ast}.

(2): To prove (\ref{inequ:one:dif}) of Theorem~\ref{thm:equiv}. The gradient of the minimax problem $L^{\alpha}$ in (\ref{P:min:max}) is computed by:
\begin{subequations}
\begin{align}
  \nabla_{\lambda} L^{\alpha}(u, \omega, \lambda) & = \nabla_{\lambda} L_1(\omega, \lambda) + \alpha \left( \nabla_{\lambda} L_2(\omega, \lambda) - \nabla_{\lambda} L_2(u, \lambda)\right)  \label{minimax:grad:lambda}\\
 \nabla_{u} L^{\alpha}(u, \omega, \lambda) & =  - \nabla_{u} L_2(u, \lambda) \label{minimax:grad:u} \\
  \nabla_{\omega} L^{\alpha}(u, \omega, \lambda) & = \nabla_{\omega} L_1(\omega, \lambda) + \alpha \nabla_{\omega} L_2(\omega, \lambda).  \label{minimax:grad:omega}
\end{align}
\end{subequations}
By the optimality of $u^{\ast}(\lambda)$ such that $\nabla_{u} L^{\alpha}(u^{\ast}(\lambda), \omega, \lambda) = 0$ for any $\omega$, then we have $\nabla_{u} L^{\alpha}(u^{\ast}(\lambda), \omega_{\alpha}^{\ast}(\lambda), \lambda) = 0$. By the optimality of $\omega_{\alpha}^{\ast}$, then $\nabla_{\omega} L^{\alpha}(u^{\ast}(\lambda), \omega_{\alpha}^{\ast}(\lambda), \lambda) =0$, we thus have
   \begin{align}
       \nabla \Gamma^{\alpha}(\lambda) & = \nabla_{\lambda} \Phi(\omega_{\alpha}^{\ast}(\lambda), \lambda) = \nabla_{\lambda} L^{\alpha}(u^{\ast}(\lambda), \omega_{\alpha}^{\ast}(\lambda), \lambda)  \notag \\
       & = \nabla_{\lambda} L^{\alpha}(u^{\ast}(\lambda), \omega_{\alpha}^{\ast}(\lambda), \lambda) +  (\nabla_{\lambda} u^{\ast}(\lambda))^{T}\nabla_{u} L^{\alpha}(u^{\ast}(\lambda), \omega_{\alpha}^{\ast}(\lambda), \lambda)  \notag \\
       & \quad +  (\nabla_{\lambda} \omega_{\alpha}^{\ast}(\lambda))^{T}\nabla_{\omega} L^{\alpha}(u^{\ast}(\lambda), \omega_{\alpha}^{\ast}(\lambda), \lambda) \notag \\
       & = \nabla_{\lambda} L^{\alpha}(u^{\ast}(\lambda), \omega_{\alpha}^{\ast}(\lambda), \lambda).
   \end{align} 
For bilevel problem, the hyper-gradient can be estimated as:
\begin{align}\label{inequ:grad:bilevel}
 \nabla \mathcal{L}(\lambda) & = \nabla_{\lambda} L_1(u^{\ast}(\lambda), \lambda)   = \nabla_{\lambda} L_1(u^{\ast}(\lambda), \lambda) + (\nabla_{\lambda} u^{\ast}(\lambda))^T \nabla_{u} L_1(u^{\ast}(\lambda), \lambda) \notag \\
 & \mathop{=}^{(a)} \nabla_{\lambda} L_1(u^{\ast}(\lambda), \lambda) - \nabla_{\lambda u}^2L_2(u^{\ast}(\lambda), \lambda)\left[\nabla_{u}^2L_2(u^{\ast}(\lambda), \lambda)\right]^{-1} \nabla_{u} L_1(u^{\ast}(\lambda), \lambda)
\end{align}
where $(a)$ uses the fact derived from the equality (\ref{inequ:grad:u:ast}). By using the definition of $\nabla_{\lambda} L^{\alpha}(u^{\ast}(\lambda), \omega, \lambda)$ and applying (\ref{inequ:grad:bilevel}), we have
\begin{align}\label{inequ:L:Lalpha}
 \nabla \mathcal{L}(\lambda) - \nabla_{\lambda} L^{\alpha}(u^{\ast}(\lambda), \omega, \lambda) & =  \nabla_{\lambda} L_1(u^{\ast}(\lambda), \lambda) -  \nabla_{\lambda} L_1(\omega, \lambda) - \alpha \left( \nabla_{\lambda} L_2(\omega, \lambda) - \nabla_{\lambda} L_2(u^{\ast}(\lambda), \lambda)\right)  \notag \\
& - \nabla_{\lambda u}^2L_2(u^{\ast}(\lambda), \lambda)\left[\nabla_{u}^2L_2(u^{\ast}(\lambda), \lambda)\right]^{-1} \nabla_{u} L_1(u^{\ast}(\lambda), \lambda)
\end{align}
We then turn to estimate the difference of $\nabla_{\lambda} L_2(\omega, \lambda)$ and $\nabla_{\lambda} L_2(u^{\ast}(\lambda), \lambda)$ below:
\begin{align}\label{inequ:grad:lambda:L2}
\nabla_{\lambda} L_2(\omega, \lambda) - \nabla_{\lambda} L_2(u^{\ast}(\lambda), \lambda) &  =  \nabla_{\lambda} L_2(\omega, \lambda) - \nabla_{\lambda} L_2(u^{\ast}(\lambda), \lambda)- \nabla_{\lambda u}^2 L_2(u^{\ast}(\lambda), \lambda)^{T}(\omega - u^{\ast}(\lambda)) \notag \\
& + \nabla_{\lambda u}^2 L_2(u^{\ast}(\lambda), \lambda)^{T}(\omega - u^{\ast}(\lambda))
\end{align}
By the definition of $u^{\ast}(\lambda)$ such that $u^{\ast}(\lambda)= \arg\min L_2(u, \lambda)$, we have $\nabla_{u} L_2(u^{\ast}(\lambda),\lambda)=0$. Note that the equation $ \nabla_{\omega} L_1(\omega, \lambda) + \alpha \nabla_{\omega} L_2(\omega, \lambda) = \nabla_{\omega} L^{\alpha}(u, \omega, \lambda)$  holds for any $u$, then $\omega - u^{\ast}(\lambda)$  is reformulated as 
\begin{align}\label{inequ:omega:uast}
\omega - u^{\ast}(\lambda) & = - \nabla_{u}^2 L_2(u^{\ast}(\lambda), \lambda)^{-1} \left(\nabla_{\omega} L_2(\omega, \lambda) - \nabla_{u} L_2(u^{\ast}(\lambda),\lambda \right)  - \nabla_{u}^2 L_2(u^{\ast}(\lambda), \lambda)(\omega - u^{\ast}(\lambda)) \notag \\
 & + \frac{1}{\alpha} \nabla_{u}^2 L_2(u^{\ast}(\lambda), \lambda)^{-1} \left(\nabla_{\omega} L^{\alpha}(u^{\ast}(\lambda), \omega, \lambda) - \nabla_{\omega} L_1(\omega, \lambda)\right).
\end{align}
Incorporating (\ref{inequ:omega:uast}) into (\ref{inequ:grad:lambda:L2}) and then applying the result into (\ref{inequ:L:Lalpha}) gives that
\begin{align}\label{inequ:grad:L:core}
& \nabla \mathcal{L}(\lambda) -   \nabla_{\lambda} L^{\alpha}(u^{\ast}(\lambda), \omega, \lambda)\notag \\
& =  \nabla_{\lambda} L_1(u^{\ast}(\lambda), \lambda) - \nabla_{\lambda u}^2 L_2(u^{\ast}(\lambda), \lambda)\left[\nabla_{u}^2L_2(u^{\ast}(\lambda), \lambda)\right]^{-1} \nabla_{u} L_1(u^{\ast}(\lambda), \lambda) -   \nabla_{\lambda} L_1(\omega, \lambda) \notag \\
& - \alpha \left( \nabla_{\lambda} L_2(\omega, \lambda) - \nabla_{\lambda} L_2(u^{\ast}(\lambda), \lambda)\right) \notag \\
& =  \nabla_{\lambda} L_1(u^{\ast}(\lambda), \lambda)  - \nabla_{\lambda u}^2L_2(u^{\ast}(\lambda), \lambda)\left[\nabla_{u}^2L_2(u^{\ast}(\lambda), \lambda)\right]^{-1} \left(\nabla_{u} L_1(u^{\ast}(\lambda), \lambda) - \nabla_{\omega} L_1(\omega, \lambda) \right) \notag \\ 
& - \nabla_{\lambda} L_1(\omega, \lambda) + \alpha \left(\nabla_{\lambda} L_2(\omega, \lambda) - \nabla_{\lambda} L_2(u^{\ast}(\lambda), \lambda)- \nabla_{\lambda u}^2 L_2(u^{\ast}(\lambda), \lambda)^{T}(\omega - u^{\ast}(\lambda)) \right) \notag \\
& - \alpha \nabla_{\lambda u}^2L_2(u^{\ast}(\lambda), \lambda)^{T}\nabla_{u}^2 L_2(u^{\ast}(\lambda), \lambda)^{-1} \left(\nabla_{\omega} L_2(\omega, \lambda) - \nabla_{u} L_2(u^{\ast}(\lambda),\lambda)  - \nabla_{u}^2 L_2(u^{\ast}(\lambda),\lambda)^{T}(\omega - u^{\ast}(\lambda)\right) \notag \\
& - \nabla_{\lambda u}^2 L_2(u^{\ast}(\lambda), \lambda)^{T}\nabla_{u}^2 L_2(u^{\ast}(\lambda), \lambda)^{-1}\nabla_{\omega} L^{\alpha}(u^{\ast}(\lambda), \omega, \lambda) 
\end{align}
(i): By the Hessian-Lipschitz of $L_2$, the third term of (\ref{inequ:grad:L:core}) can be estimated as:
\begin{align}
  \left\|\nabla_{\lambda} L_2(\omega, \lambda) - \nabla_{\lambda} L_2(u^{\ast}(\lambda), \lambda)- \nabla_{\lambda u}^2 L_2(u^{\ast}(\lambda), \lambda)^{T}(\omega - u^{\ast}(\lambda))  \right\| \leq \frac{\ell_{22}}{2} \left\| \omega - u^{\ast}(\lambda) \right\|^2.
\end{align}
(ii): Similarly, we use the Hessian-Lipschitz of $L_2$ and estimate the fourth term of (\ref{inequ:grad:L:core}) below:
\begin{align}
\left\| \nabla_{\omega} L_2(\omega, \lambda) - \nabla_{u} L_2(u^{\ast}(\lambda),\lambda)  - \nabla_{u}^2 L_2(u^{\ast}(\lambda),\lambda)^{T}(\omega - u^{\ast}(\lambda))\right\| \leq \frac{\ell_{22}}{2} \left\| \omega - u^{\ast}(\lambda) \right\|^2.
\end{align}
(iii): by the smoothness of $L_1$, we have
\begin{subequations}
\begin{align}
 \left\| \nabla_{\lambda} L_1(u^{\ast}(\lambda), \lambda) - \nabla_{\lambda} L_1(\omega, \lambda) \right\| & \leq \ell_{11} \left\| \omega - u^{\ast}(\lambda) \right\| \\
\left\| \nabla_{u} L_1(u^{\ast}(\lambda), \lambda) - \nabla_{\omega} L_1(\omega, \lambda)  \right\| & \leq \ell_{11} \left\| \omega - u^{\ast}(\lambda) \right\|.
\end{align}
\end{subequations}
Based on the above results, we can conclude that
\begin{align}
 &\left\| \nabla \mathcal{L}(\lambda) -   \nabla_{\lambda} L^{\alpha}(u^{\ast}(\lambda), \omega, \lambda) + \nabla_{\lambda u}^2L_2(u^{\ast}(\lambda), \lambda)^{T}\nabla_{u}^2 L_2(u^{\ast}(\lambda), \lambda)^{-1}\nabla_{\omega} L^{\alpha}(u^{\ast}(\lambda), \omega, \lambda)  \right\|^2  \notag \\
 & \leq \ell_{11}\left( 1+ \ell_{21}/\mu_2\right)\left\|\omega- u^{\ast}(\lambda) \right\|  + \frac{\alpha \ell_{22}}{2} \left(1 +  \ell_{21}/\mu_2\right)\left\|\omega- u^{\ast}(\lambda) \right\|^2 
\end{align}
Let $\omega = \omega_{\alpha}^{\ast}(\lambda)$, then $\nabla_{\omega} L^{\alpha}(u^{\ast}(\lambda), \omega_{\alpha}^{\ast}(\lambda), \lambda) = 0$ by the optimality of $\omega_{\alpha}^{\ast}(\lambda)$, we can achieve that 
\begin{align}
  \left\| \nabla \mathcal{L}(\lambda) -  \nabla \Gamma^{\alpha}(\lambda) \right\| & =     \left\| \nabla \mathcal{L}(\lambda) -  \nabla_{\lambda} L^{\alpha}(u^{\ast}(\lambda), \omega_{\alpha}^{\ast}(\lambda), \lambda) \right\| \notag \\
  & \leq \ell_{11}\left( 1+ \ell_{21}/\mu_2\right)\left\|\omega_{\alpha}^{\ast}(\lambda)- u^{\ast}(\lambda) \right\|  + \frac{\alpha \ell_{22}}{2} \left(1 +  \ell_{21}/\mu_2\right)\left\|\omega_{\alpha}^{\ast}(\lambda)- u^{\ast}(\lambda) \right\|^2  \notag \\
  & \leq \ell_{11}\left( 1+ \ell_{21}/\mu_2\right)\left\|\omega_{\alpha}^{\ast}(\lambda)- u^{\ast}(\lambda) \right\|  + \frac{\ell_{10} \ell_{22}}{2 \mu_2} \left(1 +  \ell_{21}/\mu_2\right)\left\|\omega_{\alpha}^{\ast}(\lambda)- u^{\ast}(\lambda) \right\| \notag \\
  & \leq \left(\ell_{11} +  \frac{\ell_{10} \ell_{22}}{2 \mu_2} \right) \left(1 +  \ell_{21}/\mu_2\right) \frac{\ell_{10}}{\mu_2\alpha}.
\end{align}

(3): We turn to prove (\ref{inequ:second:dif}) of Theorem~\ref{thm:equiv}.
\begin{align}
    \nabla^2 \Gamma_{\alpha}(\lambda) & = \nabla_{\lambda} \left(\nabla_{\lambda} L_1(\omega_{\alpha}^{\ast}(\lambda), \lambda) +  \alpha \left( \nabla_{\lambda}L_2(\omega_{\alpha}^{\ast}(\lambda), \lambda) - \nabla_{\lambda} L_2(u^{\ast}(\lambda), \lambda) \right)\right) \notag \\
    & = \nabla_{\lambda}^2 L_1(\omega_{\alpha}^{\ast}(\lambda), \lambda) + \nabla \omega_{\alpha}^{\ast}(\lambda)^{T} \nabla_{\omega \lambda}^2 L_1(\omega_{\alpha}^{\ast}(\lambda), \lambda) \notag \\
    & + \alpha \nabla_{\lambda}^2 L_2(\omega_{\alpha}^{\ast}(\lambda), \lambda) + \alpha \nabla \omega_{\alpha}^{\ast}(\lambda)^{T} \nabla_{\omega \lambda}^2 L_2(\omega_{\alpha}^{\ast}(\lambda), \lambda) \notag \\
    & -  \alpha  \nabla_{\lambda}^2 L_2(u^{\ast}(\lambda), \lambda) - \alpha \nabla u^{\ast}(\lambda)^{T} \nabla_{u\lambda}^2 L_2(u^{\ast}(\lambda), \lambda) \notag \\
    & = \nabla_{\lambda}^2 L_1(\omega_{\alpha}^{\ast}(\lambda), \lambda) + \nabla \omega_{\alpha}^{\ast}(\lambda)^{T} \nabla_{\omega \lambda}^2 L_1(\omega_{\alpha}^{\ast}(\lambda), \lambda) + \alpha \left(\nabla_{\lambda}^2 L_2(\omega_{\alpha}^{\ast}(\lambda), \lambda) - \nabla_{\lambda}^2 L_2(u^{\ast}(\lambda), \lambda)\right)\notag \\
    &  + \alpha \left( \nabla \omega_{\alpha}^{\ast}(\lambda)^{T} \nabla_{\omega \lambda}^2 L_2(\omega_{\alpha}^{\ast}(\lambda), \lambda) - \nabla u^{\ast}(\lambda)^{T} \nabla_{u\lambda}^2 L_2(u^{\ast}(\lambda), \lambda)\right)
\end{align}
\begin{align}
\left\| \nabla^2 \Gamma_{\alpha}(\lambda)\right\| & \leq \left\| \nabla_{\lambda}^2 L_1(\omega_{\alpha}^{\ast}(\lambda), \lambda)\right\| + \left\|\nabla \omega_{\alpha}^{\ast}(\lambda) \right\|\left\| \nabla_{\omega \lambda}^2 L_1(\omega_{\alpha}^{\ast}(\lambda), \lambda)\right\| \notag \\
& + \alpha \left\|\nabla_{\lambda}^2 L_2(\omega_{\alpha}^{\ast}(\lambda), \lambda) - \nabla_{\lambda}^2 L_2(u^{\ast}(\lambda), \lambda) \right\|  \notag \\
& + \alpha \left\|\nabla \omega_{\alpha}^{\ast}(\lambda)^{T} \nabla_{\omega \lambda}^2 L_2(\omega_{\alpha}^{\ast}(\lambda), \lambda) - \nabla u^{\ast}(\lambda)^{T} \nabla_{u\lambda}^2 L_2(u^{\ast}(\lambda), \lambda) \right\| \notag \\
& \mathop{\leq}^{(a)} \ell_{11} \left(1 + 3\ell_{21}/\mu_2\right)  + \alpha \ell_{22}\left\| \omega_{\alpha}^{\ast}(\lambda) - u^{\ast}(\lambda) \right\| \notag \\
& + \alpha \left\|\nabla \omega_{\alpha}^{\ast}(\lambda)^{T} \nabla_{\omega \lambda}^2 L_2(\omega_{\alpha}^{\ast}(\lambda), \lambda) -\nabla \omega_{\alpha}^{\ast}(\lambda)^{T} \nabla_{u \lambda}^2 L_2(u^{\ast}(\lambda), \lambda) \right\| \notag \\
& + \alpha \left\|\nabla \omega_{\alpha}^{\ast}(\lambda)^{T} \nabla_{u \lambda}^2 L_2(u^{\ast}(\lambda), \lambda) - \nabla u^{\ast}(\lambda)^{T} \nabla_{u\lambda}^2 L_2(u^{\ast}(\lambda), \lambda) \right\| \notag\\
& \mathop{\leq}^{(b)} \ell_{11} \left(1 + 3\ell_{21}/\mu_2\right)  + \alpha \ell_{22} \left(1 + 3\ell_{21}/\mu_2 \right)\left\| \omega_{\alpha}^{\ast}(\lambda) - u^{\ast}(\lambda) \right\| + \alpha\ell_{21} \left\|\nabla \omega_{\alpha}^{\ast}(\lambda) - \nabla u^{\ast}(\lambda) \right\| \notag \\
& \mathop{\leq}^{(c)} \ell_{11} \left(1 + 3\ell_{21}/\mu_2\right)  + \ell_{22} \left(1 + 3\ell_{21}/\mu_2 \right) \frac{\ell_{10}}{\mu_2} + \ell_{21} C_1
\end{align}
where $(a)$ uses the facts that $\left\|\nabla_{\lambda}^2 L_1(\omega_{\alpha}^{\ast}(\lambda), \lambda) \right\| \leq \ell_{11}$ and  $\left\|\nabla_{\omega\lambda}^2 L_1(\omega_{\alpha}^{\ast}(\lambda), \lambda) \right\| \leq \ell_{11}$, and applies the result of Lemma~\ref{lem:grad:omega:bound} and Hessian-Lipschitz of $L_2$; and (b) follows the Cauchy-Schwarz inequality and the property of Hessian-Lipschitz of $L_2$; (c) uses the results of Lemmas~\ref{lem:omega:ast} and \ref{grad:inequ:omega}.
\end{proof}

\section{Proofs of Lemmas and Theorems in Subsection~\ref{subsec:onestage}}
We denote:
\begin{align}
    L^{\alpha}(u, \omega, \lambda) = L_1(\omega, \lambda) + \alpha \left( L_2(\omega, \lambda) - L_2(u, \lambda) \right) \notag 
\end{align}
\lemonestagevtwo*
\begin{proof}\deleted{(of Lemma \ref{lem:onestage:v2})}
\textbf{For Claim (i)}: Under Assumption~\ref{assumpt:v2}, we have 
\begin{align}
    \lambda_{\max}(\nabla^2 L^{\alpha}(u, \omega, \lambda)) & =  \lambda_{\max}(\nabla^2 L_1(\omega, \lambda)) + \alpha  \lambda_{\max}(\nabla^2 L_2(\omega, \lambda) - \nabla^2 L_2(u, \lambda)) \notag \\
    & \leq  \ell_{11} + \alpha \ell_{21} + \alpha \ell_{21} \leq  \frac{\mu_2\alpha}{2} + 2\alpha \ell_{21}  \leq \frac{5}{2}\alpha \ell_{21}. \notag 
\end{align}
Because $L_2$ is $\mu_2$ strongly convex, then we have $L^{\alpha}$ is $\mu_2\alpha$-strongly concave w.r.t. $u$. Besides, $L_1$ is $\ell_{11}$ gradient-Lipschitz, then
\begin{align}
    \lambda_{\min}(\nabla_{\omega}^2 L^{\alpha}(u,\omega, \lambda)) = \lambda_{\min}(\nabla_{\omega}^2 L_1(\omega, \lambda) + \alpha \nabla_{\omega}^2 L_2(\omega, \lambda)) = - \ell_{11} + \alpha \mu_2 \geq \frac{\ell_{11}\alpha}{2} \notag 
\end{align}
where $\alpha \geq 2\ell_{11}/\mu_2$.

{\bf For Claim (ii)}:
Since $L_2$ is $\mu_2$-strongly convex in $u$ for any $(\omega, \lambda)$,  then the function $L^{\alpha}$ is $\mu_2\alpha$-strongly concave in $u$. Then the function $u^{\ast}(\lambda)$ is unique and well-defined. 
Let $x = (\omega, \lambda)$ and we choose $x_1 =(\omega, \lambda_1)$ and $x_2=(\omega, \lambda_2)$. By the optimality of $u^{\ast}(\lambda_1)$ and $u^{\ast}(\lambda_2)$: for any $u \in \R^d$, we have
\begin{subequations}
\begin{align}
\left\langle u - u^{\ast}(\lambda_1), \nabla_{u} L^{\alpha} (u^{\ast}(\lambda_1), x_1)\right\rangle & \leq 0,  \label{inequ:u:1}\\
\left\langle u - u^{\ast}(\lambda_2), \nabla_{u} L^{\alpha} (u^{\ast}(\lambda_2), x_2)\right\rangle & \leq 0. \label{inequ:u:2}
\end{align}
\end{subequations}
Let $u = u^{\ast}(\lambda_2)$ in (\ref{inequ:u:1}) and $u = u^{\ast}(\lambda_1)$ in (\ref{inequ:u:2}) and then sum the two inequalities, we get
\begin{align}\label{inequ:uast:1}
\left\langle u^{\ast}(\lambda_2) - u^{\ast}(\lambda_1), \nabla_{u} L^{\alpha} (u^{\ast}(\lambda_1), x_1) - \added{\nabla_u} L^{\alpha} (u^{\ast}(\lambda_2), x_2)\right\rangle \leq 0.
\end{align}
Recalling the strongly-concavity of $L^{\alpha}(u, x_1)$ with respect to $u$, we have
\begin{align}\label{inequ:uast:2}
\left\langle u^{\ast}(\lambda_2) -u^{\ast}(\lambda_1), \nabla_u L^{\alpha}(u^{\ast}(\lambda_2) , x_1) - \nabla_u L^{\alpha}(u^{\ast}(\lambda_1), x_1) \right\rangle + \mu_2\alpha \left\|u^{\ast}(\lambda_2) -u^{\ast}(\lambda_1) \right\|^2 \leq 0.
\end{align}
Plugging the two inequalities (\ref{inequ:uast:1}) and (\ref{inequ:uast:2}) gives that
\begin{align}\label{inequ:u:3}
 \mu_2\alpha\left\|u^{\ast}(\lambda_2) -u^{\ast}(\lambda_1) \right\|^2  & \leq     \left\langle u^{\ast}(\lambda_2) -u^{\ast}(\lambda_1),  \nabla_u L^{\alpha}(u^{\ast}(\lambda_2), x_2) - \nabla_u L^{\alpha}(u^{\ast}(\lambda_2) , x_1)\right\rangle \notag \\
 & \mathop{\leq}^{(a)} \left\| u^{\ast}(\lambda_2) -u^{\ast}(\lambda_1)\right\| \left\| \nabla_u L^{\alpha}(u^{\ast}(\lambda_2), x_2) - \nabla_u L^{\alpha}(u^{\ast}(\lambda_2) , x_1)\right\| \notag \\
 & \mathop{\leq}^{(b)} \alpha\ell_{21} \left\| u^{\ast}(\lambda_2) -u^{\ast}(\lambda_1)\right\| \left\|x_2 -x_1 \right\| \notag\\
 & = \alpha\ell_{21}\left\| u^{\ast}(\lambda_2) -u^{\ast}(\lambda_1)\right\| \left\| \lambda_1 -\lambda_2 \right\|
\end{align}
where $(a)$ uses the Cauchy-Schwartz inequality and $(b)$ follows the fact that $L^{\alpha}$ is $\alpha \ell_{21}$ gradient Lipschitz in $u$.
Thus 
\begin{align}\label{inequ:u:4}
   \left\|u^{\ast}(\lambda_2) -u^{\ast}(\lambda_1) \right\| \leq   \kappa \left\| \lambda_1 -\lambda_2 \right\|.
\end{align}
That is $u^{\ast}(\lambda)$ is $\kappa$-Lipschitz continuous with $\kappa  = \max \left\lbrace \ell_{22}, \ell_{21}, \ell_{10}, \ell_{11}\right\rbrace/\mu_2$.
Since $u^{\ast}(\lambda)$ is unique and, from Danskin's theorem that $\Phi^{\alpha}$ is differentiable with 
$$\nabla_{\lambda} \Phi^{\alpha}(\omega, \lambda) = \nabla_{\lambda} L^{\alpha}(u^{\ast}(\lambda), \omega, \lambda)$$ and $$\nabla_{\omega} \Phi^{\alpha}(\omega, \lambda) = \nabla_{\omega} L^{\alpha}(u^{\ast}(\lambda), \omega, \lambda) = \nabla_{\omega} L^{\alpha}(u, \omega, \lambda), \quad \text{for any} \,\, u, $$ then for any $x = (\omega, \lambda)$ and $x'=(\omega', \lambda')$
\begin{align}
\left\|\nabla_{\lambda} \Phi^{\alpha}(x) - \nabla_{\lambda}  \Phi^{\alpha}(x') \right\| & = \left\|\nabla_{\lambda} L^{\alpha}(u^{\ast}({\lambda}), x) - \nabla_{\lambda} L^{\alpha}( u^{\ast}({\lambda}'), x') \right\|  \notag \\
& \mathop{\leq}^{(a)} \ell_{L} \left(\left\|x-x' \right\| + \left\|u^{\ast}(\lambda) - u^{\ast}({\lambda}') \right\| \right) \mathop{\leq}^{(b)} (\kappa+1)\ell_{L} \left\|x-x' \right\|,
\end{align}
where $\ell_{L} = \frac{5}{2}\alpha \ell_{21}$, $(a)$ uses the smoothness of $L^{\alpha}$ and $(b)$ uses the $\kappa$-Lipschitz continunity of $u^{\ast}(\lambda)$ and $\left\|\lambda - \lambda' \right\| \leq \left\| x -x' \right\|$.  We thus conclude that $\Phi^{\alpha}(\omega, \lambda)$ is $(\kappa+1)\ell_{L} $-smooth w.r.t. $\lambda$. Because $L^{\alpha}$ is $\ell_{L} $-smooth, we can conclude that $\Phi^{\alpha}(\omega, \lambda)$ is $\ell_{L}$-smooth w.r.t. $\omega$.  

{\bf For Claim (iii)}: Since $\Phi^{\alpha}(\omega, \lambda)$ is $\mu_2\alpha/2$-strongly convex with respect to $\omega$, similar to (\ref{inequ:u:1}), (\ref{inequ:u:2}), we have
\begin{subequations}
\begin{align}
\left\langle \omega - \omega_{\alpha}^{\ast}(\lambda_1), \nabla_{\omega} \Phi^{\alpha} (\omega_{\alpha}^{\ast}(\lambda_1), \lambda_1)\right\rangle & \geq 0,  \label{inequ:omega:1} \\
\left\langle \omega - \omega_{\alpha}^{\ast}(\lambda_2), \nabla_{\omega}\Phi^{\alpha}(\omega_{\alpha}^{\ast}(\lambda_2), \lambda_2)\right\rangle & \geq 0. \label{inequ:omega:2}
\end{align}
\end{subequations}
Let $\omega = \omega_{\alpha}^{\ast}(\lambda_2)$ in (\ref{inequ:omega:1}) and $\omega = \omega_{\alpha}^{\ast}(\lambda_1)$ in (\ref{inequ:omega:2}) and then sum the two inequalities, we get
\begin{align}
 \left\langle \omega_{\alpha}^{\ast}(\lambda_2) - \omega_{\alpha}^{\ast}(\lambda_1), \nabla_{\omega} \Phi^{\alpha} (\omega_{\alpha}^{\ast}(\lambda_1), \lambda_1) - \nabla_{\omega}\Phi^{\alpha}(\omega_{\alpha}^{\ast}(\lambda_2), \lambda_2) \right\rangle & \geq 0.   
\end{align}
By the strongly convexity of $\Phi^{\alpha}$ w.r.t. $\omega$ we have
\begin{align}
 \left\langle \omega_{\alpha}^{\ast}(\lambda_2) - \omega_{\alpha}^{\ast}(\lambda_1), \nabla_{\omega} \Phi^{\alpha} (\omega_{\alpha}^{\ast}(\lambda_2), \lambda_1) - \nabla_{\omega}\Phi^{\alpha}(\omega_{\alpha}^{\ast}(\lambda_1), \lambda_1) \right\rangle 
 \deleted{+}{\added{-}} \frac{\mu_2\alpha}{2} \left\|\omega_{\alpha}^{\ast}(\lambda_1)  - \omega_{\alpha}^{\ast}(\lambda_2)\right\|^2 \geq 0.
\end{align}
Summing the above two inequalities gives that 
\begin{align}
\frac{\mu_2\alpha}{2} \left\| \omega_{\alpha}^{\ast}(\lambda_1) - \omega_{\alpha}^{\ast}(\lambda_2)\right\|^2  & \leq \left\langle \omega_{\alpha}^{\ast}(\lambda_2) - \omega_{\alpha}^{\ast}(\lambda_1), \nabla_{\omega} \Phi^{\alpha}(\omega_{\alpha}^{\ast}(\lambda_2), \lambda_1) - \nabla_{\omega}\Phi^{\alpha}(\omega_{\alpha}^{\ast}(\lambda_2), \lambda_2)\right\rangle   \notag \\
& \leq \left\|\omega_{\alpha}^{\ast}(\lambda_1) - \omega_{\alpha}^{\ast}(\lambda_2) \right\| \left\|\nabla_{\omega} \Phi^{\alpha}(\omega_{\alpha}^{\ast}(\lambda_2), \lambda_1) - \nabla_{\omega}\Phi^{\alpha}(\omega_{\alpha}^{\ast}(\lambda_2), \lambda_2) \right\| \notag \\
& \leq \left(\ell_{11} + \alpha \ell_{21}\right) \left\|\omega_{\alpha}^{\ast}(\lambda_1) - \omega_{\alpha}^{\ast}(\lambda_2) \right\| \left\| \lambda_{1} - \lambda_2 \right\|.
\end{align}
Then we can conclude that $\omega_{\alpha}^{\ast}(\lambda)$ is $(2\kappa+1)$-Lipschitz continuous suppose that $\alpha \geq 2\ell_{11}/\mu_2$. Since $\omega_{\alpha}^{\ast}(\lambda)$ is unique and  from Danskin's theorem that $\Gamma^{\alpha}$ is differentiable with $\nabla \Gamma^{\alpha}(\lambda) = \nabla_{\lambda} \Phi^{\alpha}(\omega_{\alpha}^{\ast}(\lambda), \lambda)$. From Theorem~\ref{thm:equiv} (iii), there exists a constant $\ell_{\Gamma} > 0$ such that
$\left\|\nabla^2 \Gamma^{\alpha}(\lambda) \right\| \leq \ell_{\Gamma}$, that is $\Gamma^{\alpha}(\lambda)$ is $\ell_{\Gamma}$ gradient Lipschitz. We complete the proof.
\end{proof}

\begin{proof}({\bf One-Stage of Algorithm~\ref{alg:minmax:general})})

In this setting, we will prove the convergence of the one-stage Algorithm~\ref{alg:minmax:general} where $N=0$ and the penalty $\alpha$ is fixed.
 We apply $B$-batch SGD to update $\lambda$ that  $\lambda_{k+1} - \lambda_k =  - \eta^{\lambda} \nabla_{\lambda} L^{\alpha}(u_{k+1}, \omega_{k+1}, \lambda_k; D_k)$ where the batch size $|D_k|=B$ for training and validation and the stochastic gradient satisfies that \begin{align}
      \E[\nabla_{\lambda} L^{\alpha}(u_k, \omega_k, \lambda_k; D_k) \mid \mathcal{F}_k] = \nabla_{\lambda} L^{\alpha}(u_{k+1}, \omega_{k+1}, \lambda_k).
 \end{align}
By the smoothness of $\Gamma$, we have
\begin{align}\label{inequ:gamma:sto}
\Gamma^{\alpha}(\lambda_{k+1}) & \leq \Gamma^{\alpha}(\lambda_k) + \left\langle \nabla \Gamma^{\alpha}(\lambda_k), \lambda_{k+1} - \lambda_{k}\right\rangle + \frac{\ell_{\Gamma}}{2}\left\| \lambda_{k+1} - \lambda_k \right\|^2 \notag \\
& = \Gamma^{\alpha}(\lambda_k) - \eta^{\lambda} \left\langle \nabla \Gamma^{\alpha}(\lambda_k), \nabla_{\lambda} L^{\alpha}(u_{k+1}, \omega_{k+1}, \lambda_k; D_k)\right\rangle + \frac{\ell_{\Gamma}  (\eta^{\lambda})^2}{2}\left\| \nabla_{\lambda} L^{\alpha}(u_{k+1}, \omega_{k+1}, \lambda_k; D_k)\right\|^2
\end{align} 
Taking conditional expectation w.r.t. $\mathcal{F}_k$ on the above inequality and using the fact that $L_{D_k}^{\alpha}$ is an unbiased estimation of $L^{\alpha}$, we have
\begin{align}\label{eq: Gamma_lambda}
\E[\Gamma^{\alpha}(\lambda_{k+1}) \mid \mathcal{F}_k] 
& \leq \Gamma^{\alpha}(\lambda_k) - \eta_{\lambda} \left\langle \nabla \Gamma(\lambda_k), \nabla_{\lambda} L^{\alpha}(u_{k+1}, \omega_{k+1}, \lambda_k)\right\rangle \notag \\
& + \frac{\ell_{\Gamma}  (\eta^{\lambda})^2}{2}\E[\left\| \nabla_{\lambda} L^{\alpha}(u_{k+1}, \omega_{k+1}, \lambda_k; D_k)\right\|^2 \mid \mathcal{F}_k] 
\end{align}
First we turn to estimate the last term of (\ref{eq: Gamma_lambda}). 
\begin{align}\label{eq: sto_gradient:bound}
&\E[\left\| \nabla_{\lambda} L^{\alpha}(u_{k+1}, \omega_{k+1}, \lambda_k; D_k)\right\|^2 \mid \mathcal{F}_k]  \notag \\
& =     \E[\left\| \nabla_{\lambda} L^{\alpha}(u_{k+1}, \omega_{k+1}, \lambda_k; D_k) - \nabla_{\lambda} L^{\alpha}(u_{k+1}, \omega_{k+1}, \lambda_k) + \nabla_{\lambda} L^{\alpha}(u_{k+1}, \omega_{k+1}, \lambda_k) \right\|^2\mid \mathcal{F}_k ] \notag \\
& \mathop{=}^{(a)}  \E[\left\| \nabla_{\lambda} L^{\alpha}(u_{k+1}, \omega_{k+1}, \lambda_k; D_k) - \nabla_{\lambda} L^{\alpha}(u_{k+1}, \omega_{k+1}, \lambda_k) \right\|^2 \mid \mathcal{F}_k ] + \left\|\nabla_{\lambda} L^{\alpha}(u_{k+1}, \omega_{k+1}, \lambda_k) \right\|^2 \notag \\
& \mathop{\leq}^{(b)} \frac{\sigma_1^2 + 2\alpha^2\sigma_2^2}{B} + \left\|\nabla_{\lambda} L^{\alpha}(u_{k+1}, \omega_{k+1}, \lambda_k) \right\|^2
\end{align}
where $(a)$ follows from the fact  that $\E[\left\|X - \E[X] + \E[X] \right\|^2] = \E[\left\|X - \E[X]\right\|^2] + \left\|\E[X]\right\|^2$ and $(b)$ uses the following estimation of the variance term under Assumption~\ref{assumpt: bounded_variance} that
\begin{align}\label{eq: grad_lambda}
  & \E[\left\| \nabla_{\lambda} L^{\alpha}(u_{k+1}, \omega_{k+1}, \lambda_k; D_k) - \nabla_{\lambda} L^{\alpha}(u_{k+1}, \omega_{k+1}, \lambda_k) \right\|^2 \mid \mathcal{F}_k ] \notag \\
  & =   \E[\left\| \nabla_{\lambda} L_1(\omega_{k+1}, \lambda_k; S_{val}^k) - \nabla_{\lambda} L_1(\omega_{k+1}, \lambda_k) \right\|^2 \mid \mathcal{F}_k ]   +  \alpha^2 \E[ \left\| \nabla_{\lambda} L_2(\omega_{k+1}, \lambda_k; S_{train}^k) - \nabla_{\lambda} L_2(\omega_{k+1}, \lambda_k) \right\|^2\mid \mathcal{F}_k ] \notag \\
  & \quad  + \alpha^2 \E[ \left\| \nabla_{\lambda} L_2(u_{k+1}, \lambda_{k}; S_{train}^k) - \nabla_{\lambda} L_2(u_{k+1}, \lambda_k) \right\|^2\mid \mathcal{F}_k ] \notag \\
  & \leq \frac{\sigma_1^2 + 2\alpha^2\sigma_2^2}{B}.
\end{align}
Applying (\ref{eq: sto_gradient:bound}) into (\ref{eq: Gamma_lambda}), we have
\begin{align}\label{eq: Gamma_lambda_2}
\E[\Gamma^{\alpha}(\lambda_{k+1}) \mid \mathcal{F}_k] 
& \leq \Gamma^{\alpha}(\lambda_k) -  \eta^{\lambda} \left\langle \nabla \Gamma^{\alpha}(\lambda_k), \nabla_{\lambda} L^{\alpha}(u_{k+1}, \omega_{k+1}, \lambda_k)\right\rangle  + \frac{\ell_{\Gamma}  (\eta^{\lambda})^2}{2}\left\|\nabla_{\lambda} L^{\alpha}(u_{k+1}, \omega_{k+1}, \lambda_k) \right\|^2 \notag \\
& + \frac{\ell_{\Gamma}  (\eta^{\lambda})^2}{2B}\left(\sigma_1^2 + 2\alpha^2\sigma_2^2\right) \notag \\
&= \Gamma^{\alpha}(\lambda_k) - \frac{\eta^{\lambda}}{2} \left\|\nabla \Gamma^{\alpha}(\lambda_k) \right\|^2 - \left(\frac{\eta^{\lambda}}{2} - \frac{\ell_{\Gamma}  (\eta^{\lambda})^2}{2}\right)\left\| \nabla_{\lambda} L^{\alpha}(u_{k+1}, \omega_{k+1}, \lambda_k)\right\|^2 \notag \\
& + \frac{\eta^{\lambda}}{2}\left\| \nabla \Gamma^{\alpha}(\lambda_k) - \nabla_{\lambda} L^{\alpha}(u_{k+1}, \omega_{k+1}, \lambda_k)\right\|^2 +  \frac{\ell_{\Gamma}  (\eta^{\lambda})^2}{2B}\left(\sigma_1^2 + 2\alpha^2\sigma_2^2\right).
\end{align}
If $\eta^{\lambda} \leq \frac{1}{\ell_{\Gamma}}$, then
\begin{align}\label{eq: grad_Gamma_core_bound}
    \frac{1}{K}\sum_{k=0}^{K-1}\E[\left\|\nabla \Gamma^{\alpha}(\lambda_k) \right\|^2] & \leq \frac{2\E[\Gamma^{\alpha}(\lambda_0)] -2\Gamma_{\min}^{\alpha} }{K\eta^{\lambda}} +  + \frac{\ell_{\Gamma}  (\eta^{\lambda})}{B}\left(\sigma_1^2 + 2\alpha^2\sigma_2^2\right) \notag \\
    & + \frac{1}{K}\sum_{k=0}^{K-1}\left\| \nabla \Gamma^{\alpha}(\lambda_k) - \nabla_{\lambda} L^{\alpha}(u_{k+1}, \omega_{k+1}, \lambda_k)\right\|^2
\end{align}
where $\Gamma_{\min}^{\alpha} = \min_{\lambda} \Gamma^{\alpha}(\lambda)$. First, we show that
the difference between $\Gamma^{\alpha}(\lambda_0) $ and $ \Gamma_{\min}^{\alpha}$ can be controlled by a constant which is independent with $\alpha$ as below:
\begin{align}\label{eq: bound_Gamma_init_min}
    \Gamma^{\alpha}(\lambda_{0}) - \Gamma_{\min}^{\alpha} & = L^{\alpha}(u^{\ast}(\lambda_0), \omega_{\alpha}^{\ast}(\lambda), \lambda_0) - L^{\alpha}(u^{\ast}(\lambda^{\ast}), \omega_{\alpha}^{\ast}(\lambda^{\ast}), \lambda^{\ast}) \notag \\
    & = L_1(\omega_{\alpha}^{\ast}(\lambda_0), \lambda_0) - L_1(\omega_{\alpha}^{\ast}(\lambda^{\ast}), \lambda^{\ast}) + \alpha \left(L_2(\omega_{\alpha}^{\ast}(\lambda_0), \lambda_0) - L_2(u^{\ast}(\lambda_0), \lambda_0) \right) \notag \\
    & + \alpha \left(L_2(\omega_{\alpha}^{\ast}(\lambda^{\ast}), \lambda^{\ast}) - L_2(u^{\ast}(\lambda^{\ast}), \lambda^{\ast}) \right) \notag \\
    & \leq L_1(\omega_{\alpha}^{\ast}(\lambda_0), \lambda_0) - L_1(\omega_{\alpha}^{\ast}(\lambda^{\ast}), \lambda^{\ast}) + \alpha \frac{\ell_{21}}{2}\left\|\omega_{\alpha}^{\ast}(\lambda_0) -  u^{\ast}(\lambda_0) \right\|^2 \notag \\
    & + \alpha \frac{\ell_{21}}{2}\left\|\omega_{\alpha}^{\ast}(\lambda^{\ast}) -  u^{\ast}(\lambda^{\ast}) \right\|^2 \notag \\
    & \leq L_1(\omega_{\alpha}^{\ast}(\lambda_0), \lambda_0) - L_1(\omega_{\alpha}^{\ast}(\lambda^{\ast}), \lambda^{\ast}) + \frac{\ell_{21}\ell_{10}^2}{2\mu_2^2\alpha} + \frac{\ell_{21}\ell_{10}^2}{2\mu_2^2\alpha} \notag \\
    & \leq \ell_{10}\left(\left\|\omega_{\alpha}^{\ast}(\lambda_0) - \omega_{\alpha}^{\ast}(\lambda^{\ast}) \right\| + \left\|\lambda_0 - \lambda^{\ast}\right\|\right) + \frac{2\ell_{21}\ell_{10}^2}{2\mu_2^2\alpha} \notag \\
    & \leq \ell_{10}\left(1 + \frac{3\ell_{21}}{\mu_2} \right)\left\|\lambda_0 - \lambda^{\ast}\right\| + \frac{2\ell_{21}\ell_{10}^2}{2\mu_2^2\alpha}  \leq \mathcal{O}\left(\kappa \left\|\lambda_0 - \lambda^{\ast}\right\| + \frac{\kappa}{2}\right) 
\end{align}
We next turn to estimate the approximation of $\nabla_{\lambda} L^{\alpha}(u_{k+1}, \omega_{k+1}, \lambda_k)$ to $\nabla \Gamma^{\alpha}(\lambda_k)$:
\begin{align}\label{eq: grad_L_Gamma}
  & \left\| \nabla \Gamma^{\alpha}(\lambda_k) - \nabla_{\lambda} L^{\alpha}(u_{k+1}, \omega_{k+1}, \lambda_k)\right\|^2 =   \left\| \nabla_{\lambda} L^{\alpha}(u^{\ast}(\lambda_k), \omega_{\alpha}^{\ast}(\lambda_k), \lambda_k) - \nabla_{\lambda} L^{\alpha}(u_{k+1}, \omega_{k+1}, \lambda_k)\right\|^2 \notag \\
  & \mathop{\leq}^{(a)} 3\left\|\nabla_{\lambda} L_1(\omega_{\alpha}^{\ast}(\lambda_k), \lambda_k) - \nabla_{\lambda} L_1(\omega_{k+1}, \lambda_k)\right\|^2 + 3\alpha^2\left\|\nabla_{\lambda} L_2(\omega_{\alpha}^{\ast}(\lambda_k), \lambda_k)  - \nabla_{\lambda} L_2(\omega_{k+1}, \lambda_k) \right\|^2 \notag \\
  & \quad + 3\alpha^2 \left\|\nabla_{\lambda} L_2(u^{\ast}(\lambda_k), \lambda_k) - \nabla_{\lambda} L_2(u_{k+1}, \lambda_k) \right\|^2 \notag \\
  & \mathop{\leq}^{(b)} 3(\ell_{11}^2 + \alpha^2 \ell_{21}^2) \left\| \omega_{\alpha}^{\ast}(\lambda_k) - \omega_{k+1}\right\|^2 + 3\alpha^2 \ell_{21}^2 \left\| u^{\ast}(\lambda_k) - u_{k+1} \right\|^2
\end{align}
where $(a)$ uses the Cauchy-Schwartz inequality that $(a+b+c)^2 \leq 3(a^2 + b^2 +c^2)$ and $(b)$ uses the gradient-Lipschitz properties of $L_1$ and $L_2$.

 Then, the focus is to estimate $\left\|u_{k+1} - u^{\ast}(\lambda_k) \right\|^2$ and $\left\| \omega_{k+1} - \omega_{\alpha}^{\ast}(\lambda_k) \right\|^2$.  For $u$, we use $B$-batch SGD for running $T_k$ iterations. By the strongly concavity of $L^{\alpha}$ with respect to $u$, if $\eta^u \leq \frac{2}{\alpha(\ell_{21} + \mu_2)}$, then
\begin{align}\label{inequ:uk:stoc}
 & \E[\left\|  u^{\ast}( \lambda_{k}) - \tilde{u}_{t+1}  \right\|^2 \mid \mathcal{F}_{k,t}] \notag \\
 &= \E[\left\|u^{\ast}(\lambda_{k}) - \tilde{u}_t - \eta^{u} \nabla_{u} L^{\alpha}(\tilde{u}_t, \tilde{\omega}_t, \lambda_k; D_{k,t}) \right\|^2  \mid \mathcal{F}_{k,t}] \notag \\
  & \leq \left\|u^{\ast}( \lambda_{k}) - \tilde{u}_t \right\|^2 - 2\eta^{u}\left\langle u^{\ast}( \lambda_{k}) - \tilde{u}_t,  \E[\nabla_{u} L^{\alpha}(\tilde{u}_t, \tilde{\omega}_t, \lambda_k; D_{k,t}) \mid \mathcal{F}_{k,t}]\right\rangle \notag \\
    & \quad + (\eta^{u})^2 \E[\left\| \nabla_{u} L^{\alpha}(\tilde{u}_t, \tilde{\omega}_t, \lambda_k; D_{k,t}) \right\|^2 \mid \mathcal{F}_{k,t}] \notag \\
    & \leq \left\|u^{\ast}( \lambda_{k}) - \tilde{u}_t \right\|^2 - 2\eta^{u}\left\langle u^{\ast}( \lambda_{k}) - \tilde{u}_t,  \nabla_{u} L^{\alpha}(\tilde{u}_t, \tilde{\omega}_t, \lambda_k)\right\rangle  + (\eta^{u})^2\E\left[\left\| \nabla_{u} L^{\alpha}(\tilde{u}_t, \omega_k^t, \lambda_k; D_{k,t}) \right\|^2 \mid \mathcal{F}_{k,t}\right] \notag \\
        & \leq \left\|u^{\ast}( \lambda_{k}) - \tilde{u}_t \right\|^2 - 2\eta^{u} \alpha \left\langle u^{\ast}( \lambda_{k}) - \tilde{u}_t,  -\nabla_{u} L_2(\tilde{u}_t, \lambda_k)\right\rangle + (\eta^{u})^2\alpha^2\E\left[\left\| \nabla_{u} L_2(\tilde{u}_t, \lambda_k; S_{train}^{k,t}) \right\|^2 \mid \mathcal{F}_{k,t}\right]  \notag \\
  & \mathop{\leq}^{(a)} \left\|u^{\ast}( \lambda_{k}) - \tilde{u}_t \right\|^2 - 2\alpha \eta^u\left(\frac{\ell_{21}\mu_2}{\mu_2+\ell_{21}} \left\|\tilde{u}_t - u^{\ast}(\lambda_k) \right\|^2 + \frac{1}{\mu_2+\ell_{21}}\left\| \nabla_{u} L_2(\tilde{u}_t, \lambda_k)\right\|^2\right) \notag \\ 
  & \quad + (\eta^{u})^2\alpha^2\E\left[\left\| \nabla_{u} L_2(\tilde{u}_t, \lambda_k; S_{train}^{k,t}) \right\|^2 \mid \mathcal{F}_{k,t}\right]  \notag \\
  & \mathop{=}^{(b)}  \left\|u^{\ast}( \lambda_{k}) - \tilde{u}_t \right\|^2 - 2\alpha \eta^u\left(\frac{\ell_{21}\mu_2}{\mu_2+\ell_{21}} \left\|\tilde{u}_t - u^{\ast}(\lambda_k) \right\|^2 + \frac{1}{\mu_2+\ell_{21}}\left\| \nabla_{u} L_2(\tilde{u}_t, \lambda_k)\right\|^2\right) \notag \\ 
  & \quad + (\eta^u)^2\alpha^2\E[\left\|\nabla_{u} L_2(\tilde{u}_t, \lambda_k; S_{train}^k) - \nabla_{u} L_2(\tilde{u}_t, \lambda_k) \right\|^2 \mid \mathcal{F}_{k,t}] +(\eta^u)^2\alpha^2  \left\| \nabla_{u} L_2(\tilde{u}_t, \lambda_k) \right\|^2 \notag \\
& \mathop{\leq}^{(c)}   \left\|u^{\ast}( \lambda_{k}) - \tilde{u}_t \right\|^2 - 2\alpha \eta^u\left(\frac{\ell_{21}\mu_2}{\mu_2+\ell_{21}} \left\|\tilde{u}_t - u^{\ast}(\lambda_k) \right\|^2 + \frac{1}{\mu_2+\ell_{21}}\left\| \nabla_{u} L_2(\tilde{u}_t, \lambda_k)\right\|^2\right) \notag \\ 
  & \quad +  \frac{(\eta^u)^2 \alpha^2\sigma_2^2}{B}+  (\eta^u)^2\alpha^2  \left\| \nabla_{u} L_2(\tilde{u}_t, \lambda_k) \right\|^2\notag \\
  & \mathop{\leq} \left(1-\mu_2\alpha \eta^u \right)^2 \left\|u^{\ast}(\lambda_k) - \tilde{u}_t \right\|^2 + \frac{(\eta^u)^2 \alpha^2\sigma_2^2}{B}.
\end{align}
where $(a)$ follows from the property of any $\gamma_1$-strong convexity and $\gamma_2$-smoothness function $f$ which implies that
\begin{align}
\left\langle \nabla f(x_k),  x_k - x^{\ast}\right\rangle \geq \frac{\gamma_1\gamma_2}{\gamma_1 + \gamma_2}\left\| x_k - x^{\ast}\right\|^2 + \frac{1}{\gamma_1+\gamma_2}\left\| \nabla f(x_k)\right\|^2  \notag,
\end{align}
with $x^{\ast}=\arg\min f(x)$;
(b) uses the relationship $\E[\nabla_u L_2(u_k, \lambda_k; S_{train}^k)\mid \mathcal{F}_k] = \nabla_u L_2(u_k, \lambda_k)$ which induces that 
\begin{align}
  &  \E[\left\|\nabla_{u} L_2(u_k, \lambda_k; S_{train}^k)\right\|^2  \mid \mathcal{F}_k] \notag \\
  & =   \E[\left\|\nabla_{u} L_2(u_k, \lambda_k; S_{train}^k) - \nabla_{u} L^{\alpha}(u_k,\lambda_k) \right\|^2 \mid \mathcal{F}_k] + \left\| \nabla_{u} L^{\alpha}(u_k,  \lambda_k) \right\|^2
\end{align}
and $(c)$ uses Assumption~\ref{assumpt: bounded_variance} that $\E[\left\|\nabla_{u} L_2(u_k,\lambda_k; S_{train}^k) - \nabla_{u} L_2(u_k, \lambda_k) \right\|^2 \mid \mathcal{F}_k] \leq \sigma_2^2$. For $t=0,1,\cdots, T_k-1$, we have
\begin{align}\label{eq: uast_k1_lambda}
 \E[\left\|  u^{\ast}( \lambda_{k}) - u_{k+1}  \right\|^2] & := \E[\left\|  u^{\ast}( \lambda_{k}) - \tilde{u}_{T_k}  \right\|^2]  \leq \left(1-\mu_2\alpha \eta^u \right)^2 \left\|u^{\ast}(\lambda_k) - \tilde{u}_{T_k-1} \right\|^2 + \frac{\alpha^2(\eta^u)^2\sigma_2^2}{B} \notag\\
 & \leq \left(1-\mu_2\alpha \eta^u \right)^{2T_k} \left\|u^{\ast}(\lambda_k) - \tilde{u}_{0} \right\|^2 + \frac{\alpha^2(\eta^u)^2\sigma_2^2}{B} \sum_{t=0}^{T_k-1}\left(1-\mu_2\alpha \eta^u \right)^{2t} \notag \\
 & \leq \left(1-\mu_2\alpha \eta^u \right)^{2T_k} \left\|u^{\ast}(\lambda_k) - u_k \right\|^2 + \frac{\alpha \eta^u\sigma_2^2}{\mu_2 B}.
\end{align}
where $\tilde{u}_0 = u_k$.

In order to achieve that $
\E[\left\|\nabla_{\lambda} L^{\alpha}(u_{k+1}, \omega_{k+1}, \lambda_k) - \nabla \Gamma^{\alpha}(\lambda_k) \right\|^2 ] \leq \zeta^2 $ for all $k \leq K-1$. According to~(\ref{eq: grad_L_Gamma}), we can prove that $\E[\left\|u_{k+1} - u^{\ast}(\lambda_k)\right\|^2] \leq \frac{\zeta^2}{6\alpha^2\ell_{21}^2}$ and $\E[\left\|\omega_{k+1} - \omega_{\alpha}^{\ast}(\lambda_k)\right\|^2] \leq \frac{\zeta^2}{6(\ell_{11}^2 + \alpha^2\ell_{21}^2)}$ for all $k \leq K-1$.
First we show how to control $\E[\left\| u_{k+1} - u^{\ast}(\lambda_k) \right\|]$ for all $k$. By (\ref{eq: uast_k1_lambda}), we have
\begin{align}
   \E[\left\|  u^{\ast}( \lambda_{k}) - u_{k+1}  \right\|^2] \leq   \left(1-\mu_2\alpha \eta^u \right)^{2T_k} \left\|u^{\ast}(\lambda_k) - u_k \right\|^2 + \frac{\alpha \eta^u\sigma_2^2}{\mu_2 B}
\end{align}
Suppose that we set 
\begin{align}\label{eq: bound_Tk_u}
 T_k & \geq \frac{\kappa-1}{4}\ln\left(\frac{12\alpha^2 \ell_{21}^2\left\|u^{\ast}(\lambda_k) - u_{k+1} \right\|^2}{\zeta^2 }\right) \geq \frac{\ln\left(\frac{12\alpha^2 \ell_{21}^2\E[\left\|u^{\ast}(\lambda_k) - u_k \right\|^2]}{\zeta^2 }\right)}{2\ln\left(\frac{\kappa+1}{\kappa-1}\right)}\notag \\
 B & =   \frac{12\kappa(\sigma_1^2 + \alpha^2\sigma_2^2)}{\zeta^2}
\end{align}
then
\begin{align}
   \E[\left\| u_{k+1} - u^{\ast}( \lambda_{k})  \right\|^2] \leq  \frac{\zeta^2}{6\alpha^2\ell_{21}^2}. 
\end{align}
Next we turn to bound $\left\|u^{\ast}(\lambda_k) - u_k \right\|$ for all $t$. Suppose that 
\begin{align}
\E[\left\|u_k - u^{\ast}(\lambda_{k-1})\right\|^2] \leq \frac{\zeta^2}{6\alpha^2\ell_{21}^2}
\end{align}
For $t \geq 1$, we have
\begin{align}
\E[\left\|u_k - u^{\ast}(\lambda_k)\right\|^2 ] & =   \E[\left\|u_k - u^{\ast}(\lambda_{k-1}) +  u^{\ast}(\lambda_{k-1}) - u^{\ast}(\lambda_{k})\right\|^2] \notag \\
& \leq 2 \E[\left\|u_k - u^{\ast}(\lambda_{k-1})\right\|^2] + 2 \E[\left\| u^{\ast}(\lambda_{k-1}) - u^{\ast}(\lambda_{k})\right\|^2] \notag \\
& \leq \frac{\zeta}{3\alpha^2\ell_{21}^2} + 2 \kappa^2 \E[\left\| \lambda_k - \lambda_{k-1} \right\|^2] \notag \\
& = \frac{\zeta}{3\alpha^2\ell_{21}^2} + 2 \kappa^2 (\eta^{\lambda})^2\E[\left\| \nabla_{\lambda} L^{\alpha}(u_{k}, \omega_{k}, \lambda_{k-1}; D_{k-1}) \right\|^2 ] \notag \\
& \leq \frac{\zeta}{3\alpha^2\ell_{21}^2} + 2 \kappa^2 (\eta^{\lambda})^2 \left(\left\| \nabla_{\lambda} L^{\alpha}(u_{k}, \omega_{k}, \lambda_{k-1}) \right\|^2 + \frac{\sigma_1^2 + \alpha^2\sigma_2^2}{B} \right) \notag \\
& \leq \frac{\zeta}{3\alpha^2\ell_{21}^2} + 2 \kappa^2 (\eta^{\lambda})^2 \left(2\zeta^2 + 2\left\| \nabla \Gamma^{\alpha}(\lambda_{k-1}) \right\|^2 + \frac{\sigma_1^2 + \alpha^2\sigma_2^2}{B} \right) 
\end{align}
where the first inequality uses the Cauchy-Schwartz inequality; the second inequality use the Lipschitz continuity of $u^{\ast}(\lambda)$; and the third inequality uses the inequality (\ref{eq: sto_gradient:bound}); and the last inequaty uses the fact that $\left\|\nabla_{\lambda} L^{\alpha}(u_{k}, \omega_{k}, \lambda_{k-1}) - \nabla \Gamma^{\alpha}(\lambda_{k-1}) \right\| \leq \zeta$ which implies that $\left\|\nabla_{\lambda} L^{\alpha}(u_{k}, \omega_{k}, \lambda_{k-1})\right\|^2 \leq 2 \zeta^2 + 2\left\| \nabla \Gamma^{\alpha}(\lambda_{k-1}) \right\|^2$.

We then recall the definition of $\nabla \Gamma^{\alpha}(\lambda_{k-1})$ and make the following estimation
\begin{align}\label{eq: grad_Gamma_lambda-1}
  \left\| \nabla \Gamma^{\alpha}(\lambda_{k-1}) \right\|^2 & = \left\|\nabla_{\lambda} L^{\alpha}(u^{\ast}(\lambda_{k-1}), \omega_{\alpha}^{\ast}(\lambda_{k-1}), \lambda_{k-1}) \right\|^2 \notag \\
  & \mathop{\leq}^{(a)} 2\ell_{11}^2 + 2\alpha^2\left\|u^{\ast}(\lambda_{k-1}) - \omega_{\alpha}^{\ast}(\lambda_k) \right\|^2  \notag \\
  & \leq  2\ell_{11}^2 + 2\alpha^2\left\|\omega^{\ast}(\lambda_{k-1}) - \omega_{\alpha}^{\ast}(\lambda_k) \right\|^2 \leq 2\ell_{11}^2 + \frac{2\alpha^2\kappa^2}{\alpha^2} = 2\ell_{11}^2 + 2\kappa^2
\end{align}
where $(a)$ uses the Cauchy-Schwartz inequality and $(b)$ follows from the result of Lemma \ref{lem:omega:ast}.
Thus, for $t \geq 1$, we have
\begin{align}
\E[\left\|u_k - u^{\ast}(\lambda_k)\right\|^2 ] & \leq  \frac{\zeta}{3\alpha^2\ell_{21}^2} + 2 \kappa^2 (\eta^{\lambda})^2 \left(2\zeta^2 + 4\ell_{11}^2 + 4\kappa^2 + \frac{\sigma_1^2 + \alpha^2\sigma_2^2}{B} \right) 
\end{align}
In order to achieve that $\E[\left\| u_{k+1} - u^{\ast}( \lambda_{k})  \right\|^2] \leq  \frac{\zeta^2}{6\alpha^2\ell_{21}^2}$, we set
\begin{align}
 T_k & \geq \frac{\kappa-1}{4}\ln\left(\frac{12\alpha^2 \ell_{21}^2\E[\left\|\omega_{\alpha}^{\ast}(\lambda_k) - \omega_{k} \right\|^2]}{\zeta^2 }\right) \geq \frac{\ln\left(\frac{12\alpha^2 \ell_{21}^2\E[\left\|u^{\ast}(\lambda_k) - u_k \right\|^2]}{\zeta^2 }\right)}{2\ln\left(\frac{\kappa+1}{\kappa-1}\right)}\notag \\
 B & =   \frac{12\kappa(\sigma_1^2 + \alpha^2\sigma_2^2)}{\zeta^2}
\end{align}
where 
$$\E[\left\|u_k - u^{\ast}(\lambda_k)\right\|^2 ] \leq   \left\{ 
\begin{aligned}
  &\frac{\zeta}{3\alpha^2\ell_{21}^2} + 2 \kappa^2 (\eta^{\lambda})^2 \left(2\zeta^2 + 4\ell_{11}^2 + 4\kappa^2 + \frac{\sigma_1^2 + \alpha^2\sigma_2^2}{B} \right),  &  t \geq 1  \\
& \left\|u_0 - u^{\ast}(\lambda_0)\right\|^2  & t=0
\end{aligned}
\right.
$$
Similarly, we make the estimation about $\left\|\omega_{k+1} - \omega_{\alpha}^{\ast}(\lambda_{k}) \right\|^2$. Due to that $\Phi^{\alpha}$ is $\frac{\mu_2\alpha}{2}$-strongly convex and $\frac{3}{2}\alpha\ell_{21}$-smooth with respect to $\omega$,  if $\eta^{\omega} \leq \frac{4}{\alpha(\mu_2+3\ell_{21})}$, we have
\begin{align}
    \left\|\omega_{k+1} -\omega_{\alpha}^{\ast}(\lambda_k)  \right\|^2 \leq \left( 1- \frac{1}{2}\mu_2\alpha \eta^{\omega}\right)^{2T_k} \E[\left\|\omega_{k} - \omega_{\alpha}^{\ast}(\lambda_k) \right\|^2]  + \frac{4\eta^{\omega}}{3\mu_2\alpha B}\left(\sigma_1^2 + \sigma_2^2\alpha^2 \right).
\end{align}
By properly choosing $T_k$ and $B$
\begin{align}\label{eq: bound_Tk_omega}
T_k & \geq  \frac{3\kappa-1}{4}\ln\left(\frac{12(\ell_{11}^2 + \alpha^2\ell_{21}^2)\E[\left\|\omega_k - \omega_{\alpha}^{\ast}(\lambda_k)\right\|^2]}{\zeta^2}\right)\notag \\
B & =  \frac{4\kappa \left(\frac{1}{2\alpha} + 1\right)(\sigma_1^2 + \alpha^2\sigma_2^2)}{\zeta^2}
\end{align}
then we can achieve $ \left\|\omega^{\ast}(\lambda_k) - \omega_{k+1} \right\|^2 \leq \frac{\zeta^2}{6(\ell_{11}^2 + \alpha^2\ell_{21}^2)}$. Next, we turn to estimate $\E[\left\|\omega_k - \omega_{\alpha}^{\ast}(\lambda_k)\right\|^2]$  for all $t \geq 1$ which is similar to bound $\E[\left\|u_k - u^{\ast}(\lambda_k)\right\|^2]$. For $t=0$,
\begin{align}
\E[\left\|\omega_0 - \omega_{\alpha}^{\ast}(\lambda_0)\right\|^2] & = \E[\left\|\omega_0 - \omega^{\ast}(\lambda_0) + \omega^{\ast}(\lambda_0)-\omega_{\alpha}^{\ast}(\lambda_0) \right\|^2] \notag \\
& \leq 2\E[\left\|\omega_0 - \omega^{\ast}(\lambda_0)\right\|^2] + 2\E[\left\| \omega^{\ast}(\lambda_0)-\omega_{\alpha}^{\ast}(\lambda_0) \right\|^2] \notag \\
& \leq  2\E[\left\|\omega_0 - \omega^{\ast}(\lambda_0)\right\|^2] + \frac{2\kappa^2}{\alpha^2}
\end{align}
where the last inequality applies the result of Lemma~\ref{lem:omega:ast}.
Suppose that 
\begin{align}
    \E[\left\|\omega_k - \omega_{\alpha}^{\ast}(\lambda_{k-1}) \right\|] \leq \frac{\zeta^2}{6(\ell_{11}^2 + \alpha^2\ell_{21}^2)},
\end{align}
holds at index $k-1$, then for $k\geq 1$, we have
\begin{align}
 \E[\left\| \omega_k - \omega_{\alpha}^{\ast}(\lambda_k) \right\|^2] & =   \E[\left\| \omega_k - \omega_{\alpha}^{\ast}(\lambda_{k-1}) + \omega_{\alpha}^{\ast}(\lambda_{k-1}) -  \omega_{\alpha}^{\ast}(\lambda_k)\right\|^2]   \notag \\
 & \leq 2\E[\left\| \omega_k - \omega_{\alpha}^{\ast}(\lambda_{k-1}) \right\|^2] + 2\E[\left\|\omega_{\alpha}^{\ast}(\lambda_{k-1}) -  \omega_{\alpha}^{\ast}(\lambda_k)\right\|^2]  \notag \\
 & \leq  \frac{\zeta^2}{3(\ell_{11}^2 + \alpha^2\ell_{21}^2)} + 2 \ell_{\omega^{\ast}}^2 \E[\left\|\lambda_k - \lambda_{k-1} \right\|^2] \notag \\
  & \leq  \frac{\zeta^2}{3(\ell_{11}^2 + \alpha^2\ell_{21}^2)} + 2 \ell_{\omega^{\ast}}^2(\eta^{\lambda})^2 \E[\left\|\nabla_{\lambda} L^{\alpha}(u_k, \omega_k, \lambda_{k-1}; D_{k-1}) \right\|^2] \notag \\
  & \leq  \frac{\zeta^2}{3(\ell_{11}^2 + \alpha^2\ell_{21}^2)} + 2 \ell_{\omega^{\ast}}^2(\eta^{\lambda})^2 \left(\left\|\nabla_{\lambda} L^{\alpha}(u_k, \omega_k, \lambda_{k-1}\right\|^2 + \frac{\sigma_1^2 + \alpha^2\sigma_2^2}{B}\right) \notag \\
  & \leq \frac{\zeta^2}{3(\ell_{11}^2 + \alpha^2\ell_{21}^2)} + 2 \ell_{\omega^{\ast}}^2(\eta^{\lambda})^2 \left(2\zeta^2 + 2\left\|\nabla \Gamma^{\alpha}(\lambda_{k-1})\right\|^2 + \frac{\sigma_1^2 + \alpha^2\sigma_2^2}{B}\right)  \notag \\
  & = \frac{\zeta^2}{3(\ell_{11}^2 + \alpha^2\ell_{21}^2)} + 2 \ell_{\omega^{\ast}}^2(\eta^{\lambda})^2 \left(2\zeta^2 + 4(\ell_{11}^2 + \kappa^2)+ \frac{\sigma_1^2 + \alpha^2\sigma_2^2}{B}\right)
\end{align}
where the last inequality follows from (\ref{eq: grad_Gamma_lambda-1}).

Overall, suppose that $\E[\left\|u_{k} - u^{\ast}(\lambda_{k-1})\right\|^2] \leq \frac{\zeta^2}{6\alpha^2\ell_{21}^2}$ and $\E[\left\|\omega_{k} - \omega_{\alpha}^{\ast}(\lambda_{k-1})\right\|^2] \leq \frac{\zeta^2}{6(\ell_{11}^2 + \alpha^2\ell_{21}^2)}$ hold at index $k-1$, combining the bounds of (\ref{eq: bound_Tk_u}) and (\ref{eq: bound_Tk_omega}) for $T_k$ and $B$ and properly choosing 
\begin{subequations}
\begin{align}
T_k  & \geq \frac{3\kappa-1}{4}\ln\left(\frac{12(\ell_{11}^2 + \alpha^2\ell_{21}^2)\max \left\lbrace \E[\delta_k^2], \E[r_k^2]\right\rbrace}{\zeta^2}\right) \label{eq: Tk_max}\\
B & =  \frac{12\kappa \left(\frac{1}{2\alpha} + 1\right)(\sigma_1^2 + \alpha^2\sigma_2^2)}{\zeta^2} \label{eq: B_max}
\end{align}
\end{subequations}
where
$$\max \left\lbrace \E[\delta_k^2], \E[r_k^2]\right\rbrace \leq  \left\{ 
\begin{aligned}
  &\frac{\zeta^2}{3\alpha^2\ell_{21}^2} + 2\ell_{\omega^{\ast}}^2 (\eta^{\lambda})^2 \left(2\zeta^2 + 4\ell_{11}^2 + 4\kappa^2 + \frac{\sigma_1^2 + \alpha^2\sigma_2^2}{B} \right),  &  k \geq 1  \\
& \max \left\lbrace \left\|u_0 - u^{\ast}(\lambda_0)\right\|^2, 2\left\|\omega_0 - \omega^{\ast}(\lambda_0)\right\|^2 + \frac{2\kappa^2}{\alpha^2}\right\rbrace  & k=0
\end{aligned}
\right.
$$
We can achieve that $\E[\left\|u_{k+1} - u^{\ast}(\lambda_{k})\right\|^2] \leq \frac{\zeta^2}{6\alpha^2\ell_{21}^2}$ and $\E[\left\|\omega_{k+1} - \omega_{\alpha}^{\ast}(\lambda_{k})\right\|^2] \leq \frac{\zeta^2}{6(\ell_{11}^2 + \alpha^2\ell_{21}^2)}$ hold at index $k$. Proof by induction, $\E[\left\|u_{k+1} - u^{\ast}(\lambda_{k})\right\|^2] \leq \frac{\zeta^2}{6\alpha^2\ell_{21}^2}$ and $\E[\left\|\omega_{k+1} - \omega_{\alpha}^{\ast}(\lambda_{k})\right\|^2] \leq \frac{\zeta^2}{6(\ell_{11}^2 + \alpha^2\ell_{21}^2)}$ hold for all $k \leq K-1$. Thus, By~(\ref{eq: grad_L_Gamma}), we demonstrate that 
$\E[\left\| \nabla_{\lambda} L^{\alpha}(u_{k+1}, \omega_{k+1}, \lambda_{k}; D_{k}) - \nabla \Gamma^{\alpha}(\lambda_k) \right\|^2] \leq \zeta^2$ for all $k \leq K-1$. 

Finally, we apply the bound for $\E[\left\| \nabla_{\lambda} L^{\alpha}(u_{k+1}, \omega_{k+1}, \lambda_{k}; D_{k}) - \nabla \Gamma^{\alpha}(\lambda_k) \right\|^2]$ and substitude (\ref{eq: bound_Gamma_init_min}) into (\ref{eq: grad_Gamma_core_bound}), we have 
\begin{align}
    \frac{1}{K}\sum_{k=0}^{K-1}\E[\left\|\nabla \Gamma^{\alpha}(\lambda_k) \right\|^2] & \leq \frac{2\E[\Gamma^{\alpha}(\lambda_0)] -2\Gamma_{\min}^{\alpha} }{K\eta^{\lambda}} +  + \frac{\ell_{\Gamma}  (\eta^{\lambda})}{B}\left(\sigma_1^2 + 2\alpha^2\sigma_2^2\right) \notag \\
    & + \frac{1}{K}\sum_{k=0}^{K-1}\left\| \nabla \Gamma^{\alpha}(\lambda_k) - \nabla_{\lambda} L^{\alpha}(u_{k+1}, \omega_{k+1}, \lambda_k)\right\|^2  \notag \\
    & \leq \frac{\mathcal{O}(\kappa \left\|\lambda_0 - \lambda^{\ast}\right\| + \kappa)}{K\eta^{\lambda}} + \zeta^2 +  \frac{\ell_{\Gamma}\eta^{\lambda}(\sigma_1^2 + \alpha^2\sigma_2^2)}{B}
\end{align}
That is to say, given $\alpha \geq 2\ell_{11}/\mu_2$ and the three step-sizes satisfy that
\begin{align}
  \eta^{\lambda}=\frac{1}{\ell_{\Gamma}}; \quad  \eta^{u}=\frac{2}{\alpha(\mu_2 + \ell_{21})}; \quad \eta^{\omega} = \frac{4}{\alpha(\mu_2 + 3\ell_{21})}
\end{align}
\begin{itemize}
    \item let $\alpha=\mathcal{O}(\kappa^3\epsilon^{-1})$ and  $\zeta=\mathcal{O}(\epsilon)$, by properly choosing $T_k$ and $B$ suggested by (\ref{eq: Tk_max}) and (\ref{eq: B_max}), after $K=\mathcal{O}(\kappa^4\epsilon^{-2})$ steps, we can reach $ \frac{1}{K}\sum_{k=0}^{K-1}\E[\left\|\nabla \Gamma^{\alpha}(\lambda_k) \right\|^2] \leq \epsilon^2$.
\begin{align}
& \frac{1}{K}\sum_{k=0}^{K-1}\E[\left\|\nabla \mathcal{L}(\lambda_k) \right\|^2]  =  \frac{1}{K}\sum_{k=0}^{K-1}\E[\left\|\nabla \mathcal{L}(\lambda_k) - \nabla \Gamma^{\alpha}(\lambda_k) + \nabla \Gamma^{\alpha}(\lambda_k)\right\|^2] \notag \\
& \leq  \frac{2}{K}\sum_{k=0}^{K-1}\E[\left\|\nabla \mathcal{L}(\lambda_k) - \nabla \Gamma^{\alpha}(\lambda_k) \right\|^2]+  \frac{2}{K}\sum_{k=0}^{K-1}\E[\left\|\nabla \Gamma^{\alpha}(\lambda_k)\right\|^2] \notag \\
& \leq \frac{2\kappa^6}{\alpha^2} + \frac{2}{K}\sum_{k=0}^{K-1}\E[\left\|\nabla \Gamma^{\alpha}(\lambda_k)\right\|^2] \leq \epsilon^2
\end{align}
The whole complexity of Algorithm \ref{alg:minmax:general} is $\mathcal{O}(3 BK + 3B K T_k) = \mathcal{O}(\epsilon^{-6}\log(1/\epsilon))$.
\item If $\sigma_2 =0$, we then properly select $T_k$ and $B$ as 
\begin{subequations}
\begin{align}
T_k  & \geq \frac{3\kappa-1}{4}\ln\left(\frac{12(\ell_{11}^2 + \alpha^2\ell_{21}^2)\max \left\lbrace \E[\delta_k^2], \E[r_k^2]\right\rbrace}{\zeta^2}\right) \label{eq: Tk_max_0sigma2}\\
B & =  \frac{12\kappa \left(\frac{1}{2\alpha} + 1\right)\sigma_1^2}{\zeta^2} \label{eq: B_max_0sigma2}
\end{align}
\end{subequations}
where
$$\max \left\lbrace \E[\delta_k^2], \E[r_k^2]\right\rbrace \leq  \left\{ 
\begin{aligned}
  &\frac{\zeta^2}{3\alpha^2\ell_{21}^2} + 2\ell_{\omega^{\ast}}^2 (\eta^{\lambda})^2 \left(2\zeta^2 + 4\ell_{11}^2 + 4\kappa^2 + \frac{\sigma_1^2}{B} \right),  &  k \geq 1  \\
& \max \left\lbrace \left\|u_0 - u^{\ast}(\lambda_0)\right\|^2, 2\left\|\omega_0 - \omega^{\ast}(\lambda_0)\right\|^2 + \frac{2\kappa^2}{\alpha^2}\right\rbrace  & k=0
\end{aligned}
\right.
$$
Let $\alpha = \mathcal{O}(\kappa^3\epsilon^{-1})$ and $\zeta = \mathcal{O}(\epsilon^{-1})$, then after $K = \kappa^4 \epsilon^{-2}$ steps and $B = \kappa \epsilon^{-2}$, we have 
the total complexity is $ BKT_k = \mathcal{O}(\kappa^6 \epsilon^{-4}\log(1/\epsilon))$
\end{itemize}

\end{proof}

\begin{proof}(of Multi-stage Algorithm~\ref{alg:minmax:general})
In this case, we focus on the Algorithm~\ref{alg:minmax:general} with $K_i \geq 1$ and $\alpha_i$ increasing with $i$.

At each stage $i$, because $\alpha_i$ is fixed, the analysis is similar to the one-stage version of the MinimaxOPT algorithm: that is 
If $\eta_i^{\lambda} \leq \frac{1}{\ell_{\Gamma}}$, then
\begin{align}\label{eq: grad_Gamma_core_bound_multistage}
    \frac{1}{K_i}\sum_{k=0}^{K_i-1}\E[\left\|\nabla \Gamma^{\alpha_i}(\lambda_k^i) \right\|^2] & \leq \frac{2\E[\Gamma^{\alpha}(\lambda_0^i)] -2\Gamma^{\alpha_i}(\lambda_{K_i}^i) }{\eta_i^{\lambda}} +  \frac{\ell_{\Gamma}  (\eta_i^{\lambda})}{B_i}\left(\sigma_1^2 + 2\alpha_i^2\sigma_2^2\right) \notag \\
    & + \frac{1}{K_i}\sum_{k=0}^{K_i-1}\left\| \nabla \Gamma^{\alpha_i}(\lambda_k^i) - \nabla_{\lambda} L^{\alpha}(u_{k+1}^i, \omega_{k+1}^i, \lambda_k^i)\right\|^2.
\end{align}
At each stage $i$, in order to achieve $\left\| \nabla \Gamma^{\alpha_i}(\lambda_k^i) - \nabla_{\lambda} L^{\alpha}(u_{k+1}^i, \omega_{k+1}^i, \lambda_k^i)\right\|^2 \leq \zeta_i^2$, we properly choose $T_k^i$ and $B_k^i$ such that
\begin{subequations}
\begin{align}
T_k^i  & \geq \frac{3\kappa-1}{4}\ln\left(\frac{12(\ell_{11}^2 + \alpha_i^2\ell_{21}^2)\max \left\lbrace \E[(\delta_k^i)^2], \E[(r_k^i)^2]\right\rbrace}{\zeta_i^2}\right) \label{eq: Tk_max_multistage}\\
B_i & =  \frac{12\kappa \left(\frac{1}{2\alpha_i} + 1\right)(\sigma_1^2 + \alpha_i^2\sigma_2^2)}{\zeta^2} \label{eq: B_max_multistage}
\end{align}
\end{subequations}
where
$$\max \left\lbrace \E[(\delta_k^i)^2], \E[(r_k^i)^2]\right\rbrace \leq  \left\{ 
\begin{aligned}
  &\frac{\zeta_i^2}{3\alpha_i^2\ell_{21}^2} + 2\ell_{\omega^{\ast}}^2 (\eta_i^{\lambda})^2 \left(2\zeta_i^2 + 4\ell_{11}^2 + 4\kappa^2 + \frac{\sigma_1^2 + \alpha_i^2\sigma_2^2}{B_i} \right),  &  k \geq 1  \\
& \max \left\lbrace \left\|u_0^i - u^{\ast}(\lambda_0^i)\right\|^2, \left\|\omega_0^i - \omega_{\alpha_i}^{\ast}(\lambda_0^i)\right\|^2 \right\rbrace  & k=0
\end{aligned}
\right.
$$ 
Let $\Delta_i(\alpha_i, \zeta_i, B_i)$ be the right side term when $k \geq 1$. Let $\alpha_i = \alpha_0 \tau^{i} $ ($i\in [N]$), $\zeta_i = 1/\tau^i$, and $B_i=1/\zeta_i^2$. We can see that $\Delta_i$ is non-expansion (i.e., $\Delta_{i+1}\leq \Delta_i$ for $i \geq 0$).
By induction, from $i=0$,  we have $\max\left\lbrace \E[(\delta_0^1)^2], \E[(r_0^1)^2] \right\rbrace \leq \Delta_0$ when $k\geq 1$. The next step is to show $\left\|u_0^i - u^{\ast}(\lambda_0^i)\right\|^2, \left\|\omega_0^i -\omega_{\alpha_i}^{\ast}(\lambda_0^i)\right\|^2$ are bounded. Due to that $u^{\ast}(\lambda)$ is independent on $\alpha_i$ and $u_0^{i+1} = u_{K_i}^i, \lambda_0^{i+1} = \lambda_{K_i}^i$. When $i=0$
\begin{align}
\left\|u_0^{1} - u^{\ast}(\lambda_0^{1})\right\|^2 = \left\|u_{K_0}^{0} - u^{\ast}(\lambda_{K_0}^{0})\right\|^2 \leq \max\left\lbrace \Delta_i(\alpha_0, \zeta_0, B_0), \left\|u_0^0 - u^{\ast}(\lambda_0^0) \right\|^2 \right\rbrace
\end{align}
For $i \geq 1$, by induction, we have
\begin{align}
\left\|u_0^{i+1} - u^{\ast}(\lambda_0^{i+1})\right\|^2 & = \left\|u_{K_i}^{i} - u^{\ast}(\lambda_{K_i}^{i})\right\|^2 \leq \left\lbrace \Delta_i(\alpha_i, \zeta_i, B_i), \left\|u_0^i - u^{\ast}(\lambda_0^i) \right\|^2 \right\rbrace \notag \\
& \leq  \max \left\lbrace \Delta_i(\alpha_i, \zeta_i, B_i), \max(\left\|u_0^{i-1} - u^{\ast}(\lambda_0^{i-1})\right\|^2, \Delta_{i-1}(\alpha_{i-1}, \zeta_{i-1}, B_{i-1})) \right\rbrace \notag \\
& \leq \max \left\lbrace \Delta_0(\alpha_0, \zeta_0, B_0), \left\|u_0^{0} - u^{\ast}(\lambda_0^{0})\right\|^2 \right\rbrace
\end{align}
Next, we estimate $\left\|\omega_0^i -\omega_{\alpha_i}^{\ast}(\lambda_0^i)\right\|^2$. Because $\omega_{\alpha_i}^{\ast}$ is dependent on $\alpha_i$, the analysis is a little different from $u$. 
For $ i \geq 1$,  
\begin{align}
   \E[r_{K_i}^i] = \E[\left\|\omega_{K_i}^i - \omega_{\alpha_i}^{\ast}(\lambda_{K_i}^i) \right\|^2] = \E[\left\|\omega_{0}^{i+1} - \omega_{\alpha_i}^{\ast}(\lambda_{0}^{i+1}) \right\|^2]\leq \Delta_i(\alpha_i, \zeta_i, B_i)   
\end{align}
\begin{align}\label{eq: r_i+10}
\E[r_{0}^{i+1}] &  = \E[\left\|\omega_{0}^{i+1} - \omega_{\alpha_{i+1}}^{\ast}(\lambda_{0}^{i+1}) \right\|^2] = \E\left[\left\|\omega_{0}^{i+1} - \omega_{\alpha_i}^{\ast}(\lambda_{0}^{i+1}) + \omega_{\alpha_i}^{\ast}(\lambda_{0}^{i+1}) - \omega_{\alpha_{i+1}}^{\ast}(\lambda_{0}^{i+1})  \right\|^2 \right] \notag \\
& \leq 2 \E\left[\left\|\omega_{0}^{i+1} - \omega_{\alpha_i}^{\ast}(\lambda_{0}^{i+1})   \right\|^2 \right] + 2\E\left[\left\| \omega_{\alpha_i}^{\ast}(\lambda_{0}^{i+1}) - \omega_{\alpha_{i+1}}^{\ast}(\lambda_{0}^{i+1}) \right\|^2 \right] \notag \\
& \leq 2\Delta_i(\alpha_i, \zeta_i, B_i)   + 2\E\left[\left\| \omega_{\alpha_i}^{\ast}(\lambda_{0}^{i+1}) - \omega_{\alpha_{i+1}}^{\ast}(\lambda_{0}^{i+1}) \right\|^2 \right]
\end{align}
We then estimate $\E\left[\left\| \omega_{\alpha_i}^{\ast}(\lambda_{0}^{i+1}) - \omega_{\alpha_{i+1}}^{\ast}(\lambda_{0}^{i+1}) \right\|^2 \right]$. 
Using the optimality of $\omega_{\alpha_{i}}^{\ast}$, we have
\begin{align}\label{eq: omega_ast_alphai}
  L_1(\omega_{\alpha_{i}}^{\ast}(\lambda_{0}^{i+1}), \lambda_{0}^{i+1})  + \alpha_i L_2(\omega_{\alpha_{i}}^{\ast}(\lambda_{0}^{i+1}), \lambda_{0}^{i+1}) \leq 
       L_1(\omega_{\alpha_{i+1}}^{\ast}(\lambda_{0}^{i+1}), \lambda_{0}^{i+1})  + \alpha_{i} L_2(\omega_{\alpha_{i+1}}^{\ast}(\lambda_{0}^{i+1}), \lambda_{0}^{i+1})
\end{align}
Due to the strongly convexity of $L_2$, we have
\begin{align}\label{eq: L2_func_omega_alpha}
L_2(\omega_{\alpha_{i}}^{\ast}(\lambda_{0}^{i+1}), \lambda_{0}^{i+1})  - L_2(\omega_{\alpha_{i+1}}^{\ast}(\lambda_{0}^{i+1}), \lambda_{0}^{i+1})   \geq \frac{\mu_2}{2} \left\|\omega_{\alpha_{i}}^{\ast}(\lambda_{0}^{i+1})- \omega_{\alpha_{i+1}}^{\ast}(\lambda_{0}^{i+1}) \right\|^2,
\end{align}
and then using the Lipschtiz continuity of $L_1$ gives  
\begin{align}\label{eq: L1_func_omega_alpha}
 L_1(\omega_{\alpha_{i+1}}^{\ast}(\lambda_{0}^{i+1}), \lambda_{0}^{i+1}) -  L_1(\omega_{\alpha_{i}}^{\ast}(\lambda_{0}^{i+1}), \lambda_{0}^{i+1}) \leq \ell_{10} \left\|\omega_{\alpha_{i+1}}^{\ast}(\lambda_{0}^{i+1})- \omega_{\alpha_{i}}^{\ast}(\lambda_{0}^{i+1})\right\|.
\end{align}
Then by (\ref{eq: omega_ast_alphai}) and combining (\ref{eq: L1_func_omega_alpha}) and (\ref{eq: L2_func_omega_alpha}), we have
\begin{align}
  \alpha_i\frac{\mu_2}{2} \left\|\omega_{\alpha_{i}}^{\ast}(\lambda_{0}^{i+1})- \omega_{\alpha_{i+1}}^{\ast}(\lambda_{0}^{i+1}) \right\|^2 
&
\leq \alpha_i\left(L_2(\omega_{\alpha_{i}}^{\ast}(\lambda_{0}^{i+1}), \lambda_{0}^{i+1})  - L_2(\omega_{\alpha_{i+1}}^{\ast}(\lambda_{0}^{i+1}), \lambda_{0}^{i+1})\right) \notag \\
& \leq    L_1(\omega_{\alpha_{i+1}}^{\ast}(\lambda_{0}^{i+1}), \lambda_{0}^{i+1}) - L_1(\omega_{\alpha_{i}}^{\ast}(\lambda_{0}^{i+1}), \lambda_{0}^{i+1})  \notag \\
 & \leq \ell_{10} \left\|\omega_{\alpha_{i+1}}^{\ast}(\lambda_{0}^{i+1})- \omega_{\alpha_{i}}^{\ast}(\lambda_{0}^{i+1})\right\|.
\end{align}
Then
\begin{align}\label{eq: omega_ast_alpha}
  \left\|\omega_{\alpha_{i+1}}^{\ast}(\lambda_{0}^{i+1})- \omega_{\alpha_{i}}^{\ast}(\lambda_{0}^{i+1})\right\| \leq \frac{2\ell_{10}}{\mu_2\alpha_i}  
\end{align}
Since $\Delta_i$ is non-expansion and $\alpha_i$ is increasing, applying (\ref{eq: omega_ast_alpha}) into (\ref{eq: r_i+10}) gives 
\begin{align}
   \E[r_{0}^{i+1}] &    \leq 2\Delta_i(\alpha_i, \zeta_i, B_i)   + \frac{8\ell_{10}^2}{\mu_2^2\alpha_i^2} \leq 2\Delta_0(\alpha_0, \zeta_0, B_0)   + \frac{8\ell_{10}^2}{\mu_2^2\alpha_0^2}.
\end{align}
Thus 
\begin{align}
\max \left\lbrace \left\|u_0^i - u^{\ast}(\lambda_0^i)\right\|^2, \left\|\omega_0^i - \omega_{\alpha_i}^{\ast}(\lambda_0^i)\right\|^2 \right\rbrace \leq M_0:= \max\left\lbrace \delta_0^0, r_0^0, 2\Delta_0(\alpha_0, \zeta_0, B_0)   + \frac{8\ell_{10}^2}{\mu_2^2\alpha_0^2}\right\rbrace    
\end{align}
Overall, 
by properly choosing $T_k^i$ and $B_k^i$ such that
\begin{subequations}
\begin{align}
T_k^i  & \geq \frac{3\kappa-1}{4}\ln\left(\frac{12(\ell_{11}^2 + \alpha_i^2\ell_{21}^2)\max \left\lbrace \E[(\delta_k^i)^2], \E[(r_k^i)^2]\right\rbrace}{\zeta_i^2}\right) \label{eq: Tk_max_multistage_final}\\
B_i & =  \frac{12\kappa \left(\frac{1}{2\alpha_i} + 1\right)(\sigma_1^2 + \alpha_i^2\sigma_2^2)}{\zeta_i^2} \label{eq: B_max_multistage_final}
\end{align}
\end{subequations}
where
$$\max \left\lbrace \E[(\delta_k^i)^2], \E[(r_k^i)^2]\right\rbrace \leq  \left\{ 
\begin{aligned}
  &\frac{\zeta_i^2}{3\alpha_i^2\ell_{21}^2} + 2\ell_{\omega^{\ast}}^2 (\eta_i^{\lambda})^2 \left(2\zeta_i^2 + 4\ell_{11}^2 + 4\kappa^2 + \frac{\sigma_1^2 + \alpha_i^2\sigma_2^2}{B_i} \right),  &  k \geq 1  \\
& \max \left\lbrace \delta_0^0, r_0^0, 2\Delta_0(\alpha_0, \zeta_0, B_0)   + \frac{8\ell_{10}^2}{\mu_2^2\alpha_0^2}\right\rbrace   & k=0
\end{aligned}
\right.
$$
we can achieve that $\left\| \nabla \Gamma^{\alpha_i}(\lambda_k^i) - \nabla_{\lambda} L^{\alpha}(u_{k+1}^i, \omega_{k+1}^i, \lambda_k^i)\right\|^2 \leq \zeta_i^2$ at each stage $i$. Let 
\begin{align}
\eta_i^{\lambda}= \frac{1}{\ell_{\Gamma}};\quad  \eta_i^{u}= \frac{2}{\alpha_i(\mu_2 + \ell_{21})}; \quad \eta_i^{\omega}= \frac{4}{\alpha_i(\mu_2 + 3\ell_{21})}
\end{align}
Telescoping (\ref{eq: grad_Gamma_core_bound_multistage}) from $i=0,\cdots, N$, then
\begin{align}
\frac{1}{\sum_{i=0}^N K_i} \sum_{i=0}^N \sum_{k=0}^{K_i-1} \E[\left\|\nabla \Gamma^{\alpha_i}(\lambda_k^i) \right\|^2]  & \leq \frac{1}{\sum_{i=0}^N K_i}\frac{2\E[\Gamma^{\alpha_0}(\lambda_0^0)] -2\Gamma_{\min}^{\alpha_N} }{\eta^{\lambda}} +  \frac{1}{\sum_{i=1}^N K_i}\sum_{i=1}^N K_i\frac{\ell_{\Gamma}  (\eta^{\lambda})}{B_i}\left(\sigma_1^2 + 2\alpha_i^2\sigma_2^2\right) \notag \\
    & + \frac{1}{\sum_{i=1}^N K_i}\sum_{i=1}^N K_i\zeta_i^2.  
\end{align}
To achieve that
\begin{align*}
    \frac{1}{\sum_{i=0}^N K_i }\sum_{i=0}^{N}\sum_{k=0}^{K_i-1}\E[\left\| \nabla \mathcal{L}(\lambda_k^i) \right\|^2] \leq \frac{2}{\sum_{i=0}^N K_i} \sum_{i=0}^N \sum_{k=0}^{K_i-1} \E[\left\|\nabla \Gamma^{\alpha_i}(\lambda_k^i) \right\|^2 ] + \frac{2}{\sum_{i=0}^N K_i }\sum_{i=0}^N K_i\frac{C^2}{\alpha_i^2} \leq \epsilon^2
\end{align*}
we let $\alpha_i = \alpha_0 \tau^i, \zeta_i=\tau^{-i}$, $K_i=\tau^{2i}$, $N=\log_{\tau}(1/\epsilon)$,  then  
the total number of iterations $\Sigma = \sum_{i=0}^{N} \sum_{k=0}^{K_i} T_k^i = \epsilon^{-2}$ and the total complexity is $\mathcal{O}\left(\sum_{i=0}^N K_i B_i\right) = \mathcal{O}(\epsilon^{-6}\log(1/\epsilon)) $
\end{proof}

\begin{proof}(of (Stochastic) Multi-Stage Algorithm~\ref{alg:minmax:general}) In this setting, we prove the stochastic version of multistage GDA. 
We recall the iterating formula of $\lambda$ in Algorithm \ref{alg:minmax:general} that  $\lambda_{k+1}^i - \lambda_k^i =  - \eta_{i}^{\lambda} \nabla_{\lambda} L_{D_k^i}^{\alpha_i}(u_k^i, \omega_k^i, \lambda_k^i)$. At each iteration, we have
 \begin{align}
      \E[\nabla L_{D_k^i}^{\alpha_i}(u_k^i, \omega_k^i, \lambda_k^i) \mid \mathcal{F}_k^i] = \nabla L^{\alpha_i}(u_k^i, \omega_k^i, \lambda_k^i)
 \end{align}
By the smoothness of $\Gamma$ from Lemma \ref{lem:onestage:v2}, we have
\begin{align}\label{inequ:gamma:1:multi}
\Gamma^{\alpha_i}(\lambda_{k+1}^i) & \leq \Gamma^{\alpha_i}(\lambda_k^i) + \left\langle \nabla \Gamma^{\alpha_i}(\lambda_k^i), \lambda_{k+1}^i - \lambda_{k}^i\right\rangle + \frac{\ell_{\Gamma}}{2}\left\| \lambda_{k+1}^i - \lambda_k^i \right\|^2 \notag \\
& = \Gamma^{\alpha_i}(\lambda_k^i) - \eta_i^{\lambda} \left\langle \nabla \Gamma^{\alpha_i}(\lambda_k^i), \nabla_{\lambda} L_{D_k^i}^{\alpha_i}(u_k^i, \omega_k^i, \lambda_k^i)\right\rangle + \frac{\ell_{\Gamma}  (\eta_{i}^{\lambda})^2}{2}\left\| \nabla_{\lambda} L_{D_k^i}^{\alpha_i}(u_k^i, \omega_k^i, \lambda_k^i)\right\|^2 
\end{align}
Taking conditional expectation w.r.t. $\mathcal{F}_k^i$ on the above inequality, we have
\begin{align}
\E[\Gamma^{\alpha_i}(\lambda_{k+1}^i) \mid \mathcal{F}_k^i] &\leq \Gamma^{\alpha_i}(\lambda_k^i) - \eta_{\lambda} \left\langle \nabla \Gamma^{\alpha_i}(\lambda_k^i), \E[\nabla_{\lambda} L_{D_k^i}^{\alpha_i}(u_k^i, \omega_k^i, \lambda_k^i) \mid \mathcal{F}_k^i]\right\rangle \notag \\
& + \frac{\ell_{\Gamma}  (\eta_i^{\lambda})^2}{2}\E\left[\left\| \nabla_{\lambda} L_{D_k^i}^{\alpha}(u_k^i, \omega_k^i, \lambda_k^i)\right\|^2 \mid \mathcal{F}_k^i\right] \notag \\
& \leq \Gamma^{\alpha_i}(\lambda_k^i) - \eta_i^{\lambda} \left\langle \nabla \Gamma^{\alpha_i}(\lambda_k^i), \nabla_{\lambda} L_{D_k^i}^{\alpha_i}(u_k^i, \omega_k^i, \lambda_k^i)\right\rangle + \frac{\ell_{\Gamma}  (\eta_i^{\lambda})^2}{2}\E\left[\left\| \nabla_{\lambda} L_{D_k^i}^{\alpha_i}(u_k^i, \omega_k^i, \lambda_k^i)\right\|^2 \mid \mathcal{F}_k^i\right] 
\end{align}
where the inequality follows the fact that $L_{D_k^i}^{\alpha_i}$ is an unbiased estimation of $L^{\alpha_i}$ and 
\begin{align}
\E\left[\left\| \nabla_{\lambda} L_{D_k^i}^{\alpha_i}(u_k^i, \omega_k^i, \lambda_k^i)\right\|^2 \mid \mathcal{F}_k^i \right] & =     \E\left[\left\| \nabla_{\lambda} L_{D_k^i}^{\alpha_i}(u_k^i, \omega_k^i, \lambda_k^i) - \nabla_{\lambda} L^{\alpha_i}(u_k^i, \omega_k^i, \lambda_k^i) + \nabla_{\lambda} L^{\alpha_i}(u_k^i, \omega_k^i, \lambda_k^i) \right\|^2\mid \mathcal{F}_k^i \right] \notag \\
& \leq  \E\left[\left\| \nabla_{\lambda} L_{D_k^i}^{\alpha_i}(u_k^i, \omega_k^i, \lambda_k^i) - \nabla_{\lambda} L^{\alpha_i}(u_k^i, \omega_k^i, \lambda_k^i) \right\|^2 \mid \mathcal{F}_k^i \right] + \left\|\nabla_{\lambda} L^{\alpha_i}(u_k^i, \omega_k^i, \lambda_k^i) \right\|^2 \notag \\
& \leq \sigma_1^2 + 2\alpha_i^2\sigma_2^2 + \left\|\nabla_{\lambda} L^{\alpha_i}(u_k^i, \omega_k^i, \lambda_k^i) \right\|^2
\end{align}
If we suppose that 
\begin{itemize}
    \item $\E[\left\| \nabla L_1(\omega, \lambda; S_{val}) - \nabla L_1(\omega, \lambda) \right\|^2] \leq \sigma_1^2$
       \item $\E[\left\| \nabla L_2(\omega, \lambda; S_{train}) - \nabla L_2(\omega, \lambda) \right\|^2] \leq \sigma_2^2$
\end{itemize}
Then 
\begin{align}
  & \E[\left\| \nabla_{\lambda} L_{D_k^i}^{\alpha_i}(u_k^i, \omega_k^i, \lambda_k^i) - \nabla_{\lambda} L^{\alpha_i}(u_k^i, \omega_k^i, \lambda_k^i) \right\|^2 \mid \mathcal{F}_k^i ] =   \E[\left\| \nabla_{\lambda} L_1(\omega_k^i, \lambda_k^i; S_{val,k}^i) - \nabla_{\lambda} L_1(\omega_k^i, \lambda_k^i) \right\|^2 \mid \mathcal{F}_k^i ]   \notag \\
  & +  \alpha_i^2 \E[ \left\| \nabla_{\lambda} L_2(\omega_k^i, \lambda_k^i; S_{train,k}^i) - \nabla_{\lambda} L_2(\omega_k^i, \lambda_k^i) \right\|^2\mid \mathcal{F}_k^i ] + \alpha_i^2 \E[ \left\| \nabla_{\lambda} L_2(u_k^i, \lambda_k^i; S_{train,k}^i) - \nabla_{\lambda} L_2(u_k^i, \lambda_k^i) \right\|^2\mid \mathcal{F}_k^i ] \notag \\
  & \leq \sigma_1^2 + 2\alpha_i^2\sigma_2^2
\end{align}
By the above results, we have
\begin{align}
\E[\Gamma^{\alpha_i}(\lambda_{k+1}^i) \mid \mathcal{F}_k^i] & \leq \Gamma^{\alpha_i}(\lambda_k^i) -  \eta_i^{\lambda} \left\langle \nabla \Gamma^{\alpha_i}(\lambda_k^i), \nabla_{\lambda} L^{\alpha_i}(u_k^i, \omega_k^i, \lambda_k^i)\right\rangle  + \frac{\ell_{\Gamma}  (\eta_i^{\lambda})^2}{2} \left(\sigma_1^2 + 2\alpha_i^2\sigma_2^2 + \left\|\nabla_{\lambda} L^{\alpha_i}(u_k^i, \omega_k^i, \lambda_k^i) \right\|^2 \right) \notag \\
& \leq \Gamma^{\alpha_i}(\lambda_k^i) -  \eta_i^{\lambda} \left\langle \nabla \Gamma^{\alpha_i}(\lambda_k^i), \nabla_{\lambda} L^{\alpha_i}(u_k^i, \omega_k^i, \lambda_k^i)\right\rangle  + \frac{\ell_{\Gamma}  (\eta_i^{\lambda})^2}{2}\left\|\nabla_{\lambda} L^{\alpha_i}(u_k^i, \omega_k^i, \lambda_k^i) \right\|^2 \notag \\
& + \frac{\ell_{\Gamma}  (\eta_i^{\lambda})^2}{2}\left(\sigma_1^2 + 2\alpha^2\sigma_2^2\right) \notag \\
& = \Gamma^{\alpha_i}(\lambda_k^i) - \frac{\eta_i^{\lambda}}{2} \left\|\nabla \Gamma^{\alpha_i}(\lambda_k^i) \right\|^2 - \left(\frac{\eta_i^{\lambda}}{2} - \frac{\ell_{\Gamma}  (\eta_i^{\lambda})^2}{2}\right)\left\| \nabla_{\lambda} L^{\alpha_i}(u_k^i, \omega_k^i, \lambda_k^i)\right\|^2 \notag \\
& + \frac{\eta_i^{\lambda}}{2}\left\| \nabla \Gamma^{\alpha_i}(\lambda_k^i) - \nabla_{\lambda} L^{\alpha_i}(u_k^i, \omega_k^i, \lambda_k^i)\right\|^2 +  \frac{\ell_{\Gamma}  (\eta_i^{\lambda})^2}{2}\left(\sigma_1^2 + 2\alpha_i^2\sigma_2^2\right).
\end{align}
Next we turn to estimate $\left\| \nabla \Gamma^{\alpha_i}(\lambda_k^i) - \nabla_{\lambda} L^{\alpha_i}(u_k^i, \omega_k^i, \lambda_k^i)\right\|^2$.
\begin{align}
  & \left\| \nabla \Gamma^{\alpha_i}(\lambda_k^i) - \nabla_{\lambda} L^{\alpha_i}(u_k^i, \omega_k^i, \lambda_k^i)\right\|^2 =   \left\| \nabla_{\lambda} L^{\alpha_i}(u^{\ast}(\lambda_k^i), \omega_{\alpha_i}^{\ast}(\lambda_k^i), \lambda_k^i) - \nabla_{\lambda} L^{\alpha_i}(u_k^i, \omega_k^i, \lambda_k^i)\right\|^2 \notag \\
  & = 3\left\|\nabla_{\lambda} L_1(\omega_{\alpha_i}^{\ast}(\lambda_k^i), \lambda_k^i) - \nabla_{\lambda} L_1(\omega_k^i, \lambda_k^i)\right\|^2 + 3\alpha_i^2\left\|\nabla_{\lambda} L_2(\omega_{\alpha_i}^{\ast}(\lambda_k^i), \lambda_k^i)  - \nabla_{\lambda} L_2(\omega_k^i, \lambda_k^i) \right\|^2 \notag \\
  & \quad + 3\alpha_i^2 \left\|\nabla_{\lambda} L_2(u^{\ast}(\lambda), \lambda) - \nabla_{\lambda} L_2(u_k, \lambda_k) \right\|^2 \notag \\
  & \leq 3(\ell_{11}^2 + \alpha_i^2 \ell_{21}^2) \left\| \omega_{\alpha_i}^{\ast}(\lambda_k^i) - \omega_k^i\right\|^2 + 3\alpha_i^2 \ell_{21}^2 \left\| u^{\ast}(\lambda_k^i) - u_k^i \right\|^2.
\end{align}
Let $\delta_k^i = \left\|u_k^i - u^{\ast}(\lambda_k^i) \right\|^2$ and $r_k^i = \left\| \omega_k^i - \omega_{\alpha_i}^{\ast}(\lambda_k^i) \right\|^2$, then the inequality (\ref{inequ:gamma:1:multi}) can be simplified as
\begin{align}\label{inequ:gamma:core:multi:sto}
 \E[ \Gamma^{\alpha_i}(\lambda_{k+1}^i) \mid \mathcal{F}_k^i] & \leq \Gamma^{\alpha_i}(\lambda_k^i)  - \frac{1}{2}\eta_i^{\lambda} \left\| \nabla \Gamma^{\alpha_i}(\lambda_k^i) \right\|^2 + \frac{3\eta_i^{\lambda}}{2}\left(\left(\ell_{11}^2 + \alpha^2 \ell_{21}^2 \right) r_k^i + \alpha_i^2 \ell_{21}^2 \delta_k^i \right) \notag \\
  & - \left(\frac{\eta_i^{\lambda}}{2} - \frac{\ell_{\Gamma}(\eta_i^{\lambda})^2}{2} \right) \left\|\nabla_{\lambda} L^{\alpha_i}(u_k^i, \omega_k^i, \lambda_k^i)\right\|^2 + \frac{\ell_{\Gamma}  (\eta_i^{\lambda})^2}{2}\left(\sigma_1^2 + 2\alpha_i^2\sigma_2^2\right).
\end{align}
Then, we focus on estimating $\delta_k^i$ and $r_k^i$. First, we turn to evaluate $\delta_k^i$. By the strongly concavity of $L^{\alpha_i}$ with respect to $u$ and taking conditional expectation on (\ref{inequ:gamma:core:multi:sto}) then
\begin{align}\label{inequ:uk:multi}
  \E[\left\|  u^{\ast}( \lambda_{k}^i) - u_{k+1}^i  \right\|^2 \mid \mathcal{F}_k^i] & = \E\left[\left\|u^{\ast}(\lambda_{k}^i) - u_k^i - \eta_i^{u} \nabla_{u} L_{D_k^i}^{\alpha_i}(u_k^i, \omega_k^i, \lambda_k^i) \right\|^2 \mid \mathcal{F}_k^i \right] \notag \\
  & \leq \left\|u^{\ast}( \lambda_{k}^i) - u_k^i \right\|^2 - 2\eta_i^{u}\left\langle u^{\ast}( \lambda_{k}^i) - u_k^i, \nabla_{u} L^{\alpha_i}(u_k^i, \omega_k^i, \lambda_k^i) \right\rangle \notag \\
  & + (\eta_i^{u})^2 \E\left[\left\| \nabla_{u} L_{D_k^i}^{\alpha}(u_k^i, \omega_k^i, \lambda_k^i) \right\|^2 \mid \mathcal{F}_k^i\right]\notag \\
  & \mathop{\leq}^{(a)} \left\|u^{\ast}( \lambda_{k}^i) - u_k^i \right\|^2 - 2\eta_i^u\left( L^{\alpha_i}(u^{\ast}(\lambda_k^i), \omega_k^i, \lambda_k^i) - L^{\alpha_i}(u_k^i, \omega_k^i, \lambda_k^i) + \frac{\mu_2\alpha_i}{2}\left\|u^{\ast}(\lambda_k^i) - u_k^i \right\|^2 \right) \notag \\ 
  & \quad + (\eta_i^u)^2 \E\left[\left\|\nabla_{u} L_{D_k^i}^{\alpha_i}(u_k^i, \omega_k^i, \lambda_k^i) - \nabla_{u} L^{\alpha_i}(u_k^i, \omega_k^i, \lambda_k^i) \right\|^2 \mid \mathcal{F}_k^i\right] + (\eta_i^u)^2\left\| \nabla_{u} L^{\alpha_i}(u_k^i, \omega_k^i, \lambda_k^i) \right\|^2 \notag \\
& \mathop{\leq}^{(b)}  \left\|u^{\ast}( \lambda_{k}^i) - u_k^i \right\|^2 - 2\eta_i^u\left(L^{\alpha_i}(u^{\ast}(\lambda_k^i), \omega_k^i, \lambda_k^i) - L^{\alpha_i}(u_k^i, \omega_k^i, \lambda_k^i) + \frac{\mu_2\alpha_i}{2}\left\|u^{\ast}(\lambda_k^i) - u_k^i \right\|^2 \right) \notag \\ 
  & \quad + 2\alpha_i\ell (\eta_i^u)^2\left(-L^{\alpha_i}(u_k^i, \omega_k^i, \lambda_k^i) +  L^{\alpha_i}(u^{\ast}(\lambda_k^i), \omega_k^i, \lambda_k^i)\right) + (\eta_i^{u})^2\alpha_i^2\sigma_2^2 \notag \\
  & \leq \left(1-\mu_{2}\alpha_i \eta_i^u \right) \left\|u^{\ast}(\lambda_k^i) - u_k^i \right\|^2 + (\eta_i^{u})^2\alpha_i^2\sigma_2^2  .
\end{align}
where $(a)$ follows from the strongly concavity of $L^{\alpha_i}$ w.r.t. $u$ which implies that 
\begin{align}
  -L^{\alpha_i}(u^{\ast}(\lambda_k^i), \omega_k^i, \lambda_k^i) \geq -L^{\alpha_i}(u_k^i, \omega_k^i, \lambda_k^i) - \left\langle \nabla_{u}L^{\alpha_i}(u_k^i, \omega_k^i, \lambda_k^i), u^{\ast}(\lambda_k^i) - u_k^i \right\rangle + \frac{\mu_2\alpha_i}{2}\left\| u_k^i -u^{\ast}(\lambda_k^i) \right\|^2  \notag  
\end{align}
and $(b)$ uses the smoothness of $-L^{\alpha_i}$ with respect to $u$ such that 
\begin{align}
& -L^{\alpha_i}(u^{\ast}(\lambda_k^i), \omega_k^i, \lambda_k^i) + L^{\alpha_i}(u_k^i, \omega_k^i, \lambda_k^i) \notag \\
& \leq -L^{\alpha_i}(\tilde{u}, \omega_k^i, \lambda_k^i) +L^{\alpha_i}(u_k^i, \omega_k^i, \lambda_k^i) \notag \\
 & \leq  -L^{\alpha_i}(u_k^i, \omega_k^i, \lambda_k^i) + \left\langle -\nabla_u L^{\alpha_i}(u_k^i, \omega_k^i, \lambda_k^i), \tilde{u} - u_k^i \right\rangle + \frac{\alpha_i \ell}{2}\left\|\tilde{u} - u_k^i \right\|^2 +L^{\alpha_i}(u_k^i, \omega_k^i, \lambda_k^i)  \notag \\
 & = - \frac{1}{2 \alpha_i \ell} \left\|\nabla_u L^{\alpha_i}(u_k^i, \omega_k^i, \lambda_k^i) \right\|^2
\end{align}
where $\tilde{u} = u_k^i + \frac{1}{\alpha_i \ell} \nabla_{u}L^{\alpha_i}(u_k^i, \omega_k^i, \lambda_k^i)$.
Recalling the definition of $\delta_k^i$, we have
\begin{align}\label{inequ:deltak:multi}
\E[\delta_{k+1}^i \mid \mathcal{F}_k ] =& \E[\left\|u^{\ast}(\lambda_{k+1}^i) - u_{k+1}^i  \right\|^2 \mid \mathcal{F}_k ] \notag \\ \mathop{\leq}^{(a)} & (1 + \gamma_1 ) \E[\left\|u^{\ast}(\lambda_{k+1}^i) -u^{\ast}(\lambda_{k}^i)  \right\|^2\mid \mathcal{F}_k ]  + (1+1/\gamma_1) \E[\left\|  u^{\ast}(\lambda_{k}^i) - u_{k+1}^i  \right\|^2\mid \mathcal{F}_k ]  \notag \\
\mathop{\leq}^{(b)} &  (1 + \gamma_1) \kappa^2\E[\left\| \lambda_{k+1}^i - \lambda_k^i \right\|^2\mid \mathcal{F}_k ]  + (1+1/\gamma_1) \E[\left\|  u^{\ast}(\lambda_{k}^i) - u_{k+1}^i  \right\|^2 \mid \mathcal{F}_k ] \notag \\
\mathop{\leq}^{(c)} &  (1 + \gamma_1) \kappa^2(\eta_i^{\lambda})^2 \E[\left\| \nabla_{\lambda} L_{D_k^i}^{\alpha_i}(u_k^i, \omega_k^i, \lambda_k^i)\right\|^2 \mid \mathcal{F}_k ] + (1+1/\gamma_1)\left(1-\mu_2\alpha_i\eta_i^{\lambda}\right) \delta_k^i \notag \\
& +  (1+1/\gamma_1)(\eta_i^{u})^2\alpha_i^2\sigma_2^2  \notag\\
&  (1 + \gamma_1) \kappa^2(\eta_i^{\lambda})^2 \left(\sigma_1^2 + \alpha_i^2\sigma_2^2 + \left\| \nabla_{\lambda} L_{D_k^i}^{\alpha_i}(u_k^i, \omega_k^i, \lambda_k^i)\right\|^2 \right) + (1+1/\gamma_1)\left(1-\mu_2\alpha_i\eta_i^{\lambda}\right) \delta_k^i \notag \\
& +  (1+1/\gamma_1)(\eta_i^{u})^2\alpha_i^2\sigma_2^2  
\end{align}
where $(a)$ follows from Cauchy-Schwartz inequality with $\gamma_1 >0$; (b) uses the Lipschitz continuity of $u^{\ast}$; $(c)$ follows from the inequality (\ref{inequ:uk:multi}) and the iterating formula of $\lambda_{k+1}^i$.

By the strongly convexity of $\Phi^{\alpha_i}$ with respect to $\omega$, we have
\begin{align}
   \E\left[ \left\|\omega^{\ast}(\lambda_k^i) - \omega_{k+1}^i \right\|^2 \mid \mathcal{F}_k^i \right]  \leq \left( 1- \frac{\mu_2\alpha_i\eta_i^{\omega}}{2}\right)r_k^i + \left(\eta_i^{\omega}\right)^2\left(\sigma_1^2 + \alpha_i^2 \sigma_2^2\right). 
\end{align}
Similarly, we estimate $r_k^i$:
\begin{align}\label{inequ:rk:multi}
 \E\left[r_{k+1}^i \mid \mathcal{F}_k^i \right] & \leq (1+\gamma_2)\E\left[\left\| \omega^{\ast}(\lambda_{k+1}^i) - \omega^{\ast}(\lambda_k^i)\right\|^2 \mid \mathcal{F}_k^i \right]  + (1+ \gamma_2^{-1}) \E\left[\left\| \omega^{\ast}(\lambda_k^i) - \omega_{k+1}^i \right\|^2\mid \mathcal{F}_k^i \right]  \notag \\
 & \leq (1+\gamma_2)\kappa^2 \E\left[\left\|\lambda_{k+1}^i - \lambda_k^i\right\|^2 \mid \mathcal{F}_k^i \right]  + (1+ \gamma_2^{-1})\left( 1- \frac{\mu_2\alpha_i\eta_i^{\omega}}{2}\right)r_k^i + (1+ \gamma_2^{-1})\left(\eta_i^{\omega}\right)^2\left(\sigma_1^2 + \alpha_i^2 \sigma_2^2\right) \notag \\
 & \leq (1 + \gamma_2) \kappa^2(\eta_i^{\lambda})^2\E \left[\left\| \nabla_{\lambda} L_{D_k^i}^{\alpha_i}(u_k^i, \omega_k^i, \lambda_k^i)\right\|^2 \mid \mathcal{F}_k^i \right]  + (1+ \gamma_2^{-1})\left( 1- \frac{\mu_2\alpha_i\eta_i^{\omega}}{2}\right)r_k^i \notag \\
 & + (1+ \gamma_2^{-1})\left(\eta_i^{\omega}\right)^2\left(\sigma_1^2 + \alpha_i^2 \sigma_2^2\right) \notag \\
  & \leq (1 + \gamma_2) \kappa^2(\eta_i^{\lambda})^2 \left( \sigma_1^2 + 2\alpha_i^2\sigma_2^2 + \left\| \nabla_{\lambda} L^{\alpha_i}(u_k^i, \omega_k^i, \lambda_k^i)\right\|^2 \right)  + (1+ \gamma_2^{-1})\left( 1- \frac{\mu_2\alpha_i\eta_i^{\omega}}{2}\right)r_k^i \notag \\
 & + (1+ \gamma_2^{-1})\left(\eta_i^{\omega}\right)^2\left(\sigma_1^2 + \alpha_i^2 \sigma_2^2\right)
\end{align}
where $\gamma_2 > 0$.

The difference between $\Gamma^{\alpha}(\lambda_0) $ and $ \Gamma_{\min}^{\alpha}$ can be estimated as:
\begin{align}
    \Gamma^{\alpha}(\lambda_{0}) - \Gamma^{\alpha_i}(\lambda_{K_i+1}^i) & = L^{\alpha}(u^{\ast}(\lambda_0), \omega_{\alpha}^{\ast}(\lambda), \lambda_0) - L^{\alpha}(u^{\ast}(\lambda_{K_i+1}^i), \omega_{\alpha}^{\ast}(\lambda_{K_i+1}^i), \lambda_{K_i+1}^i) \notag \\
    & = L_1(\omega_{\alpha}^{\ast}(\lambda_0), \lambda_0) - L_1(\omega_{\alpha}^{\ast}(\lambda_{K_i+1}^i), \lambda^{\ast}) + \alpha \left(L_2(\omega_{\alpha}^{\ast}(\lambda_0), \lambda_0) - L_2(u^{\ast}(\lambda_0), \lambda_0) \right) \notag \\
    & + \alpha \left(L_2(\omega_{\alpha}^{\ast}(\lambda^{\ast}), \lambda_{K_i+1}^i) - L_2(u^{\ast}(\lambda_{K_i+1}^i), \lambda_{K_i+1}^i) \right) \notag \\
    & \leq L_1(\omega_{\alpha}^{\ast}(\lambda_0), \lambda_0) - L_1(\omega_{\alpha}^{\ast}(\lambda_{K_i+1}^i), \lambda_{K_i+1}^i) + \alpha \frac{\ell_{21}}{2}\left\|\omega_{\alpha}^{\ast}(\lambda_0) -  u^{\ast}(\lambda_0) \right\|^2 \notag \\
    & + \alpha_i \frac{\ell_{21}}{2}\left\|\omega_{\alpha}^{\ast}(\lambda_{K_i+1}^i) -  u^{\ast}(\lambda_{K_i+1}^i) \right\|^2 \notag \\
    & \leq L_1(\omega_{\alpha}^{\ast}(\lambda_0), \lambda_0) - L_1(\omega_{\alpha}^{\ast}(\lambda_{K_i+1}^i), \lambda_{K_i+1}^i) + \frac{\ell_{21}\ell_{10}^2}{2\mu_2^2\alpha_i} + \frac{\ell_{21}\ell_{10}^2}{2\mu_2^2\alpha_i} \notag \\
    & \leq \ell_{10}\left(\left\|\omega_{\alpha}^{\ast}(\lambda_0) - \omega_{\alpha}^{\ast}(\lambda_{K_i+1}^i) \right\| + \left\|\lambda_0 - \lambda_{K_i+1}^i\right\|\right) + \frac{2\ell_{21}\ell_{10}^2}{2\mu_2^2\alpha} \notag \\
    & \leq \ell_{10}\left(1 + \frac{3\ell_{21}}{\mu_2} \right)\left\|\lambda_0 - \lambda_{K_i+1}^i\right\| + \frac{2\ell_{21}\ell_{10}^2}{2\mu_2^2\alpha_i}  \leq \mathcal{O}\left(\kappa \left\|\lambda_0 - \lambda_{K_i+1}^i\right\| + \frac{\kappa}{\alpha_i}\right) 
\end{align}

\end{proof}

\section{Supplementary Details of Numerical Experiments}

In this section, we provide the details of the experiments in Section~\ref{sec:numerical}  and some additional results. 

\subsection{Numerical Details of Logistic Regression in Subsection~\ref{sec:numerical:1}}\label{append:logistic}
 For the experiments of regularized logistic regression on a synthesis dataset, the details of each algorithm are addressed below.
 We set the inner optimization step for the three bilevel methods as 100, and the learning rate for both inner and outer is 1. We set the truncated step $K_0=10$ for the reverse method and use $K=10$ steps applying for the fixed-point method and conjugate gradient method to compute hyper-gradient. Note that for the minimax algorithm (Algorithm~\ref{alg:minmax:general}), we have done the grid searching in the learning rate of $(u, \omega)$ and learning rate of $\lambda$ independently. We observe that the two scales of learning rate do not improve the performance much compared to a single learning rate for all parameters $(u, \omega, \lambda)$. Thus, to reduce the number of the hyper-parameters in Algorithm~\ref{alg:minmax:general}, we use the same learning rate for $(u, \omega, \lambda)$. For the proposed minimax algorithm, we set the inner step $K =100$  and $\eta_0 =\eta_0^{\lambda}= 1$, $\alpha_0=1$, and $\tau=1.5$. We initialize all the algorithms by $u_0=\omega_0=[0,0,\cdots, 0]^{T}$ and $\lambda_0 = [1,1,\cdots, 1]^{T}$.

 In the experiment on 20newsgroup, the gradient descent method is employed for both inner and outer problems, and the hyper-gradient of the outer problem is evaluated by the three bilevel methods: (1) truncated reverse ($K_0=10$); (2) fixed-point method; (3) conjugate gradient (CG) method. The details of the experiments on the real dataset 20newsgroup are shown below:
For the three bilevel methods, we set the outer optimization step is 50, and the inner optimization step is 500; the learning rate for both inner and outer optimization methods is 100. As the experiment on the synthesis dataset, we apply the fixed-point method and conjugate gradient method with $K_0=10$ iterations, respectively.  For the proposed minimax algorithm (Algorithm \ref{alg:minmax:general}): we set inner optimization step $K=1000$; the initial learning rate $\eta_0=\eta_0^{\lambda}=100$; the initial value $\alpha_0=1$, and $\tau=1.5$. We initialize the minimax algorithm by $\omega_0^0=u_0^0 = [0, 0, \cdots, 0]$ and $\lambda = \lambda_0^0 = [0, 0, \cdots, 0]$ and use the same values for bilevel algorithms.

In addition, Table~\ref{tab:logistic:time} provides the detailed time cost of each method for experiments on 20newsgroup.

\begin{table}[t]
    \centering
    \begin{tabular}{c|c}
    \toprule
    Method & Time ($s$) \\ \hline
      Reverse   &  94.30 \\ 
       Fixed-point  & 93.58 \\ 
       CG &  93.3 \\ 
       MinimaxOPT &  57.3  \\ \bottomrule
    \end{tabular}
    \vspace{0.1in}
    \caption{Time of the methods on 20newsgroups dataset}
    \label{tab:logistic:time}
\end{table}

\subsection{Numerical Details of CIFAR10 in Subsection~\ref{sec:num:dnn}}\label{append:dnn}
We generalize the multi-stage gradient descent and ascent to the stochastic setting and accelerate the algorithm by momentum, shown in Algorithm \ref{alg:minmax:general}. Each experiment is run 3 times and the results are averaged to eliminate the effect of randomness.

\begin{algorithm}[ht]
\caption{Multi-Stage Stochastic Gradient Descent and Ascent with Momentum}
\label{alg:minmax:3}
\begin{algorithmic}[1]
\STATE {\bfseries Input:}  $u_0^0$, $\lambda_0^0$, $\omega_0^0$ and $\alpha_0$; $\tau > 1$ and $\eta_0 > 0$, batch size $b$; $G_u^1= G_{\omega}^1=G_{\lambda}^1=0$; 
\FOR{$i = 0: N$}
\STATE{$\alpha_{i} = \alpha_0 \tau^{i} $}
\STATE{$\eta_i = \eta_0 / \tau^{i} \times \text{lr\_schedule}$}
\FOR{$k = 0:K$}
\STATE{Randomly generating the mini-batch samples $S_{\text{train}}^k, S_{\text{val}}^k$ from $S_{\text{train}}$ and $S_{\text{val}}$}
\STATE{$G_u^{k+1} = \beta G_u^k + \alpha_i \nabla_{u} L_2(u_k^i, \lambda_k^i; S_{\text{train}}^k)  $}

\STATE{$u_{k+1}^i = u_k^i - \eta_i  G_u^{k+1}$
}
\vspace{0.5em}
\STATE{$G_{\omega}^{k+1} = \beta G_{\omega}^k +  \nabla_{\omega} L_1(\omega_k^i, \lambda_k^i; S_{\text{val}}^k) + \alpha_i \nabla_{\omega} L_2(\omega_k^i, \lambda_k^i; S_{\text{train}}^k)  $}
\STATE{$\omega_{k+1}^i = \omega_k^i - \eta_i G_{\omega}^{k+1}$ }
\vspace{0.5em}
\STATE{$G_{\lambda}^{k+1} = \beta G_{\lambda}^k + \nabla_{\lambda} L_1(\omega_{k}^i, \lambda_k^i; S_{\text{val}}^k) + \alpha_i \left(\nabla_{\lambda} L_2(\omega_{k}^i, \lambda_k^i; S_{\text{train}}^k) - \nabla_{\lambda} L_2(u_{k}^i, \lambda_k^i; S_{\text{train}}^k) \right)  $}
\STATE{$\hat{\lambda}_{k+1}^i = \lambda_k^i - \eta_i G_{\lambda}^{k+1}$}
\STATE{$\lambda_{k+1}^i = \text{Proj}_{\Lambda}(\hat{\lambda}_{k+1}^i)$}
\ENDFOR
\ENDFOR
\end{algorithmic}
\end{algorithm}

\deleted{The details of all test methods are presented below.} For the bilevel methods, batch size 256 is adopted for mini-batch stochastic gradient estimation in the inner optimization. The full validation data is utilized to update the outer parameters. We use stochastic gradient descent with momentum as the optimizer and cosine as the learning rate for both inner and outer optimization.
For \deleted{the} bilevel algorithms, we tune the initial learning rate of the inner optimizer from $\left\lbrace 10^{-4}, 10^{-3}, 10^{-2}, 10^{-1}\right\rbrace$ and the initial learning rate for the outer optimizer is from $\left\lbrace 10^{-4}, 5\times 10^{-4}, 10^{-3},5\times 10^{-3}, 10^{-2}, 5\times 10^{-2}, 10^{-1}, 5\times 10^{-1}\right\rbrace$. For the truncated reverse method, we use the truncated step $K_0=500$ with running 50 inner epochs and $K_0=175$ (full) for one inner epoch training.

For our minimax algorithm, \added{we implement the stochastic version of Algorithm \ref{alg:minmax:general} and use momentum to accelerate the convergence, as shown in Algorithm \ref{alg:minmax:3}. We use the same batch size of 256 for both training and validation datasets. The cosine schedule~\citep{loshchilovsgdr} is introduced into learning rate $\eta_k^i$: $\eta_k^i = \eta_0 /\tau^{i} \times 0.5 \times \left(\cos(\pi \times t/T) + 1\right)$ where $t = i \times K + k + 1$ and $T=KN$.} we set $\tau=1.5$ and the length of the inner loop $K$ is selected from $\left\lbrace 500, 1000, 1500, 2000\right\rbrace$ and the length of the outer loop $N$ is chosen from $\left\lbrace 5, 10, 15, 20\right\rbrace$. For simplicity, we use the same learning rate for all the parameters. The initial learning rate $\eta_0$ is selected from $\left\lbrace 10^{-4}, 5\times 10^{-4}, 10^{-3}, 5\times 10^{-3}, 10^{-2}, 5\times 10^{-2}, 10^{-1}\right\rbrace$ and initial $\alpha_0$ is chosen from $\left\lbrace 1, 5, 10, 50, 100\right\rbrace$. 

Regarding the environment, we run all experiments on NVIDIA GeForce RTX 2080 Ti GPUs, along with Python 3.7.6 and torch 1.13.1~\citep{paszke2019pytorch} for our software dependency.


\subsection{Experimental Details of Hyper-data Cleaning Task in Subsection~\ref{sec:num:hyper:clean}}\label{append:hyper:clean}
The details of the test methods are presented below. The inner optimization step $K$ is best-tuned from the set $\left\lbrace 100, 200, 300, 400, 500 \right\rbrace$ and the learning rate for both inner and outer optimization methods is selected from the set $\left\lbrace 10^{-4}, 5 \times 10^{-4}, 10^{-3}, 5 \times 10^{-3}, 10^{-2}, 5 \times 10^{-2}, 10^{-1}, 5 \times 10^{-1},  1\right\rbrace$.  For stocBiO, we set the inner optimization step $K=300$, learning rate for both inner and outer is 0.1 and 0.01.  For the minimax method, we set inner optimization iteration $K=300$, $\tau=1.5$, the initial learning rate $\eta_0$ is 0.1, initial value of $\alpha$ is $\alpha_0=0.05$. For the reverse and CG methods, we apply gradient descent on the entire dataset to optimize the inner and outer optimization with inner steps $K=50$; the learning rates for the inner and outer problems are 0.1 and 0.001, respectively.

In Subsection~\ref{sec:num:hyper:clean}, to make a fair comparison with stocBiO, we implement the stochastic version of Algorithm~\ref{alg:minmax:general} without momentum. We observe that the proposed minimax method can achieve a higher test accuracy and is more robust to the noise than stocBiO. By introducing momentum and cosine learning rate scheduler into Algorithm~\ref{alg:minmax:general} (see Algorithm~\ref{alg:minmax:3}), the performance of the minimax method can be improved further. The results are addressed in Table~\ref{tab:accu:noise} \added{, where the averaged best test accuracy from five different seeds is reported to eliminate the randomness.}

\begin{table}
 \caption{The averaged test accuracy}
     \label{tab:accu:noise} 
           \centering
           \footnotesize
     \begin{tabular}{c|c|c|c}
     \toprule
        &  \multicolumn{3}{c}{Test accuracy (\%, best)} \\ \midrule
       Noise   & stocBiO &  MinimaxOPT & MinimaxOPT + momentum + cosine \\ \midrule
  $p = 0.1$   & 90.09  & $ 90.91$ & $92.33$ \\ \midrule 
      $p=0.3$ & 85.79  & $90.45$ & $91.33$ \\\midrule  
      $p=0.5$ & 78.47  & ${90.38}$ & $91.03$ \\ \midrule 
     \end{tabular}
\end{table}

\end{document}